\def \be*{\begin{eqnarray*}}
\def \e*{\end{eqnarray*}}
\def \beg{\begin{eqnarray}}
\def \en{\end{eqnarray}}
\def \bit{\begin{itemize}}
\def \eit{\end{itemize}}
\def \ind{\mathbb 1}
\def \w {\widehat}
\newtheorem{prop}{{\sc Proposition}}
\newtheorem{rem}{{\sc Remark}}
\theoremstyle{definition}
\newtheorem{definition}{Definition}
\title{Boarding for ISS: Imbalanced Self-Supervised \\Discovery of a Scaled Autoencoder for \\ Mixed Tabular Datasets}
\author{
  Samuel Stocksieker \\
  Université Claude Bernard Lyon 1 \\
  UR SAF\\ 
  Lyon, 
  France \\
   \And
  Denys Pommeret \\
  Aix Marseille Université\\
  CNRS, Centrale Marseille, I2M\\ 
  Marseille, 
  France \\
   \And
  Arthur Charpentier \\
  Université du Québec à Montréal\\
  Département de Mathématique\\
  Montréal, 
  Canada
}
\begin{document}
\maketitle

\begin{abstract}
The field of imbalanced self-supervised learning, especially in the context of tabular data, has not been extensively studied. Existing research has predominantly focused on image datasets. This paper aims to fill this gap by examining the specific challenges posed by data imbalance in self-supervised learning in the domain of tabular data, with a primary focus on autoencoders. Autoencoders are widely employed for learning and constructing a new representation of a dataset, particularly for dimensionality reduction. They are also often used for generative model learning, as seen in variational autoencoders. When dealing with mixed tabular data, qualitative variables are often encoded using a one-hot encoder with a standard loss function (MSE or Cross Entropy). In this paper, we analyze the drawbacks of this approach, especially when categorical variables are imbalanced. We propose a novel metric to balance learning: a Multi-Supervised Balanced MSE. This approach reduces the reconstruction error by balancing the influence of variables. Finally, we empirically demonstrate that this new metric, compared to the standard MSE: i) outperforms when the dataset is imbalanced, especially when the learning process is insufficient, and ii) provides similar results in the opposite case.
\end{abstract}

\keywords{Autoencoder, Imbalanced, Mixed Tabular Data, Self-Supervised Learning}

\section{Introduction}

Self-supervised learning (SSL) is an approach in machine learning where a model is trained to understand and represent the underlying structure of data without relying on externally provided labels. Unlike supervised learning, which requires labeled examples for training, self-supervised learning leverages the inherent information within the data itself to create meaningful representations. 
Advancements in SSL frameworks have greatly improved the training of machine learning models when faced with a scarcity of labeled data, particularly in image and language domains. These approaches leverage the distinctive structures present in domain-specific datasets, such as spatial relationships in images or semantic relationships in language. However, their adaptability to general tabular data remains limited. A specific category of SSL is Autoassociative self-supervised learning where a model, often a neural network, is trained to reproduce its own input data. This task is often accomplished using autoencoders. This powerful technique learns to encode and decode data, involving the transformation of data into a latent space through the encoder (i.e., changing the representation space). Subsequently, through decoding, it aims to faithfully reconstruct the inputs from this new representation. To do this, it must learn to identify the most important features of the data and the relationships. These features can then be used for supervised learning tasks such as classification or regression. Autoencoders can be used in a variety of applications such as Computer Vision or Natural Language Processing and for multiple tasks such as data compression, dimensionality reduction, detecting anomalies, denoising data, or generating data.

On the other hand, Imbalanced Learning can be defined as learning from a dataset with an imbalanced distribution. Learning from imbalanced data concerns many problems with numerous applications in different fields (\citep{krawczyk2016learning}, \citep{fernandez2018learning}): supervised framework (binary classification, multi-class classification, multi-label/multi-instance classification and regression), unsupervised framework (especially clustering) and big data/data streams. 

We consider here a novel issue: the Imbalanced Self-Supervised (ISS) for tabular datasets and especially the autoencoder learning with mixed data. Indeed, it is important to emphasize that while neural networks have demonstrated effectiveness in handling images or text, the same level of success has not yet been achieved for tabular data, despite its prevalence in many applications \citep{shwartz2022tabular}.
Our main contributions can be summarized as follows: i) understanding and analyzing the drawbacks of using MSE as a loss function; ii) proposing a balanced multi-supervised MSE adapting for mixed tabular data, especially when data are imbalanced; iii) illustrating the differences between these two loss functions through a simple simulation and across multiple real datasets in various supervised, unsupervised and generative contexts. 
The paper is organized as follows: 
in Section \ref{Constat} we analyze the autoencoder and MSE loss function with tabular mixed data. 
In Section \ref{Proposition} we propose a balanced MSE dealing with mixed data and imbalance. 
Numerical results on simulations are presented in Section \ref{Illu} and several experiments in supervised and unsupervised learning are presented in Sections \ref{XP_supervised} and \ref{XP_unsupervised}. Finally, a comparative study in a generative context, using VAEs, is described in Section \ref{XP_generative}.

\textbf{This paper does not delve into the advantages of autoencoders (e.g., comparison of results obtained with an autoencoder versus those obtained from the sample) but rather compares the use of a standard loss function versus our proposed loss function.}

\section{Related Works}

\subsection{Imbalance Learning}
Initially, research in imbalanced learning focused mainly on supervised classification (often binary classification), i.e., learning to explain or predict a binary target variable with very few occurrences of the positive class (see for instance \citep{buda2018},  \citep{cao2019}, \citep{cui2019},  \citep{huang2016},  \citep{yang2020}, \citep{branco2016survey} or \citep{fernandez2018smote}). Some works are relative to the Imbalanced learning in the regression framework (\citep{torgo2007utility}, \citep{torgo2013smote}, \citep{branco2017smogn}, \citep{branco2019pre}, \citep{ribeiro2020imbalanced}, \citep{song2022distsmogn}, \citep{camacho2022geometric}).
More recently, a new issue extending the imbalanced regression to images has been proposed: the Deep Imbalanced Regression (\citep{yang2021}). Several works have then focused on data imbalance in this context (such as \citep{sen2023dealing}, \citep{ding2022deep}, or \citep{gong2022ranksim}). Finally, a new problem has also been proposed by \citep{stocksieker2023data} which focuses more on the features imbalance rather than the target variable.
ISS can be related to the work of \citep{ren2022balanced}, which proposes a weighted MSE based on a continuous multivariate target variable $Y$. Unlike this work, here we consider the context of autoencoders i.e. seeking to predict the inputs from the inputs. 

\subsection{Imbalance SSL}
Self-Supervised Learning is nowadays used for improving learning performance or representation learning (e.g in a generative and/or contrastive framework (\citep{liu2021SSL}, \citep{jaiswal2020survey}).
Despite its success in handling images or text data, SSL is less suited for tabular data and requires specific adaptations (e.g \citep{ucar2021subtab}, \citep{hajiramezanali2022stab}, \citep{darabi2021contrastive}).  As observed in \citep{yoon2020vime}, who introduced a novel "pretext" task for tabular data: SSL is often not effective for tabular data. Very few works investigated the behavior of the SSL in the face of imbalanced datasets. \citep{liu2021self} compare the SSL performance to the supervised framework, but they only focused on images rather than tabular datasets, and autoencoders were not considered. 
Another work addresses the challenge of representations using SSL (without autoencoders) on Imbalance data, but still with images \citep{li2021imbalance}. \citep{yang2020rethinking} proposes to deal with imbalanced supervised classification by using SSL but it also handles only images. Some works propose to use SSL to handle imbalanced image datasets (\citep{hou2022contrastive}, \citep{timofeev2021self}, \citep{elbatel2023federated} and \citep{chen2021self}).
To the best of our knowledge, no work deals with Imbalanced SSL for mixed tabular datasets.

\subsection{Autoencoder for Mixed Tabular Dataset}
Autoencoders are often used to extract relevant patterns/features, for features reduction e.g used for improving imbalanced binary classification (\citep{tomescu2021study}, \citep{arafa2023rn}), and to detect anomalies (e.g. \citep{yamanaka2019autoencoding}, \citep{chen2018autoencoder} or \citep{eduardo2020robust}).
However, mixed tabular data poses challenges for training neural networks and requires a specific approach. The first one is handling categorical data by converting them because neural networks only accept real number vectors as inputs. 
Several works propose specific architectures (e.g. \citep{delong2023use}) or embedding to enhance supervised learning tasks. \citep{hancock2020survey} offers a survey and comparison of existing categorical data encoding and deep learning architecture. They state "The most common determined technique we find in this review is One-hot encoding.". Other works present the same comparison (e.g \citep{potdar2017comparative}). \citep{borisov2022deep} provides an overview of the state-of-the-art main approaches: data transformations, specific architectures, and regularization models.
\citep{zhang2021auto} propose a combined loss function for multi-class classification, based on a weighted cross-entropy loss for the target variable and MSE for reconstruction error of inputs (with one-hot encoded categorical features). Some works handle the outlier detection using VAE \citep{eduardo2020robust}.
\citep{xu2019modeling} adapted a Variational AutoEncoder for mixed tabular data generation (TVAE) and proposed a conditional GAN for synthetic data generation for generating synthetic data: CTGAN. \citep{ma2020vaem} proposes also an extension of Variational AutoEncoder called VAEM to handle mixed tabular data. Other works propose another approach to generate tabular data using a deep model (e.g \citep{vardhan2020synthetic}, \citep{zhang2023mixed}).



\section{A First Imbalanced Self-Supervised Case: Autoencoders}
\label{Constat}

Let $X=(X_{ij})_{i=1,\cdot,n ; j=1,\cdot,p}$ be a dataset composed of $n$ observations and $p$ variables where $X_{ij}$ is the variable $j$ for the observation $i$. 
Consider an autoencoder consisting of an encoder, denoted as $\phi$, and a decoder, denoted as $\psi$. A metric commonly used for training autoencoders is the Mean Squared Error (MSE), defined as follows:
{\small
\begin{align*}
    MSE(X,\w X) :=  \frac{1}{n} \sum_{i=1}^n d(X_i,\psi(\phi(X_i)))=\frac{1}{n} \sum_{i=1}^n d(X_i,\w X_i).
\end{align*}
}%
where $\w X$ denotes the vector of prediction.  The distance often used is the Euclidean distance (referred to as the L2 loss function) with an encoding for categorical variables. 
To avoid confusion, we use   $\epsilon_{ik}=X_{ik}-\w X_{ik}$ to designate the error for the quantitative variable $k$, and $\epsilon_{iq}=X_{iq}-\w X_{iq}$ the error for the categorical variable $q$. Then the MSE can be rewritten as: 
{\small
\begin{flalign}
    MSE(X,\w X) & = \frac{1}{n} \frac{1}{p}  \biggl(\sum_{k \in K_n} \sum_{i=1}^n  \epsilon_{ik}^2 + \sum_{q \in Q} \sum_{i=1}^n  \epsilon_{iq}^2\biggl)  = \frac{1}{np} SSE  \label{MSEtoSSE} 
\end{flalign}
}%
where $K_n$ (resp. $Q$) denotes the subset of numerical (resp. categorical) variables.  
Hence, minimizing the MSE is equivalent to minimizing the Sum Squared Error (SSE). 

\subsection{Standard MSE: a First Intuition}

As demonstrated for imbalanced regression by \citep{ren2022balanced}, "a regressor trained with standard MSE will underestimate on rare labels". This result can be extended to the context of autoencoders with mixed variables (without distinction between features and target variables, in other words: all variables are features and target variables). Mechanically, the MSE tends to favor the learning of majority values as it allows for a more substantial reduction in the loss function. 
More precisely, for a mixed tabular dataset, 
we write $K_q$ the set of modalities of  a categorical variable $q$, and $f^n_{k_q}:=\frac{n_{k_q}}{n}$  the frequency associated to the modality $k_q$. 
The categorical variables are frequently transformed using a one-hot encoder i.e. $X_{ik_q} = \ind_{\{X_{iq}=k_q\}}$. 
We also write   $p_{K_c}$ the total number of modalities in $Q$,  
and $p_{K_n}$  the number of numerical variables. 

Let's start by analyzing the contribution of a modality to the global MSE. We have :
{\footnotesize
\begin{flalign*}
    MSE(X,\w X)  & =  \frac{1}{n(p_{K_n}+p_{K_c})}  
    \biggl( \sum_{k \in K_n} \sum_{i=1}^n  \epsilon_{ik}^2 +  \sum_{q \in Q} \sum_{k _q \in K_q} \sum_{i=1}^n  \epsilon_{ik_q}^2 \biggl) \\
     & : = \frac{1}{p_{K_n}+p_{K_c}} \biggl( \sum_{k \in K_n} MSE(X_k, \w X_k) + \sum_{q \in Q} \sum_{k _q \in K_q} MSE(X_{k_q}, \w X_{k_q}) \biggl) \\
\end{flalign*}
}%
with 
{\small
\begin{align*}
    MSE(X_{k_q}, \w X_{k_q}) = f^n_{k_q} MSE(1, \w X_{k_q}) + (1-f^n_{k_q}) MSE(0, \w X_{k_q}), 
\end{align*}
}%
where the first quantity is the MSE on 1,  i.e. when $\{X_{iq}=k_q\}$. Then the variation due to  
$MSE(1, \w X_{k_q})$ is proportional to $f^n_{k_q}$: 
the lower the frequency $f^n_{k_q}$, the lower its contribution to the global MSE will be. 
This phenomenon is similar to what is observed in Imbalanced Learning. For example, in the case of binary supervised classification, if the target variable is imbalanced, with very few instances of 1, then standard algorithms may face challenges. If the algorithm always predicts 0, precision will increase with imbalance: the rarer the 1 is, the stronger the precision will be. More generally, it is known that in the imbalanced binary classification framework, accuracy is a poor indicator (see e.g \citep{branco2016survey}) because it falsely indicates good performances due to the imbalance: an algorithm that always predicts 0 for a sample with many 0s will have high accuracy. We show that such a weakness is inherent to the MSE by the following relation:  
\begin{prop}[MSE - Accuracy optimizing Equivalence]
    For a binary variable $X_{k_q}$, we have the following relation:
\begin{align*}
    MSE(X_{k_q}, \w X_{k_q}) \equiv 1 - accuracy(X_{k_q}, \w X_{k_q})
\end{align*}
In other words, minimizing the MSE for binary classification is equivalent to maximizing the accuracy.
\end{prop}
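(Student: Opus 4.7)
The plan is to reduce the squared error on a binary indicator to an indicator of misclassification, so that the average of squared errors coincides exactly with the misclassification rate. Specifically, after one-hot encoding, the values $X_{ik_q}$ lie in $\{0,1\}$; once the predicted value $\w X_{ik_q}$ is thresholded (or is itself binary, as is implicit when we speak of accuracy), we also have $\w X_{ik_q}\in\{0,1\}$. This restriction to a binary domain is what makes the identity work, and it is the only modeling assumption to flag.

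First, I would record the pointwise identity that drives everything: for any $x,y\in\{0,1\}$,
\begin{align*}
(x-y)^2 = \ind_{\{x\neq y\}},
\end{align*}
which is immediate by checking the four cases (both zero, both one, or unequal). This converts the $L_2$ discrepancy into a $0/1$ discrepancy whenever the arguments are binary.

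Next, I would plug this identity into the expression of $MSE(X_{k_q},\w X_{k_q})$ recalled just before the proposition. This gives
\begin{align*}
MSE(X_{k_q},\w X_{k_q}) = \frac{1}{n}\sum_{i=1}^n \epsilon_{ik_q}^2 = \frac{1}{n}\sum_{i=1}^n \ind_{\{X_{ik_q}\neq \w X_{ik_q}\}}.
\end{align*}
The right-hand side is, by definition, the empirical misclassification rate, and using $\ind_{\{x\neq y\}}=1-\ind_{\{x=y\}}$ yields $MSE(X_{k_q},\w X_{k_q}) = 1 - accuracy(X_{k_q},\w X_{k_q})$. The equivalence of the two optimization problems follows from the affine (monotone decreasing) relationship between the two quantities: any minimizer of one is a maximizer of the other.

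The only real obstacle is conceptual rather than technical: one must be explicit about the binary nature of $\w X_{ik_q}$. If one allows a continuous prediction in $[0,1]$, the equality becomes an inequality ($MSE \ge$ misclassification rate up to constants), and the equivalence must be stated at the level of the thresholded predictor. I would therefore open the proof with a one-line convention that $\w X_{ik_q}$ is the (thresholded) $\{0,1\}$-valued decision, so that the equality is exact and the equivalence of the two optimization criteria is immediate.
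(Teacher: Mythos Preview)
Your proof is correct and follows essentially the same approach as the paper: both recognize that when $X_{ik_q},\w X_{ik_q}\in\{0,1\}$ the squared error $\epsilon_{ik_q}^2$ coincides with the misclassification indicator, so that $MSE$ equals the misclassification rate $1-\textit{accuracy}$. The only cosmetic difference is that the paper phrases this via the confusion-matrix counts $(FP+FN)/n = 1-(TP+TN)/n$ rather than via the pointwise identity $(x-y)^2=\ind_{\{x\neq y\}}$, and your explicit caveat on thresholding $\w X_{ik_q}$ is a welcome clarification that the paper leaves implicit.
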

\begin{proof}
{\small
    \begin{align*}
        MSE(X_{k_q},  \w X_{k_q})) = \sum_{i=1}^n  \frac{\epsilon_{ik_q}^2}{n}  =  \underset{\substack{i=1 \\ x_{ik_q}=1}}{\sum^n}   \frac{\epsilon_{ik_q}^2}{n} + \underset{\substack{i=1 \\ x_{ik_q}=0}}{\sum^n}  \frac{\epsilon_{ik_q}^2}{n}  1 & \equiv \frac{FP + FN}{n} = \frac{n - (TP + TN)}{n} \\
        & \equiv  1-\frac{TP + TN}{TP + TN + FP + FN}   = 1 - Accuracy(X_{k_q}, \w X_{k_q})
    \end{align*}
}%
\end{proof}
The first conclusion is that the standard autoencoder on mixed tabular data, that is,  using a one-hot encoder and the MSE loss function, could be not suitable for imbalanced categorical variables. This issue extends the classical issue in the framework of univariate classification imbalanced learning to the multi-supervised/self-supervised case. 
In addition, two other challenges can further complicate the complexity of this imbalance learning: i) the imbalance influence between categorical variables; ii) the imbalance influence between categorical variables versus quantitative variables. 
An alternative approach to analyze the standard MSE, using a min-max comparison, is provided in Appendix \ref{sMSE_proof}.

We also examined the cross-entropy loss function and a combination of MSE and cross-entropy. The supplementary study in Appendix \ref{other_BK} empirically shows that cross-entropy exhibits the same drawback: it favors majority categories. In our illustration, cross-entropy performs less effectively than MSE, which is why we focus on an analysis of MSE. Moreover, working with a single loss function avoids combining loss functions, making it simpler to balance the influence between quantitative and qualitative variables.




\subsection{A Balanced Multi-Supervised MSE}
\label{Proposition}

An intuitive first solution to address imbalanced binary variables, in supervised classification, is to perform oversampling which is equivalent to weighting the loss function. This leads to penalizing more strongly the errors made on the 1 to rebalance the learning process. Typically, for logistic regression, it is equivalent to using a weighted likelihood (\citep{king2001logistic}). In this way, we propose to define a weighted MSE for training autoencoder with mixed tabular data. This proposition generalizes the case of imbalanced binary variables to the context of multiple mixed variables. 
  
Here, we propose to introduce a new type of autoencoder for handling mixed data: the Scaled Autoencoder for Mixed tabular datasets (SAM). The reconstruction of rare values is sometimes very important because they can have a significant impact on the studied phenomenon. For example, a significant influence on the target variable in a supervised framework or a strong impact on clustering, etc. This is also the case, when, due to a sampling bias, some values are observed infrequently. 
In this context, it may be important to reconstruct all categories, regardless of their frequency. Therefore, it would be relevant to give them equal weight in the SSE and, consequently, an equal influence in the learning process.
Undoubtedly, with ample complexity and large iterations, autoencoders relying on a standard MSE and a one-hot encoder will converge, signifying their ability to perfectly reconstruct $X$. But here, we are concerned with the quality of reconstruction if it is not complete, for instance when a too large number of iterations is needed to converge. Furthermore, in a dimensionality reduction context (independently of the number of iterations), the latent space will be defined to reconstruct the variables: the information loss could thus be unequal, favoring majority categories and penalizing minority ones.

To avoid an imbalanced influence of categories in the learning process, we propose to weigh the errors of each modality depending on their frequency. We first introduce a rebalanced SSE as follows:
{\small
\begin{flalign*}
    SSE^* & :=  \sum_{k \in K_n} \sum_{i=1}^n  \epsilon_{ik}^2 + \sum_{q \in Q} \sum_{k_q \in K_q}  \frac{n}{2 n_{k_q}} \times \underset{\substack{i=1 \\ x_{ik_q}=1}}{\sum^n} \epsilon_{ik_q}^2  +   \frac{n}{2(n-n_{k_q})} \times \underset{\substack{i=1 \\ x_{ik_q}=0}}{\sum^n} \epsilon_{ik_q}^2   
\end{flalign*}
}%
Such an encoding is very similar to that of the Factorial Analysis of Mixed Data \citep{pages2004analyse}. 

\begin{rem}
The metric $SSE^*$ rebalances the influence of categories for each variable. 
Indeed, for any modality $k_q \in K_q$ we have 

{\small
    \begin{align*}
       0 \leq \frac{n}{2 n_{k_q}} \times \underset{\substack{i=1 \\ x_{ik_q}=1}}{\sum^n} \epsilon_{ik_q}^2 \leq n/2 \text{ and }
       0 \leq \frac{n}{2(n-n_{k_q})} \times \underset{\substack{i=1 \\ x_{ik_q}=0}}{\sum^n} \epsilon_{ik_q}^2   \leq n/2 
\Longrightarrow    0 \leq  \sum_{i=1}^n    \epsilon_{ik_q}^2   \leq n. 
    \end{align*}
}%
\end{rem}

\begin{prop}
    Minimizing the $MSE^*$ for a modality  is equivalent to maximizing its balanced accuracy defined as (\citep{Mosley2013ABA})  
{\small
\begin{flalign} 
\label{accuracy}
    BalAcc(X_{k_q}, \w X_{k_q}) := \frac{1}{2} \left(\frac{TP}{TP+FN} + \frac{TN}{TN+FP}\right).
\end{flalign}
}%

\end{prop}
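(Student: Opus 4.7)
The plan is to mirror the proof of the preceding proposition, but to carry the per-category weighting through the computation so that the two sums reproduce the two terms (sensitivity and specificity) in the balanced accuracy.

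First, I would isolate the contribution of the modality $k_q$ to $SSE^*$ and normalize to define
\[
MSE^*(X_{k_q},\w X_{k_q}) := \frac{1}{2 n_{k_q}}\underset{\substack{i=1 \\ x_{ik_q}=1}}{\sum^n}\epsilon_{ik_q}^2 \;+\; \frac{1}{2(n-n_{k_q})}\underset{\substack{i=1 \\ x_{ik_q}=0}}{\sum^n}\epsilon_{ik_q}^2 .
\]
As in the preceding proposition, I read $\w X_{k_q}$ as a $\{0,1\}$-valued prediction (the same implicit thresholding is used in the previous proof, where $\epsilon_{ik_q}^2$ is equated with an indicator of misclassification); then the first sum counts the false negatives $FN$, the second the false positives $FP$, while the denominators satisfy $n_{k_q} = TP + FN$ and $n - n_{k_q} = TN + FP$.

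Second, I would substitute these identifications to obtain
\[
MSE^*(X_{k_q},\w X_{k_q}) = \frac{FN}{2(TP+FN)} + \frac{FP}{2(TN+FP)} = 1 - \frac{1}{2}\left(\frac{TP}{TP+FN}+\frac{TN}{TN+FP}\right),
\]
using $\frac{FN}{TP+FN}=1-\frac{TP}{TP+FN}$ and the analogous identity for $FP$. Recognizing the bracket as $BalAcc(X_{k_q},\w X_{k_q})$ by definition~\eqref{accuracy} gives $MSE^* \equiv 1 - BalAcc$, and thus minimizing $MSE^*$ is equivalent to maximizing $BalAcc$.

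The only genuinely delicate step is the passage from squared errors on possibly continuous outputs to counts of $FN$ and $FP$, i.e., the implicit thresholding of $\w X_{k_q}$; but since this convention is already adopted in the proof of the previous proposition, no extra machinery is required. The rest is purely algebraic book-keeping and the whole argument should fit in a few lines, structurally identical to that of the MSE--accuracy equivalence.
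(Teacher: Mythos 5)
Your proposal is correct and follows essentially the same route as the paper: isolate the modality's weighted contribution, normalize the two error sums by $n_{k_q}$ and $n-n_{k_q}$, read them as per-class error rates, and sum to obtain $1-BalAcc(X_{k_q},\w X_{k_q})$ up to the factor $\tfrac{1}{2}$. The only discrepancy is one of labeling: the paper tags the errors on the $x_{ik_q}=1$ observations as $FP$ (with denominator $TN+FP$) and those on $x_{ik_q}=0$ as $FN$, whereas you use the standard $FN$/$FP$ convention; since balanced accuracy is symmetric under exchanging the two classes, this has no effect on the equivalence.
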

\begin{proof}
{\small
\begin{align*}
    \frac{1}{n_{k_q}}\underset{\substack{i=1 \\ x_{ik_q}=1}}{\sum^n} \epsilon_{ik_q}^2 = \frac{FP}{TN+FP} = 1 - \frac{TN}{TN+FP}\\
    \frac{1}{n-n_{k_q}}\underset{\substack{i=1 \\ x_{ik_q}=0}}{\sum^n} \epsilon_{ik_q}^2 = \frac{FN}{TP+FN} = 1 - \frac{TP}{TP+FN}
\end{align*}
}%
\end{proof}

Finally, to avoid an imbalanced influence of the  categorical variables in the learning process, we propose to normalize their error 
by their cardinals, yielding to a last modification of the $SSE^*$ that we shall call {\it balanced SSE}. 
\begin{definition}
The balanced SSE for categorical data is given by  
{\small
\begin{align*}
    BalSSE  := & \sum_{k \in K_n} \sum_{i=1}^n  \epsilon_{ik}^2 +  \sum_{q \in Q} \sum_{k_q \in K_q} \frac{n}{2p_q n_{k_q}} \times \underset{\substack{i=1 \\ x_{ik_q}=1}}{\sum^n} \epsilon_{ik_q}^2  +\frac{n}{2p_q(n-n_{k_q})} \times \underset{\substack{i=1 \\ x_{ik_q}=0}}{\sum^n} \epsilon_{ik_q}^2,   
\end{align*}
where $p_q$ is the number of categories of the categorical variable $q$. 
Its associated  balanced MSE is deduced from \ref{MSEtoSSE}
}%
\end{definition}
\begin{rem}
The metric $BalSSE$ rebalances the influence of the categorical variables.
Indeed for all $q \in Q$ and for all $k_q \in K_q$, we have 
{\small
    \begin{flalign*}
       & 0 \leq \frac{n}{2p_q n_{k_q}} \times \underset{\substack{i=1 \\ x_{ik_q}=1}}{\sum^n} \epsilon_{ik_q}^2 \leq n/2 \text{ and } 0 \leq \frac{n}{2p_q(n-n_{k_q})} \times \underset{\substack{i=1 \\ x_{ik_q}=0}}{\sum^n} \epsilon_{ik_q}^2   \leq n/2 \Longrightarrow    0 \leq  \sum_{i=1}^n \epsilon_{ik_q}^2   \leq n p_q 
 \Longrightarrow    0 \leq  \sum_{k_q \in K_q} \sum_{i=1}^n \epsilon_{ik_q}^2   \leq n. 
    \end{flalign*}
}%
\end{rem}

\begin{prop}
Under the assumption $ \epsilon_{ik}^2  \leq 1$, the previous $BalSSE$ allows balancing influence between numerical and categorical variables.
\end{prop}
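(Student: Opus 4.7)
The plan is to show that, under the stated normalization, every variable — whether numerical or categorical — contributes at most the same amount (namely $n$) to the $BalSSE$, so that no variable type can dominate the learning signal.

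First, I would examine the numerical block. For each $k \in K_n$, the contribution to $BalSSE$ is $\sum_{i=1}^n \epsilon_{ik}^2$, and the assumption $\epsilon_{ik}^2 \leq 1$ immediately gives $0 \leq \sum_{i=1}^n \epsilon_{ik}^2 \leq n$. So each numerical variable's contribution to $BalSSE$ lies in $[0,n]$.

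Next, I would invoke the previous remark: for each categorical variable $q \in Q$, the rescaled contribution
\[
\sum_{k_q \in K_q} \Bigl( \tfrac{n}{2p_q n_{k_q}} \!\!\!\sum_{\substack{i=1\\ x_{ik_q}=1}}^n \!\!\epsilon_{ik_q}^2 \;+\; \tfrac{n}{2p_q(n-n_{k_q})} \!\!\!\sum_{\substack{i=1\\ x_{ik_q}=0}}^n \!\!\epsilon_{ik_q}^2 \Bigr)
\]
lies in $[0,n]$, precisely because of the $1/p_q$ factor that averages out the per-modality bound $[0,n]$ over the $p_q$ categories. Thus each categorical variable's contribution to $BalSSE$ also lies in $[0,n]$.

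Putting these two bounds together, every variable in $K_n \cup Q$ contributes to $BalSSE$ within the same interval $[0,n]$, independently of its type, its number of categories, or its frequency distribution. This is the precise sense in which $BalSSE$ balances the influence between numerical and categorical variables, and the conclusion follows. There is no real obstacle here: the content of the proposition is essentially a bookkeeping consequence of the per-modality bound from the preceding remark, together with the simple numerical bound enabled by the assumption $\epsilon_{ik}^2 \leq 1$ (which in practice one enforces by standardizing or min-max scaling the numerical features before training).
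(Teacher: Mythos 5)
Your argument is correct and matches the paper's own proof: both rest on the two bounds that each numerical variable contributes at most $n$ to $BalSSE$ (via $\epsilon_{ik}^2 \leq 1$) and that each categorical variable's weighted contribution is at most $n$ (via the preceding remark, where the $1/p_q$ factor turns the per-modality bound into a per-variable bound). Your write-up is, if anything, slightly more explicit than the paper about the fact that the categorical bound concerns the rescaled contribution, but the route is the same.
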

\begin{proof}
For all $q \in Q$ and for all $k_q \in K_q$, we have 
{\small
    \begin{flalign*}
   & 0 \leq  \sum_{k_q \in K_q} \sum_{i=1}^n \epsilon_{ik_q}^2   \leq n \text{ and } 0 \leq  \sum_{i=1}^n \epsilon_{ik}^2 \leq n
    \end{flalign*}
}%
\end{proof}


\section{Numerical Illustration}
\label{Illu}

To illustrate the previous section, we propose analyzing the shortcomings of the standard MSE using a simple example where we empirically demonstrate and compare the benefits of using the balanced MSE in different scenarios. Other loss functions and encodings were tested, but they yielded poor results (details are given in  Appendix \ref{other_BK}). We first define the various metrics used to compare the two loss functions. Next, we describe the simulated data used for illustration, and finally, we analyze the results.

\subsection{Quality Metrics}
We analyze the balanced MSE in a regression framework on the following measures:
\begin{itemize}
    \item Quality of the reconstruction: 
    \begin{itemize}    
        \item MSE for Mixed data  with  standard MSE on numerical  data and balanced accuracy (defined by (\ref{accuracy})) on categorical data: 
        {\small
        \begin{flalign*}    
        & MSEM(X, \w X)= \frac{1}{p} \left( \sum_{k \in K_n} MSE(X_{k}, \w X_{k}) + \sum_{q \in Q} (1-BalAcc(X_{q}, \w X_{q})) \right)
        \end{flalign*}with \begin{flalign*}
        BalAcc(X_{q}, \w X_{q}) = \frac{1}{p_q} \sum_{k_q} BalAcc(X_{k_q},\w X_{k_q})
        \end{flalign*}
        }%
        \item  $Y$ test prediction from the reconstructed data as train dataset: 
        {\small
            \begin{align*}
            MSE(Y, \w Y)=\frac{1}{n}\sum_i(Y_i-\w Y_i)^2    
            \end{align*}
        }%
    \end{itemize}
       
    \item Quality of the dimensionality reduction: Y test prediction from the latent space, as train set: $MSE(Y, \w Y)$.
    
    \item Quality of the correlation reconstruction: differences between the mixed correlation matrix in the initial sample (inputs of autoencoder) and the mixed correlation matrix in the reconstructed sample (outputs of autoencoder):
{\small
    \begin{flalign*}    
    MC(X, \w X) =  & \sum_{k,l\in K_n} |\rho(X_k,X_l) - \rho(\w X_k,\w X_l)| \\ & + \sum_{k,l\in Q} |V(X_k,X_l) - V(\w X_k,\w X_l)| \\ & + \sum_{k\in K_n,l\in Q} |\eta^2(X_k,X_l)-\eta^2(\w X_k,\w X_l)|
    \end{flalign*}
}
    The correlation metric is mixed i.e. defined with Spearman correlation $\rho$ for quantitative-quantitative variables, correlation coefficient $\eta^2$ for quantitative-categorical variables, and Cramer's $V$ for categorical-categorical variables. 
\end{itemize}

To effectively measure the impact of the loss function, the autoencoder is applied only to the features $X$ and not to $Y$. This way, for the supervised analysis ($Y$ test prediction), the same $y$ is used in all training sets.

\subsection{Dataset Design}
We consider a uncorrelated sample of size $n=2000$ composed of 3 quantitative, Gaussian, features $(X_1, X_2, X_3)$ and 5 categorical, Multinomial, features $(Q_1, Q_2, Q_3, Q_3, Q_5)$ (described in the Appendix \ref{datasetDesign}). 
We define a target variable $Y$ as a Linear Model:    $Y \sim \mathcal{N}(\mu, \sigma_\epsilon:=0.5)$
with $\mu$ defined as a linear combination of the features according to a specific context (described in the Appendix \ref{contextDesign}):
\begin{itemize}
    \item Imbalanced: the target variable $Y$ is explained by quantitative variables and minority categories. 
    \item Balanced: the target variable $Y$ is explained by quantitative variables and majority categories.
    \item Majority: the target variable $Y$ is explained by majority categories.
\end{itemize}
The autoencoder architecture and parameters are described in Appendix \ref{AE_archi}.
The test sample is constructed by random sampling from the initial sample and represents 40\% of the observations.

\subsection{Illustration Results}\label{Illu_Results}

To avoid sampling effects and obtain a distribution of prediction errors we ran 20 train-test datasets (k-fold analysis) for 1000, 2000, and 3000 epochs. In the same way, to avoid getting results dependent on some learning algorithms we use 10 models from the \textit{autoML of the H2O package} \citep{H2OAutoML20} among the following algorithms: Distributed Random Forest, Extremely Randomized Trees, Generalized Linear Model with regularization, Gradient Boosting Model, Extreme Gradient Boosting and a Fully-connected multi-layer artificial neural network.

\subsubsection{Quality of the reconstruction}

Figure \ref{Boxplots_MSEM} presents the reconstruction error ($MSEM$) for the three contexts. The input data $X$ are better reconstructed using balanced MSE when epochs are insufficient (1000 or 2000). With 3000 epochs being sufficient, the results are similar. The differences are very high for 1000 epochs. We can observe the learning difference with Figure \ref{Learning_curves} that presents the MSEM during the learning process for both loss functions.

\begin{figure}[ht]
     \centering
     \begin{subfigure}[b]{0.15\textwidth}
         \centering
             \includegraphics[width=\textwidth]{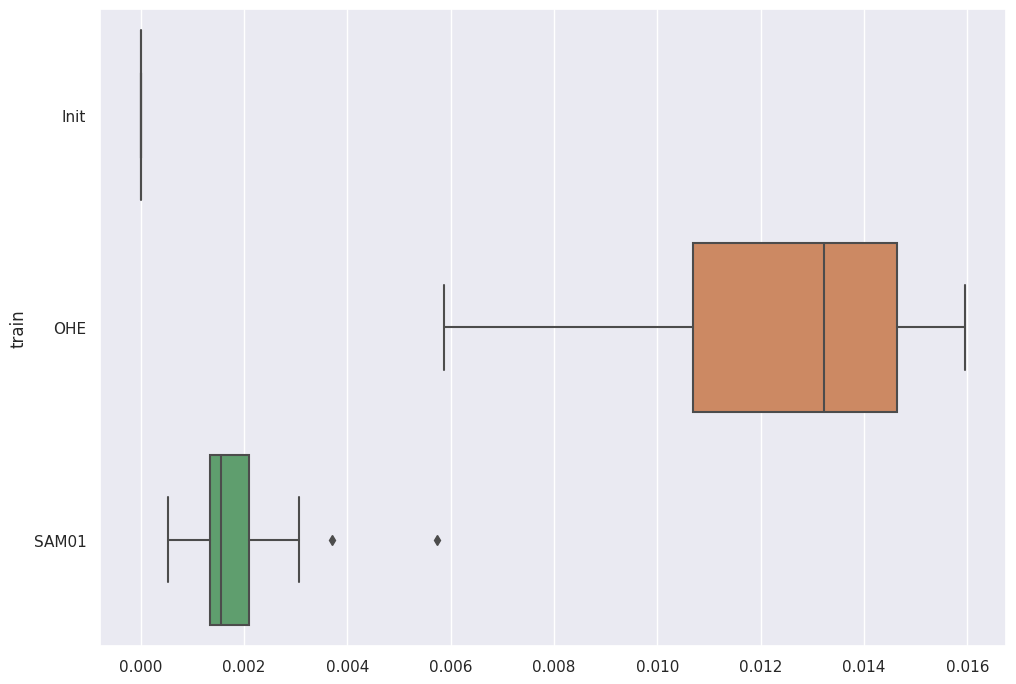}
         \quad
         \includegraphics[width=\textwidth]{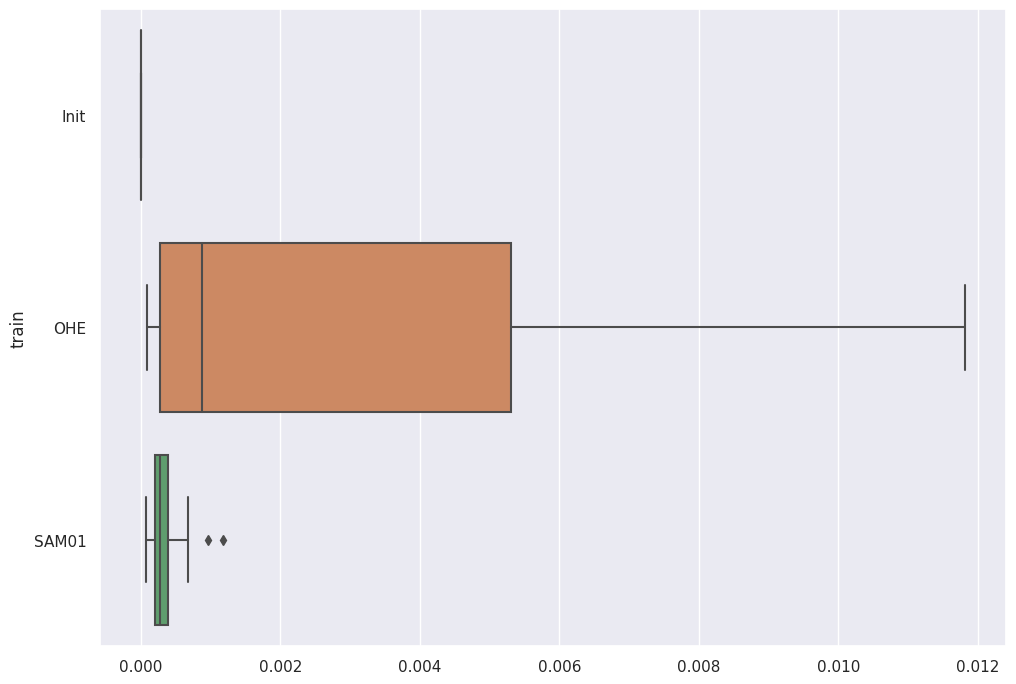}
         \quad
         \includegraphics[width=\textwidth]{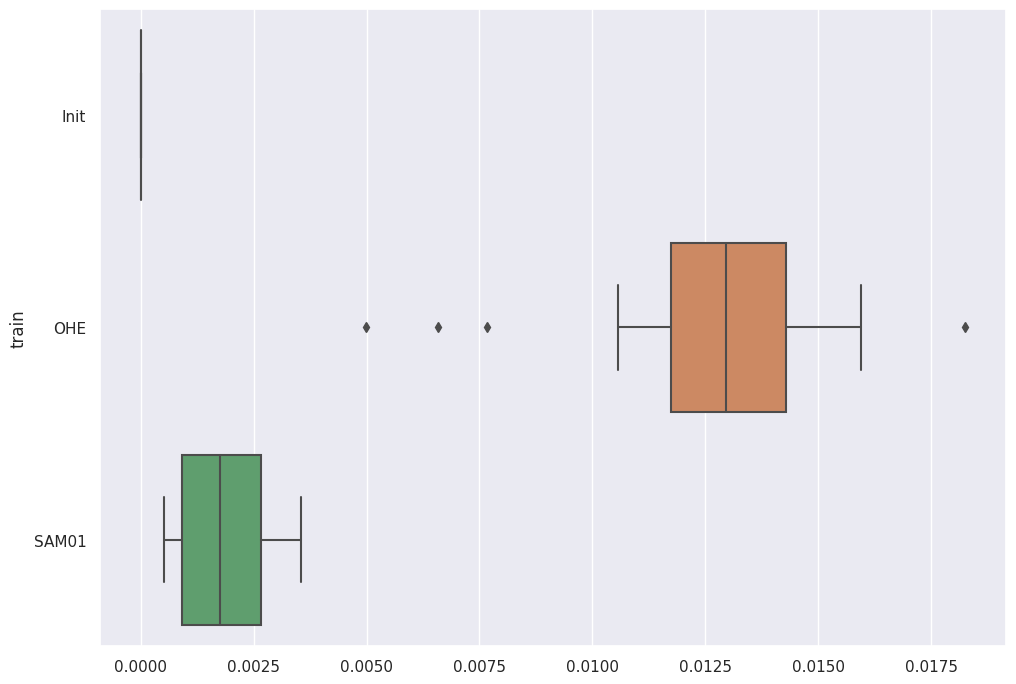}
         \caption{Imbalanced \\ context}
         \label{MSEM_Imb_X}
     \end{subfigure}
     \begin{subfigure}[b]{0.15\textwidth}
         \centering
         \includegraphics[width=\textwidth]{imgs/Illu/1000Epochs/Bal/Boxplots_MSEM.png}
         \quad
         \includegraphics[width=\textwidth]{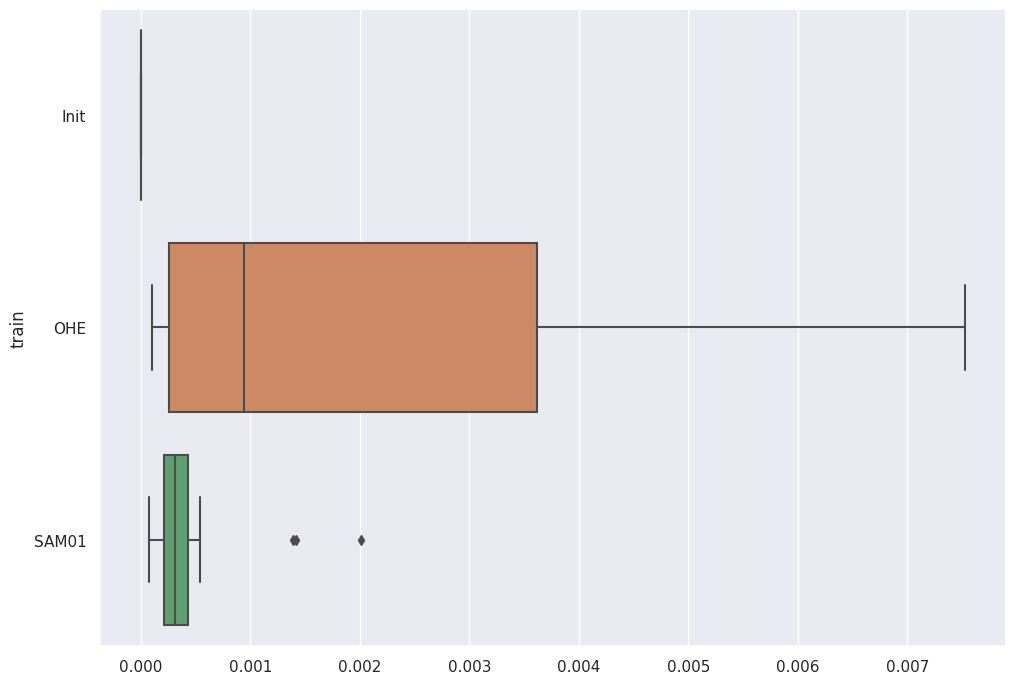}
         \quad
         \includegraphics[width=\textwidth]{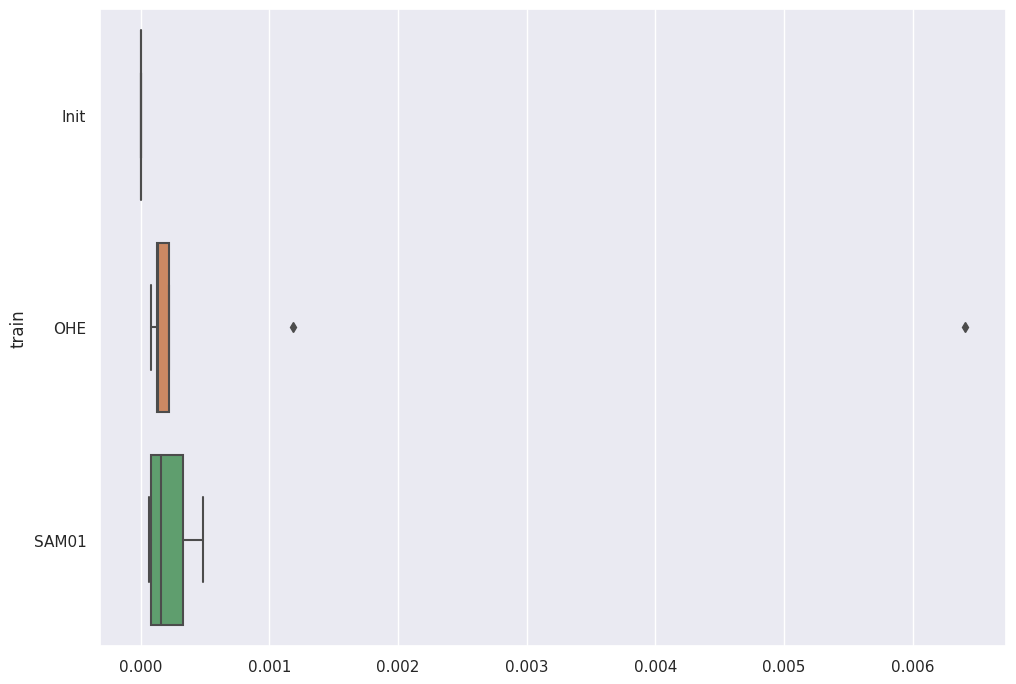}   
         \caption{Balanced \\ context}
         \label{MSEM_Bal_X}
     \end{subfigure}
     \begin{subfigure}[b]{0.15\textwidth}
         \centering
         \includegraphics[width=\textwidth]{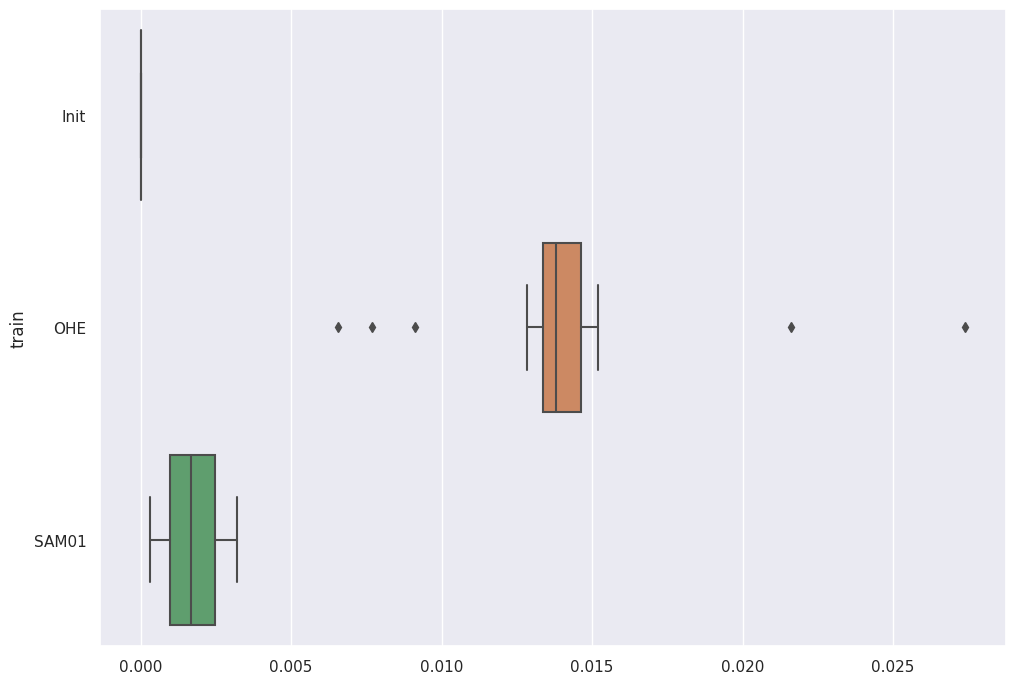}
         \quad
         \includegraphics[width=\textwidth]{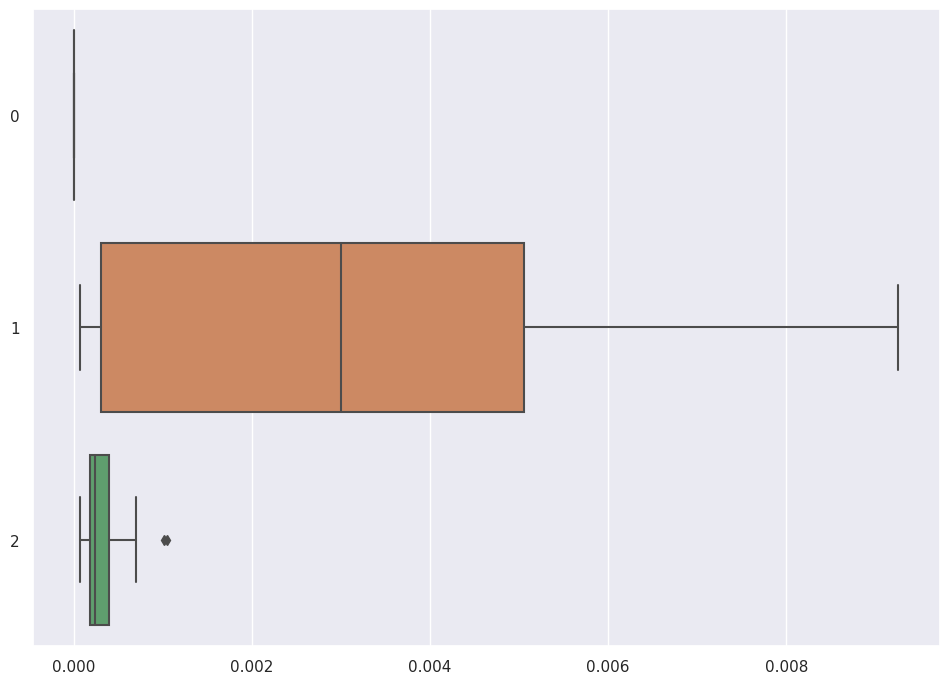}
         \quad
         \includegraphics[width=\textwidth]{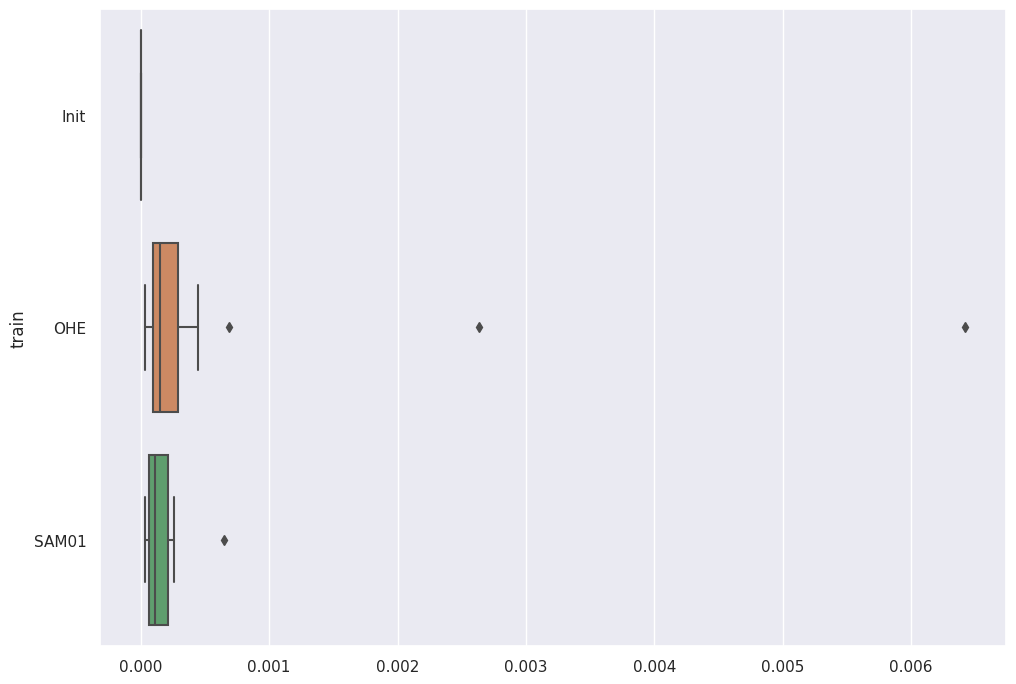}
         \caption{Majority \\ context}
         \label{MSEM_Majo_X}
     \end{subfigure}
     \caption{$MSEM(\w X)$ with 1000 (up), 2000, and 3000 (down) epochs. Comparison of the balanced MSE (green) vs standard MSE (orange) and inputs (blue) at different scales } 
     \label{Boxplots_MSEM}
\end{figure}

As described in the Appendix \ref{learningHeatmap} and \ref{learningGraph}, these results can be explained by the learning process of the autoencoder with standard MSE (which focuses on the majority variables), differing from those of the SAM (which aims to learn from all variables through the balanced MSE). We can see from Figure \ref{Learning_curves} that the balanced  MSE provides a better $MSEM$ than the standard MSE, even though both converge. A focus on the learning process of the autoencoder on a categorical variable is provided in the appendix \ref{learningGraph1var}. We can see that both the standard MSE and Cross Entropy initially focus on the majority categories and overlook minority data.

\begin{figure}[ht]
     \centering
         \includegraphics[width=0.25\textwidth]{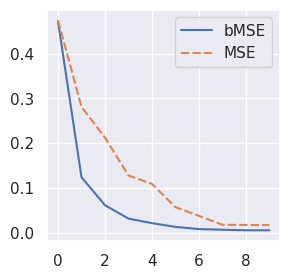}
         \caption{Learning curves ($MSEM$)}
         \label{Learning_curves}
\end{figure}

As shown in 
Figure \ref{Prediction_X}, training with balanced MSE is better (at 1000 and 2000 epochs) or equally good (at 3000 epochs) as standard MSE, whatever the context.

\begin{figure}[ht]
     \centering
     \begin{subfigure}[b]{0.15\textwidth}
         \centering
         \includegraphics[width=\textwidth]{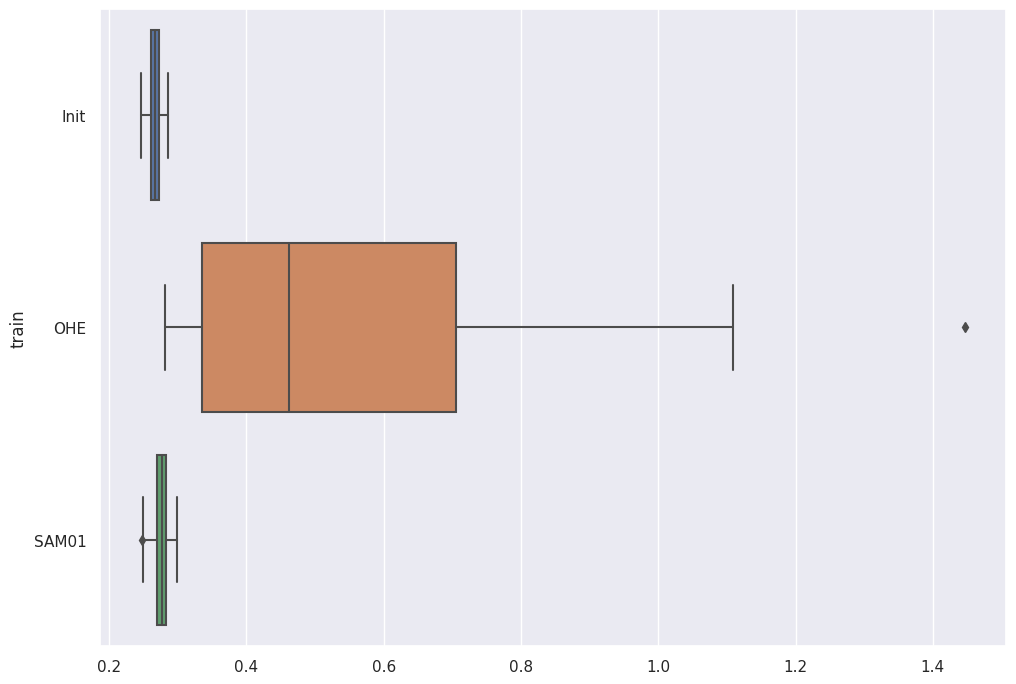}
         \quad
         \includegraphics[width=\textwidth]{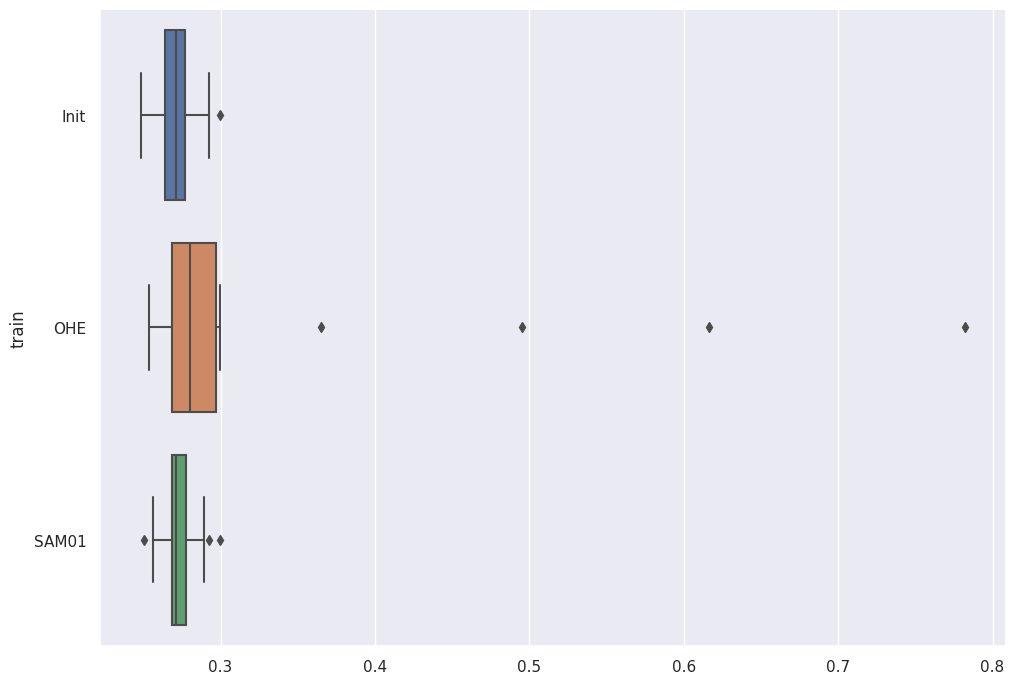}
         \quad
         \includegraphics[width=\textwidth]{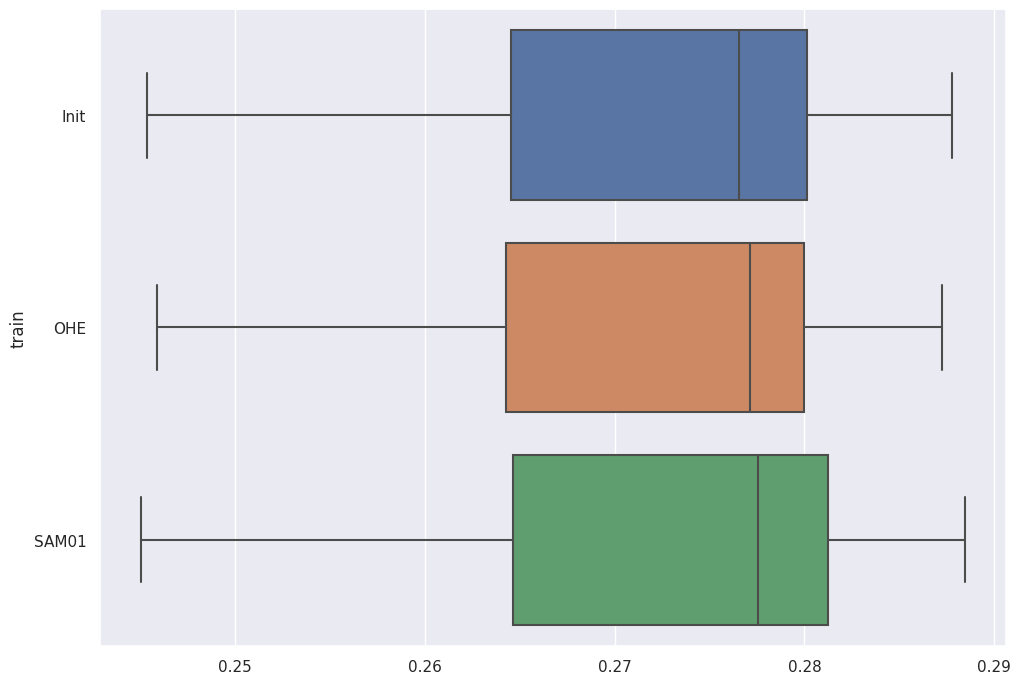}       
         \caption{Imbalanced \\ context}
         \label{Prediction_Imb_X}
     \end{subfigure}
     \begin{subfigure}[b]{0.15\textwidth}
         \centering
         \includegraphics[width=\textwidth]{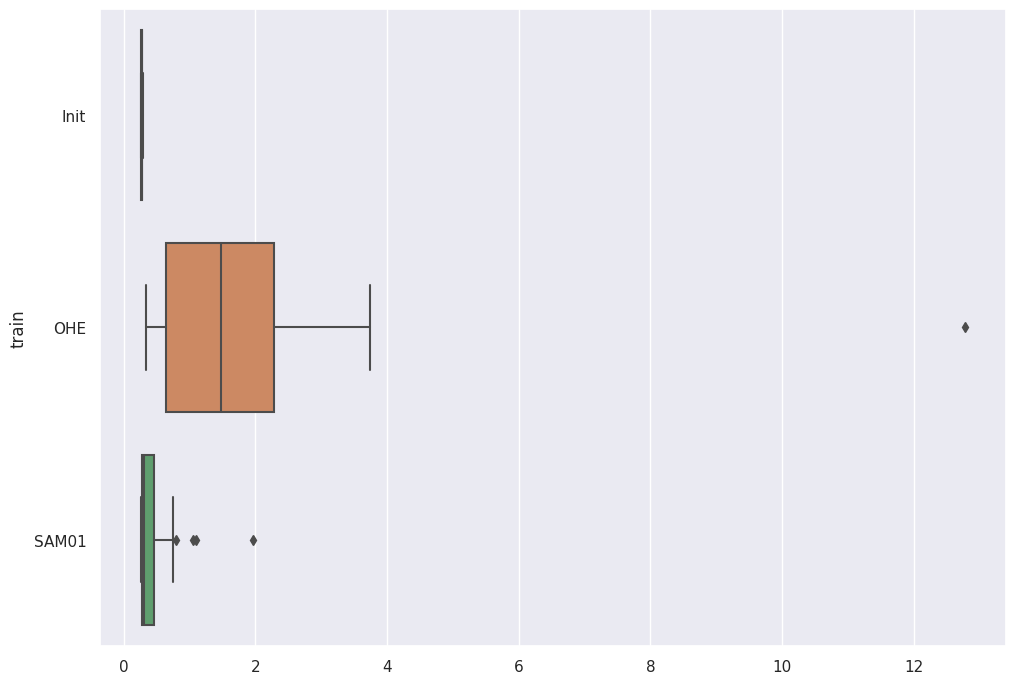}
         \quad
         \includegraphics[width=\textwidth]{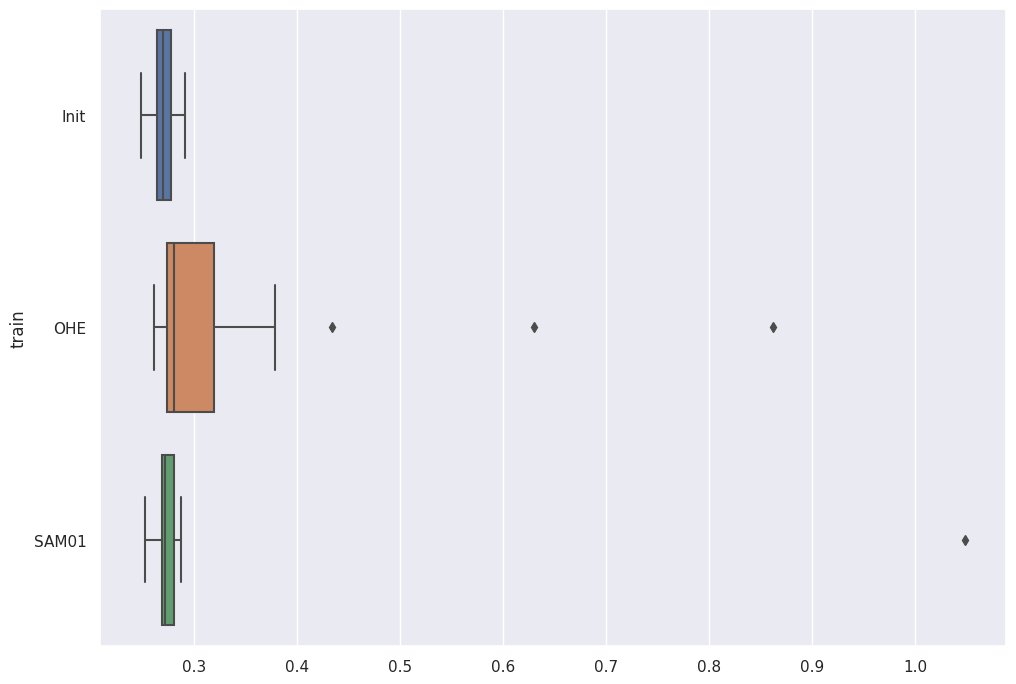}
         \quad
         \includegraphics[width=\textwidth]{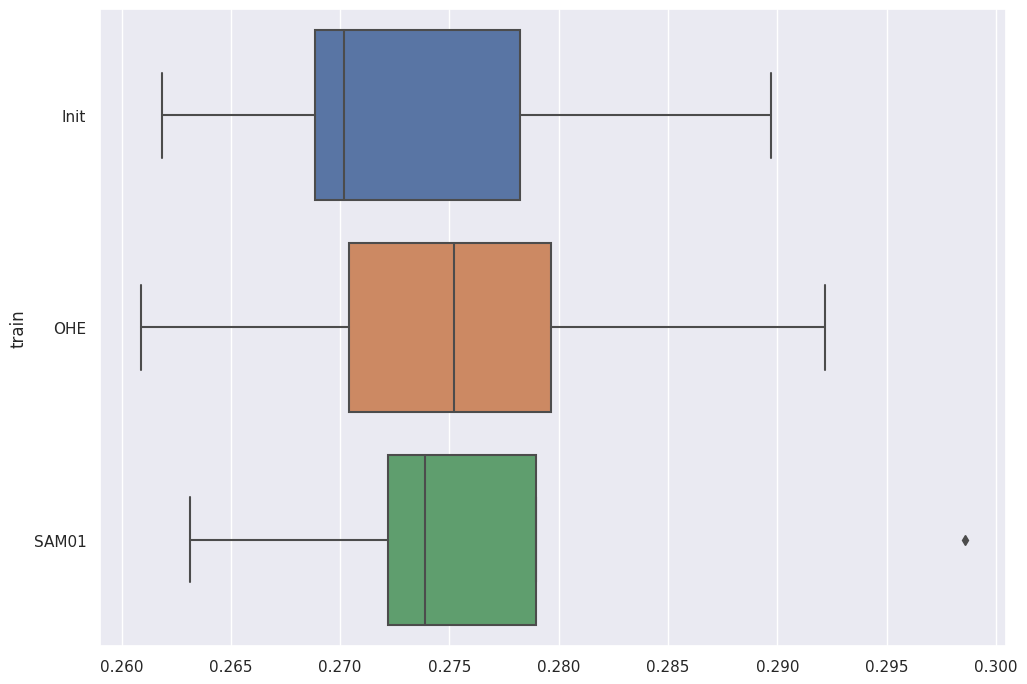}   
         \caption{Balanced \\ context}
         \label{Prediction_Bal_X}
     \end{subfigure}
     \begin{subfigure}[b]{0.15\textwidth}
         \centering
         \includegraphics[width=\textwidth]{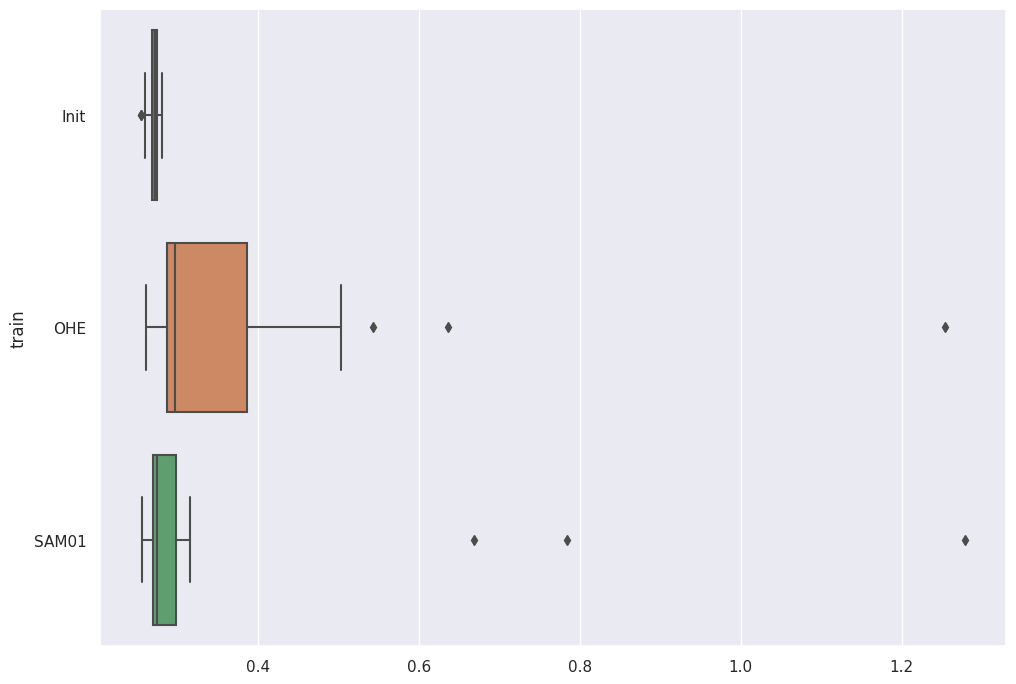}
         \quad
         \includegraphics[width=\textwidth]{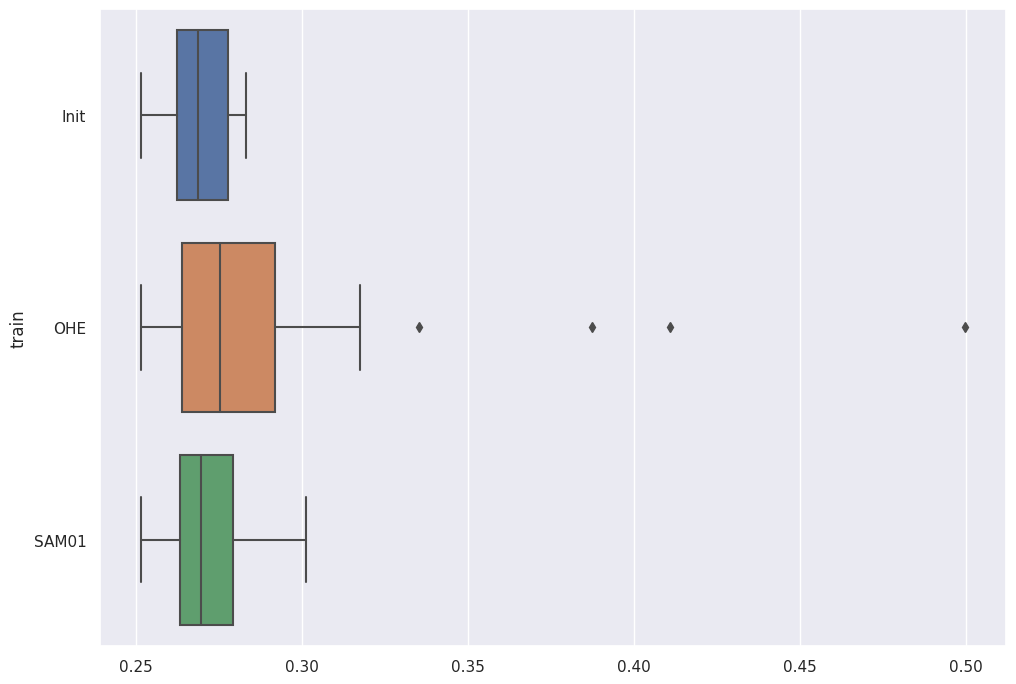}
         \quad
         \includegraphics[width=\textwidth]{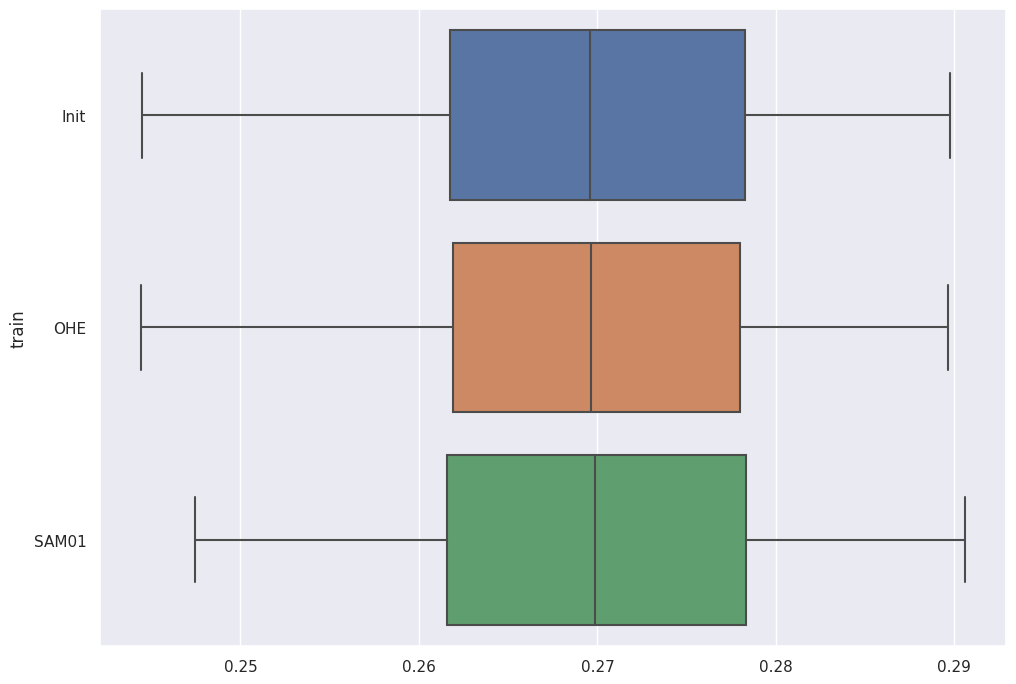}
         \caption{Majority \\ context}
         \label{Prediction_Majo_X}
     \end{subfigure}
     \caption{$MSE(Y, \w Y)$ with 1000 (up), 2000, and 3000 (down) epochs from reconstructed data. Comparison of the balanced MSE (green) vs standard MSE (orange) and inputs (blue) at different scales }
     \label{Prediction_X}
\end{figure}

The results for the imbalanced context are not very surprising, given that the standard MSE overlooks minority categories, even though they explain $Y$. For the balanced context, the results are quite understandable: the standard MSE does not prioritize quantitative variables, while the balanced MSE reconstructs them better. Since these variables explain $Y$, and the majority categories are well represented, the prediction error is lower with the balanced MSE. Finally, for the majority context, the results are somewhat surprising but interesting. A closer analysis reveals that the standard MSE, by assigning too much importance to majority values, completely neglects the reconstruction of minority categories and quantitative variables. This leads to spurious correlations, thus disrupting learning algorithms.

\subsubsection{Quality of the dimensionality reduction}

Figure \ref{Prediction_X_lat} presents $MSE(Y,\w Y)$ when $\w Y$ is reconstructed from the latent space.  We can observe that training with balanced MSE is better than with standard MSE, regardless of the context or epochs.

\begin{figure}[ht]
     \centering
     \begin{subfigure}[b]{0.15\textwidth}
         \centering
         \includegraphics[width=\textwidth]{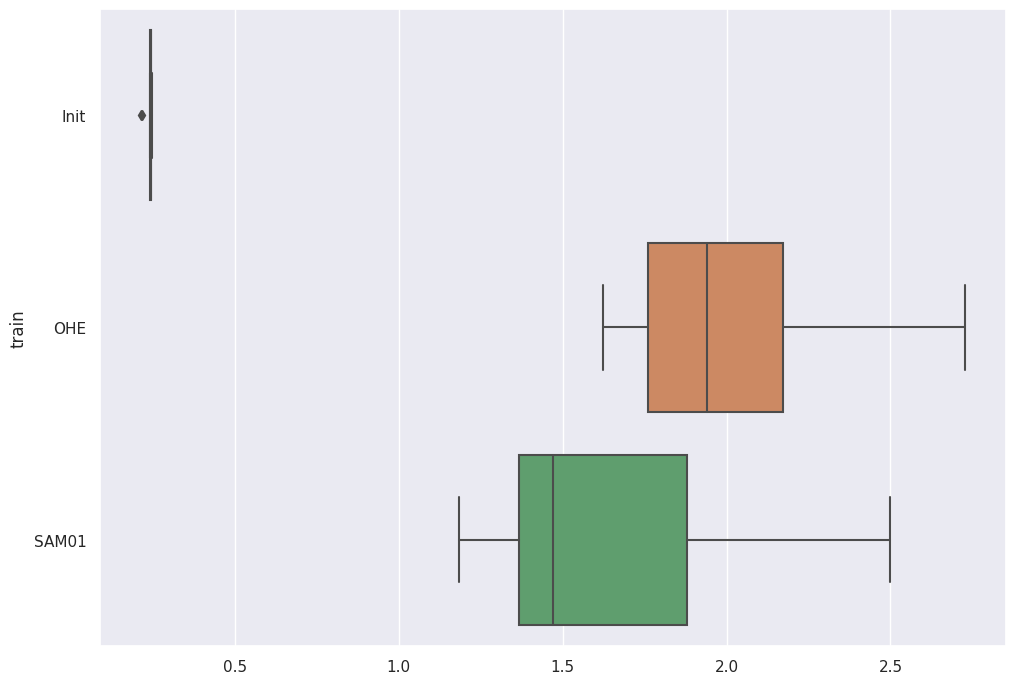}
         \quad
         \includegraphics[width=\textwidth]{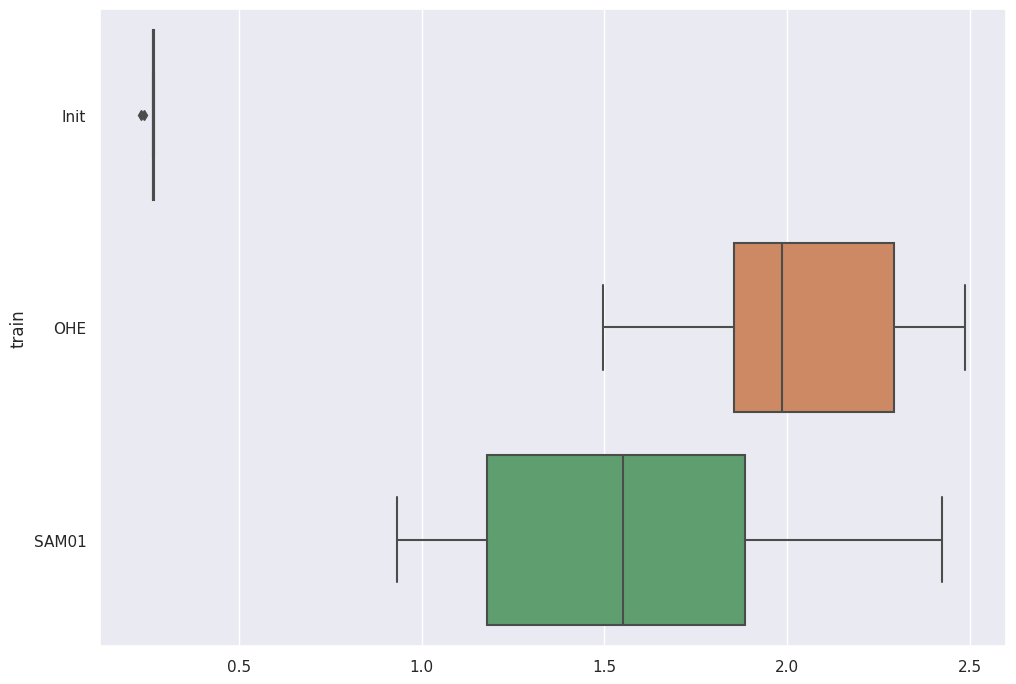}
         \quad
         \includegraphics[width=\textwidth]{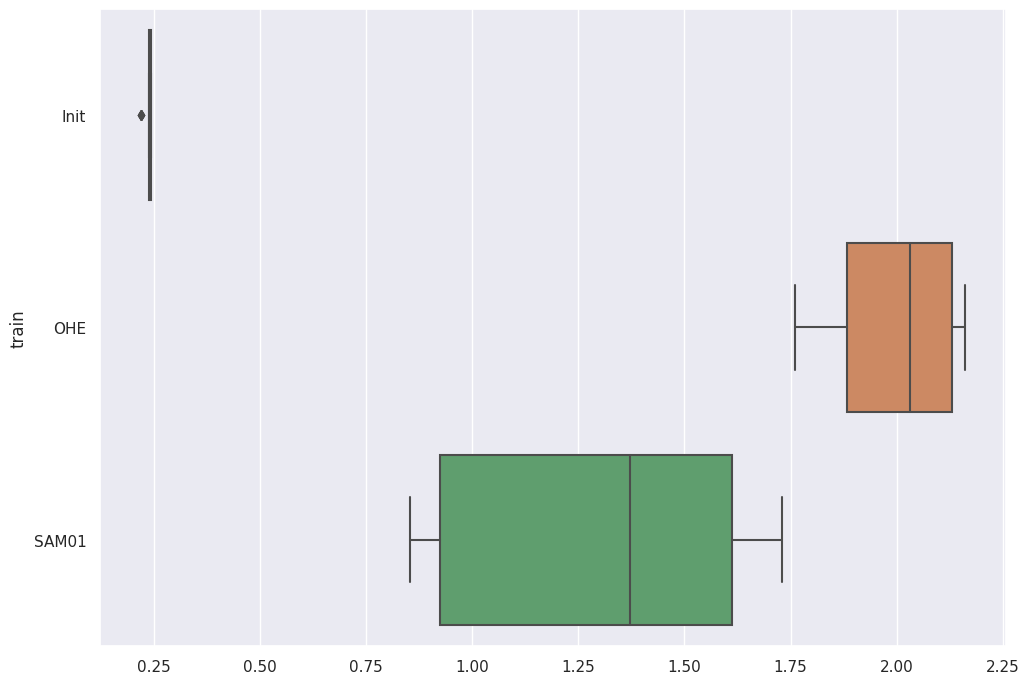}       
         \caption{Imbalanced \\ context}
         \label{Prediction_Imb_X_Lat}
     \end{subfigure}
     \begin{subfigure}[b]{0.15\textwidth}
         \centering
         \includegraphics[width=\textwidth]{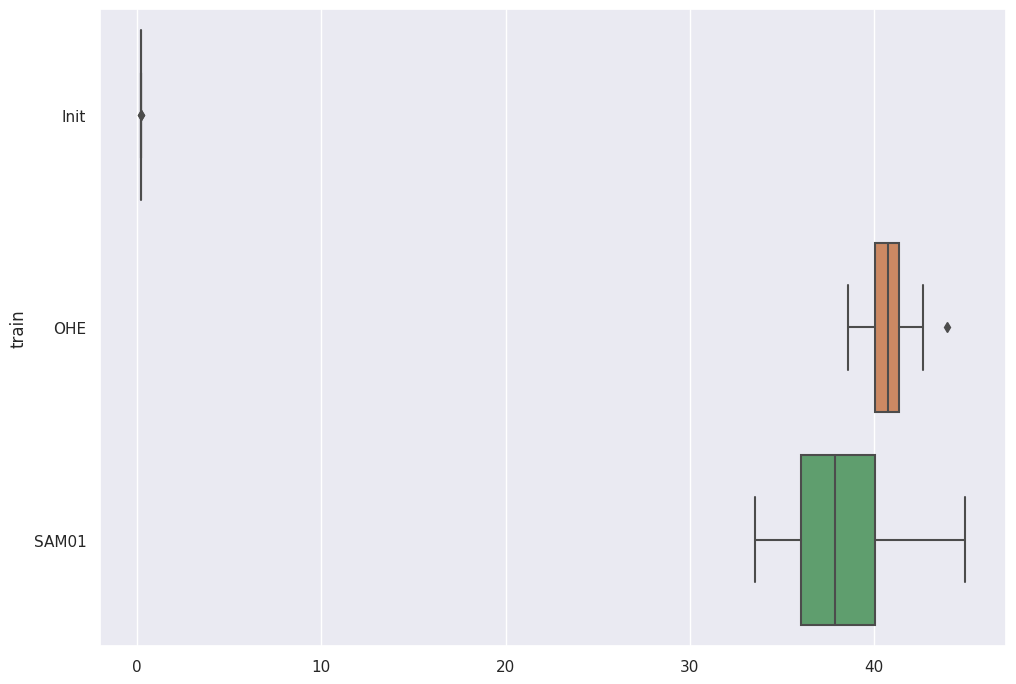}
         \quad
         \includegraphics[width=\textwidth]{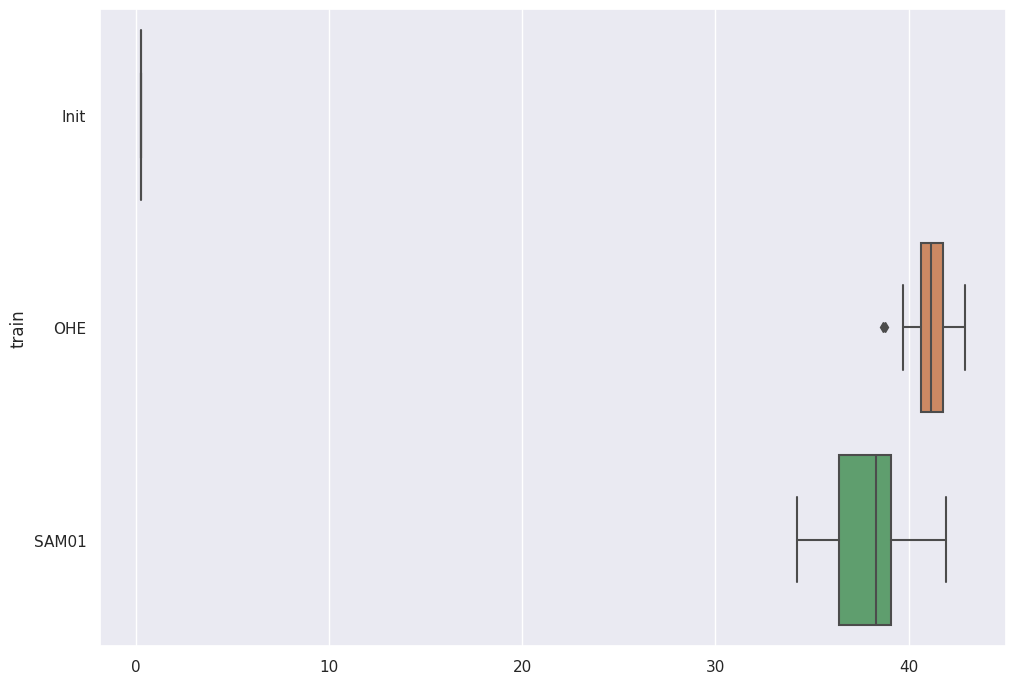}
         \quad
         \includegraphics[width=\textwidth]{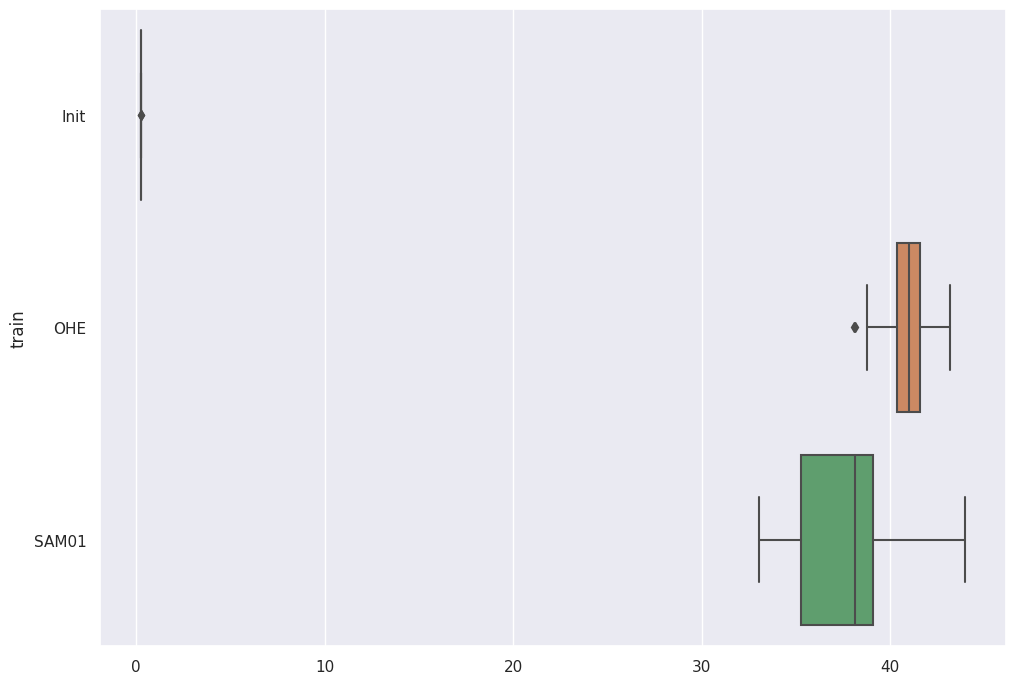}   
         \caption{Balanced \\ context}
         \label{Prediction_Bal_X_Lat}
     \end{subfigure}
     \begin{subfigure}[b]{0.15\textwidth}
         \centering
         \includegraphics[width=\textwidth]{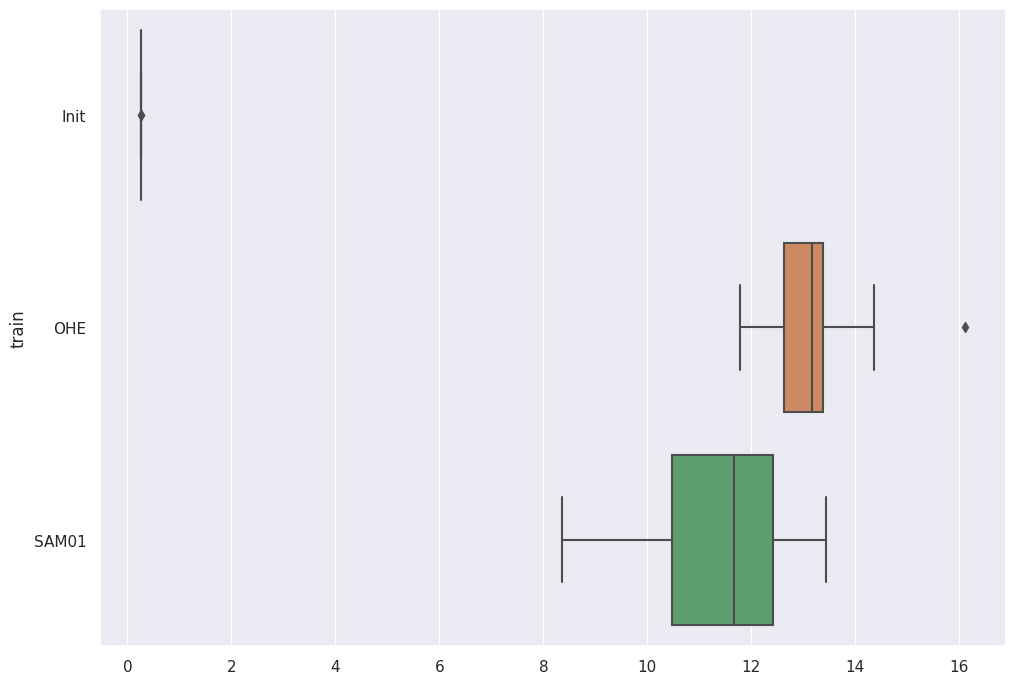}
         \quad
         \includegraphics[width=\textwidth]{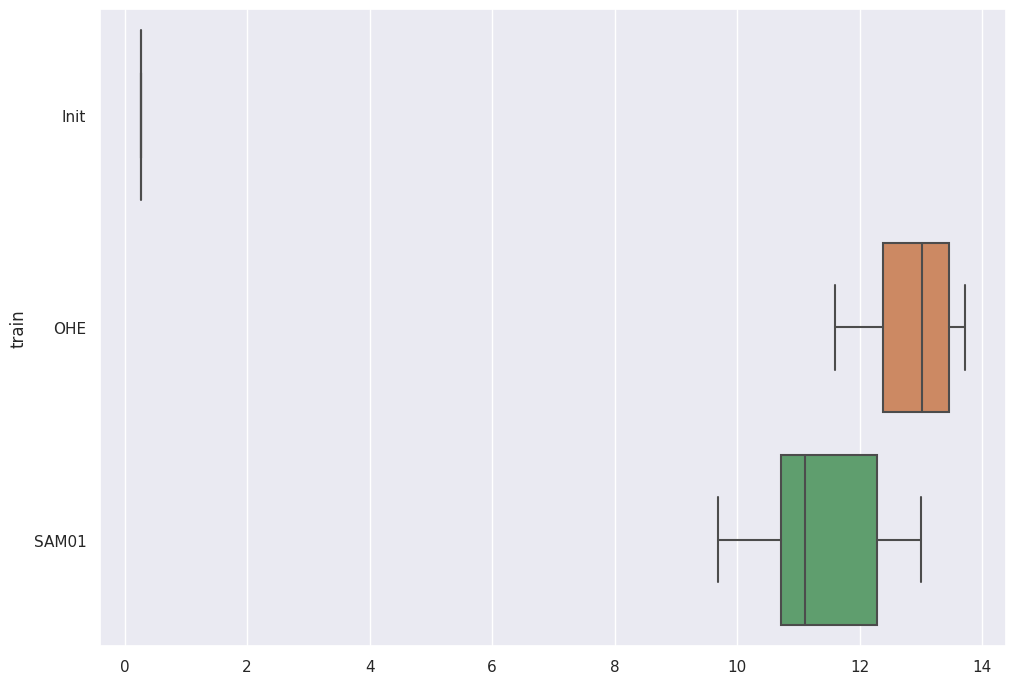}
         \quad
         \includegraphics[width=\textwidth]{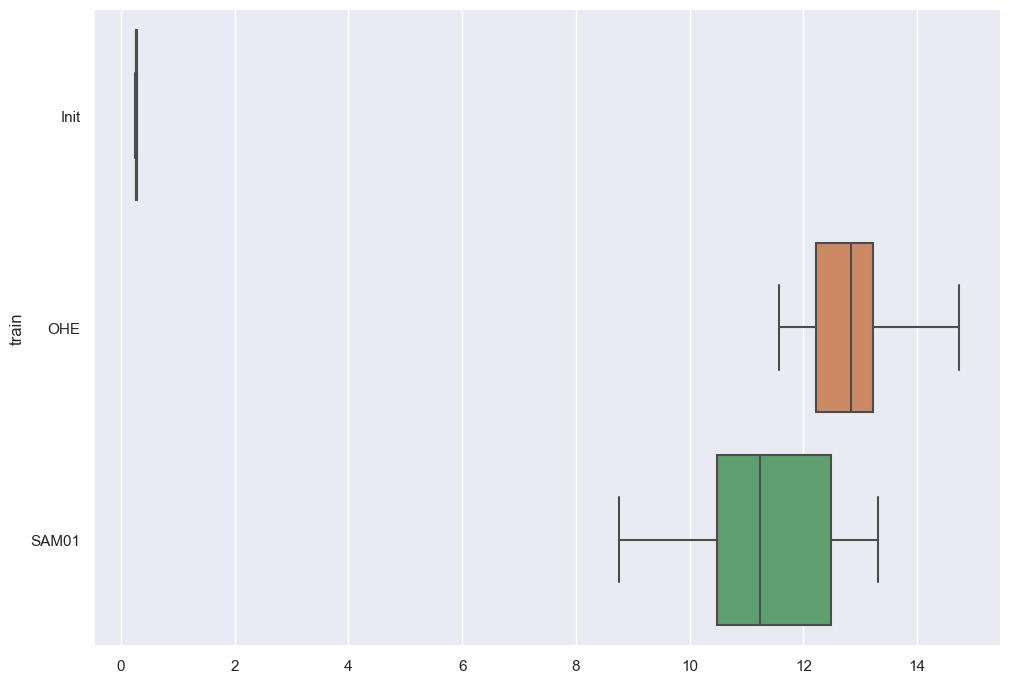}
         \caption{Majority \\ context}
         \label{Prediction_Majo_X_Lat}
     \end{subfigure}
     \caption{$MSE(Y, \w Y)$ with 1000 (up), 2000 and 3000 (down) epochs from latent space. Comparison of the balanced MSE (green) vs standard MSE (orange) and inputs (blue) at different scales }
     \label{Prediction_X_lat}
\end{figure}

\subsubsection{Quality of the correlation reconstruction}

As shown in 
Figure \ref{Boxplots_Corr_Xhat}, training with balanced MSE provides a better reconstruction of correlation than with standard MSE for 1000 and 2000 epochs, regardless of the context or epochs. The correlation is similar for 3000 epochs. Since the data are not correlated, this confirms that the standard MSE creates spurious correlations. By focusing on majority categories to significantly reduce MSE, the neurons in the latent space poorly reconstruct numerical features and minority data.

\begin{figure}[ht]
     \centering
     \begin{subfigure}[b]{0.15\textwidth}
         \centering
         \includegraphics[width=\textwidth]{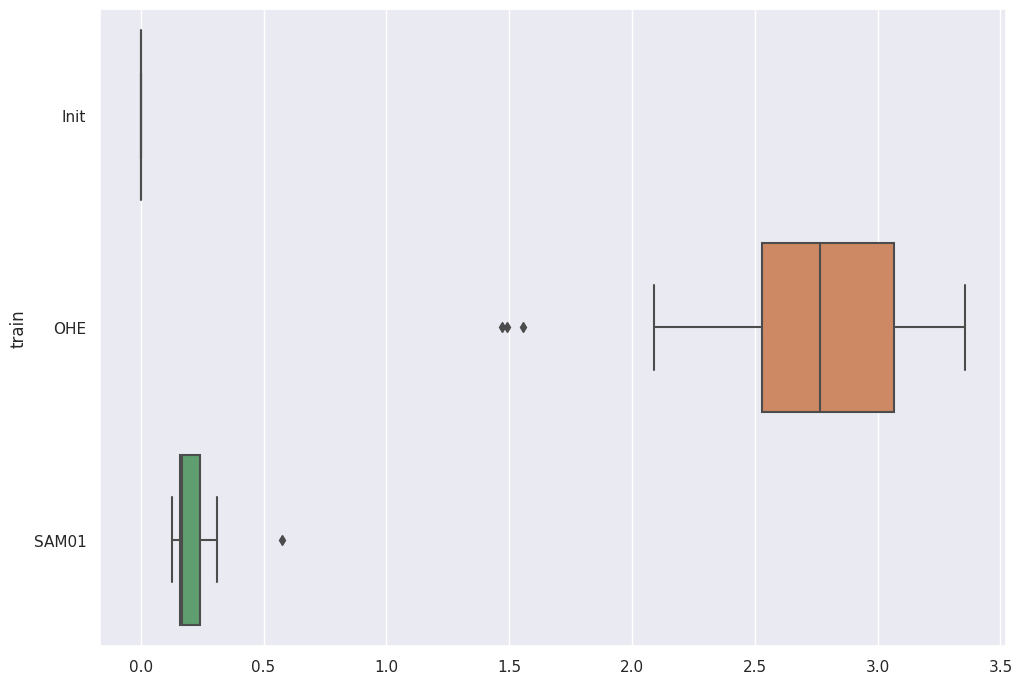}
         \quad
         \includegraphics[width=\textwidth]{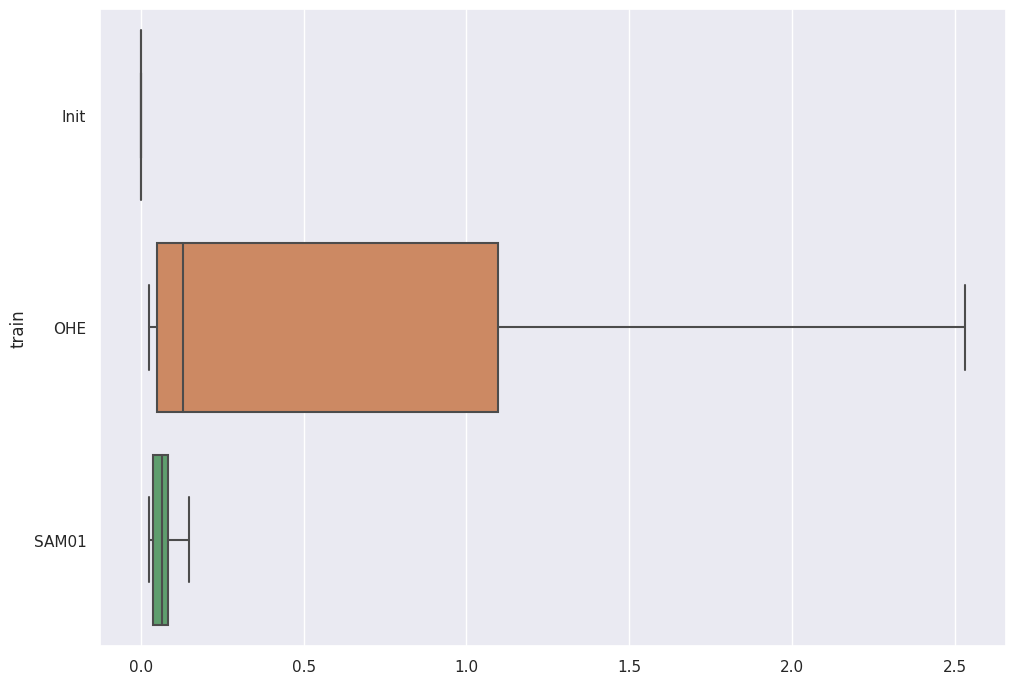}
         \quad
         \includegraphics[width=\textwidth]{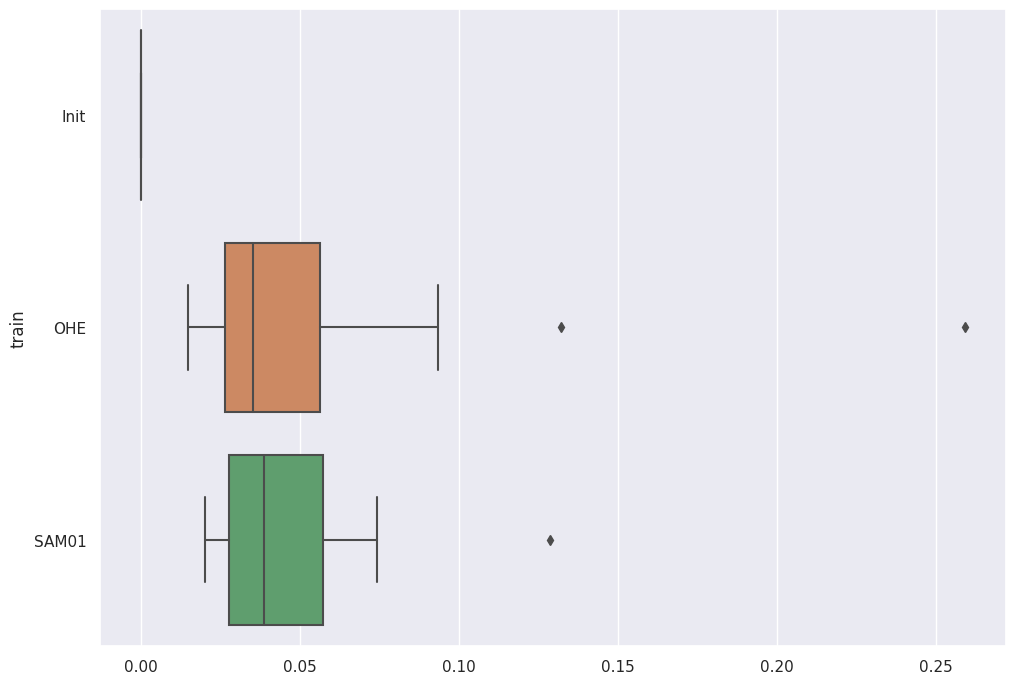}       
         \caption{Imbalanced \\ context}
         \label{Boxplots_Corr_Xhat_Imb}
     \end{subfigure}
     \begin{subfigure}[b]{0.15\textwidth}
         \centering
         \includegraphics[width=\textwidth]{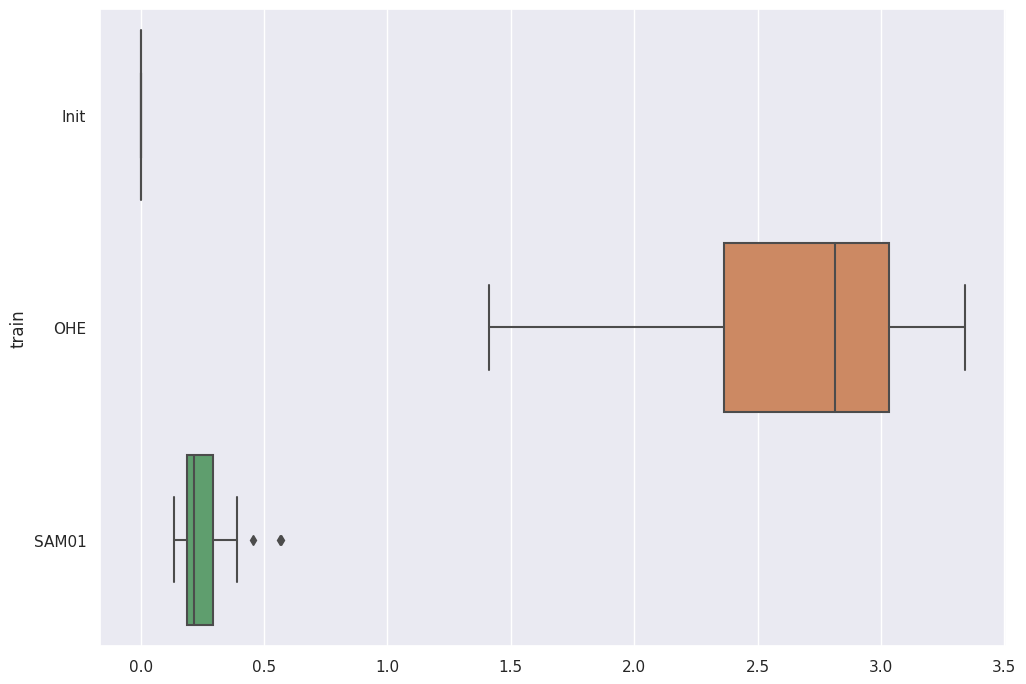}
         \quad
         \includegraphics[width=\textwidth]{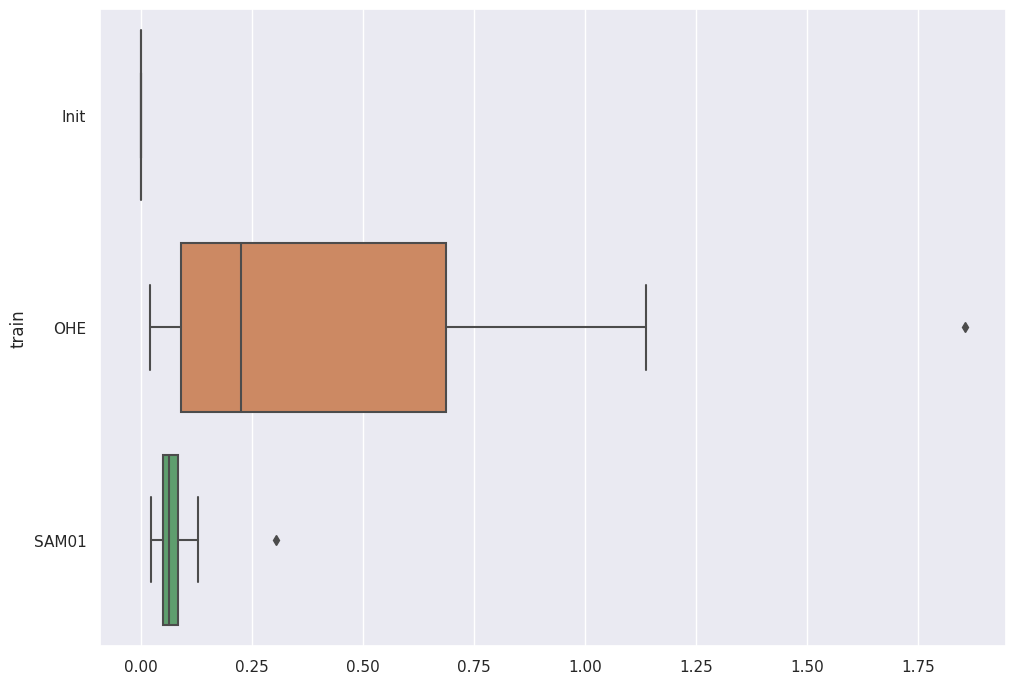}
         \quad
         \includegraphics[width=\textwidth]{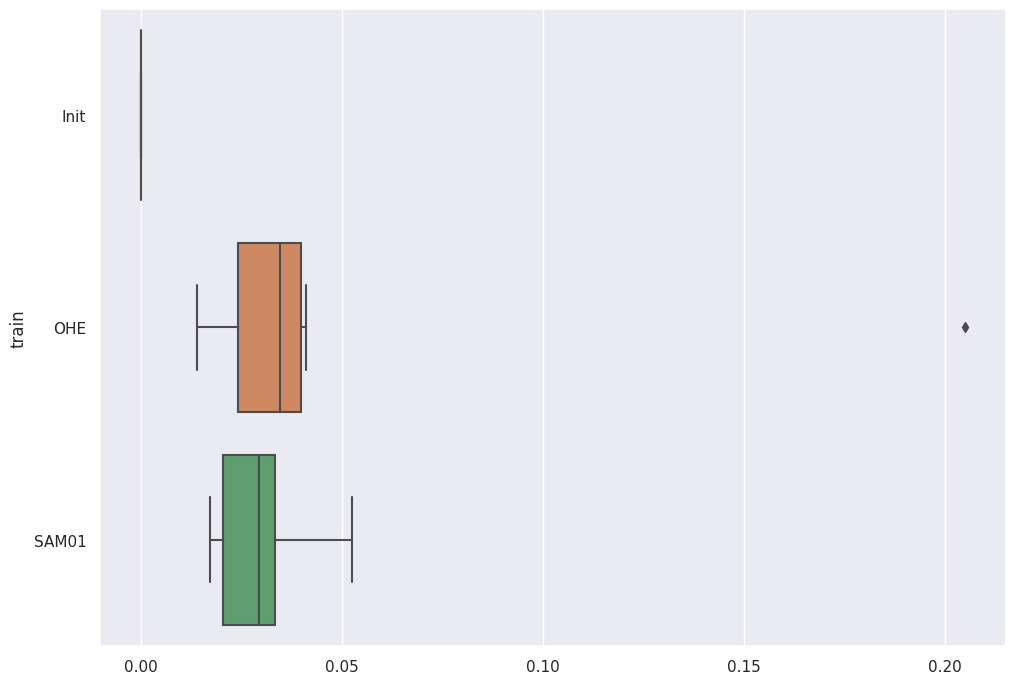}   
         \caption{Balanced \\ context}
         \label{Boxplots_Corr_Xhat_Bal}
     \end{subfigure}
     \begin{subfigure}[b]{0.15\textwidth}
         \centering
         \includegraphics[width=\textwidth]{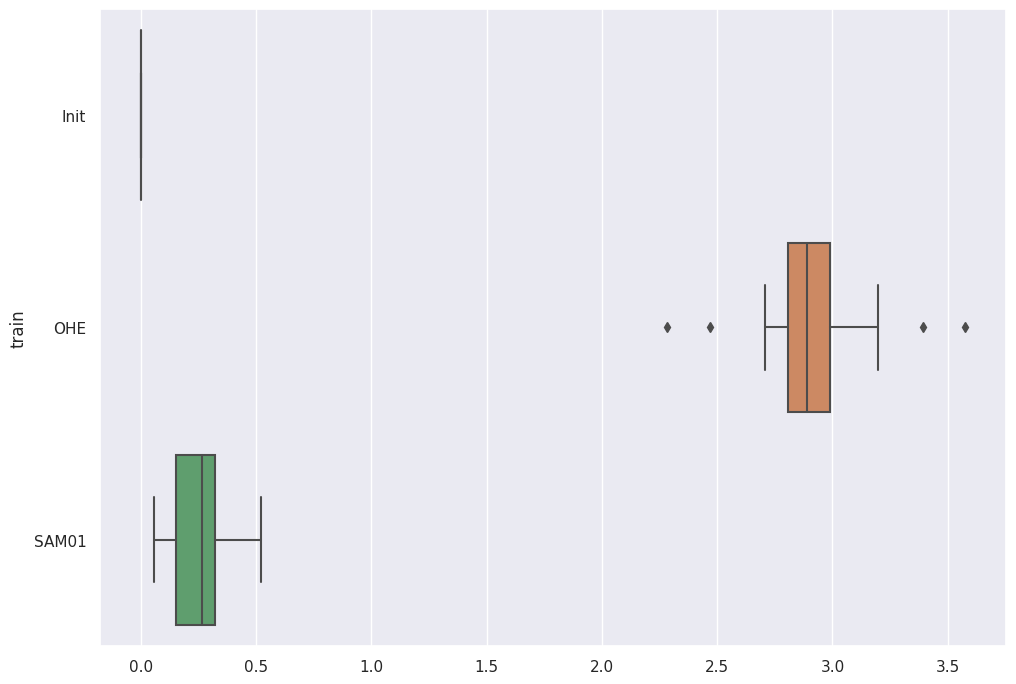}
         \quad
         \includegraphics[width=\textwidth]{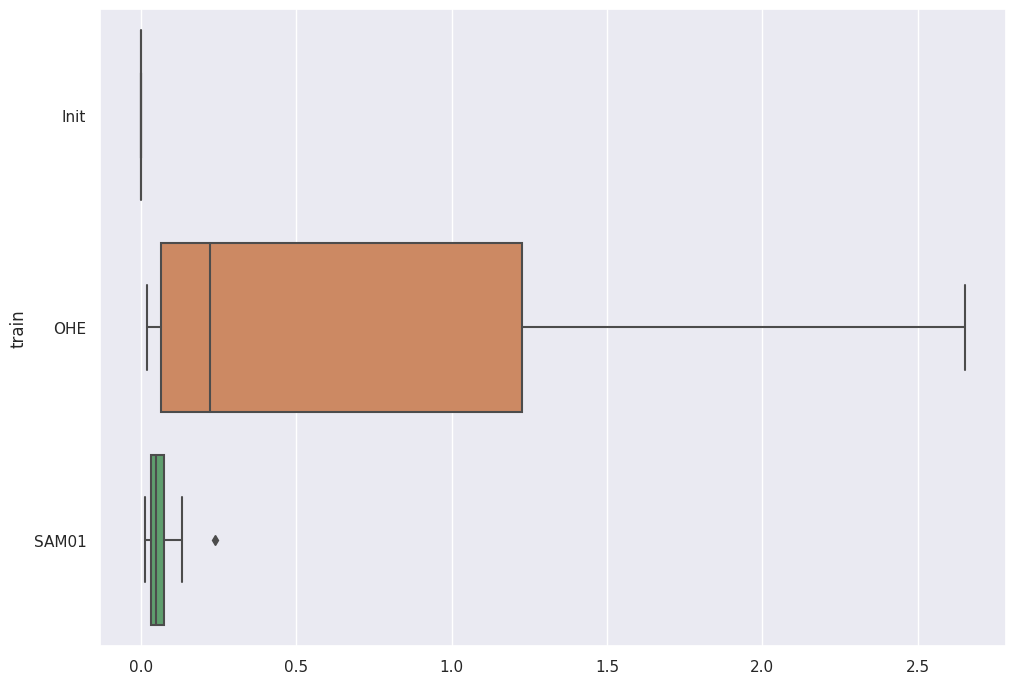}
         \quad
         \includegraphics[width=\textwidth]{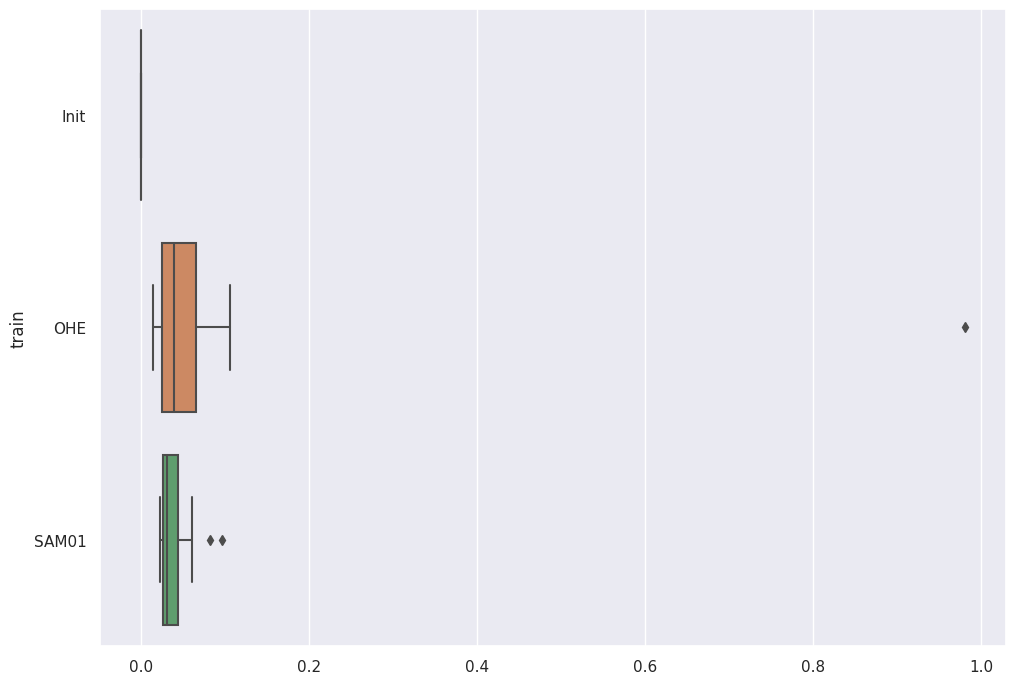}
         \caption{Majority \\ context}
         \label{Boxplots_Corr_Xhat_Majo}
     \end{subfigure}
     \caption{$MC(\w X)$ with 1000 (up), 2000, and 3000 (down) epochs. Comparison of the balanced MSE (green) vs standard MSE (orange) and inputs (blue) at different scales }
     \label{Boxplots_Corr_Xhat}
\end{figure}


\section{Experiments in Supervised Learning}
\label{XP_supervised}
We propose to compare the two loss functions on real datasets in the context of supervised learning. We thus work with multiple datasets presented in the appendix \ref{Applications_Details}. As for the illustration, to avoid sampling effects and obtain a distribution of prediction errors we ran 10 train-test datasets (k-fold analysis). The autoencoders are trained with 1000 epochs. 
In the same way, to avoid getting results dependent on some learning algorithms we use 10 models from the \textit{autoML of the H2O package}. 

\subsection{Binary Classification}
We test our approach on the well-known "Adults" dataset i.e., in a binary classification framework.   
As for the illustration, we suggest to predict the test from the reconstructed data. We analyze below the following metrics on the prediction of $Y$ (which is binary): F1-Score and Balanced Accuracy. The boxplots of these metrics, Correlation Matrix, and Area Under the Curve (AUC) are presented in the Appendix \ref{Prediction_Classification_Details}. As for illustration, the results below are related to an imbalanced context: by making explanatory categories minority (obtained through variable importance with a random forest). We also construct a "balanced" context where the training sets are randomly drawn from the dataset. More results are provided in Appendix \ref{Prediction_Classification_Details}.

We test also our approach on a second dataset: Breast Cancer. 
More results for Breast Cancer are available in Appendix \ref{Prediction_Classification_Details}. The results presented below are obtained with 500 epochs. We presented the results with 1000 epochs in the Appendix. The baseline represents the initial train.

\begin{table}[h!]
\centering
\begin{tabular}{|p{1.5cm}||p{1cm}|p{1cm}||p{1cm}|p{1cm}|} 
\hline
Train & F1Score (mean) & F1Score (std) & BalAcc (mean) & BalAcc (std)\\ [0.5ex] 
\hline\hline
Baseline & 79.2 & 2.6 &  73.8 & 4.6 \\ 
\hline
MSE & 78.2 & 2.5 &  72.8 & 4.6\\
\hline
bMSE & \textbf{80.1} & 1.9 & \textbf{73.8} & 3.7 \\
\hline
\end{tabular}
\caption{Adults Results: $Y$ test prediction metrics}
     \label{Prediction_Adults1}
\end{table}

\begin{table}[h!]
\centering
\begin{tabular}{|p{1.5cm}||p{1cm}|p{1cm}||p{1cm}|p{1cm}|} 
\hline
Train & F1Score (mean) & F1Score (std) & BalAcc (mean) & BalAcc (std)\\ [0.5ex] 
\hline\hline
Baseline & 88.3 & 1.2 & 76.1 & 2.3 \\ 
\hline
MSE & 77.2 & 5.6  & 75.7 & 1.8 \\
\hline
bMSE &  \textbf{87.0} & 2.2  & \textbf{76.5} & 2.7 \\
\hline
\end{tabular}
\caption{BreastCancer Results: $Y$ test prediction metrics}
     \label{Prediction_BreastCancer1}
\end{table}

In Tables \ref{Prediction_Adults1}, evaluated on the imbalanced context, and \ref{Prediction_BreastCancer1}, we observe that, for both datasets, the reconstructed data from the SAM (balanced MSE) provides better prediction than the autoencoder with standard MSE. The results in a "balanced" context with the "Adults" dataset are similar for both loss functions, as previously observed in our illustration.  

\subsection{Multi-class Classification}
We test our approach on the "Obesity" dataset where the goal is to predict a category of diabetes i.e., in a multi-class classification framework.   
As for the illustration, we suggest to predict the test from the reconstructed data. We analyze here the global accuracy on the test set i.e. the proportion of good prediction (from the confusion matrix diagonal).

\begin{figure}[ht]
     \centering
         \includegraphics[width=0.25\textwidth]{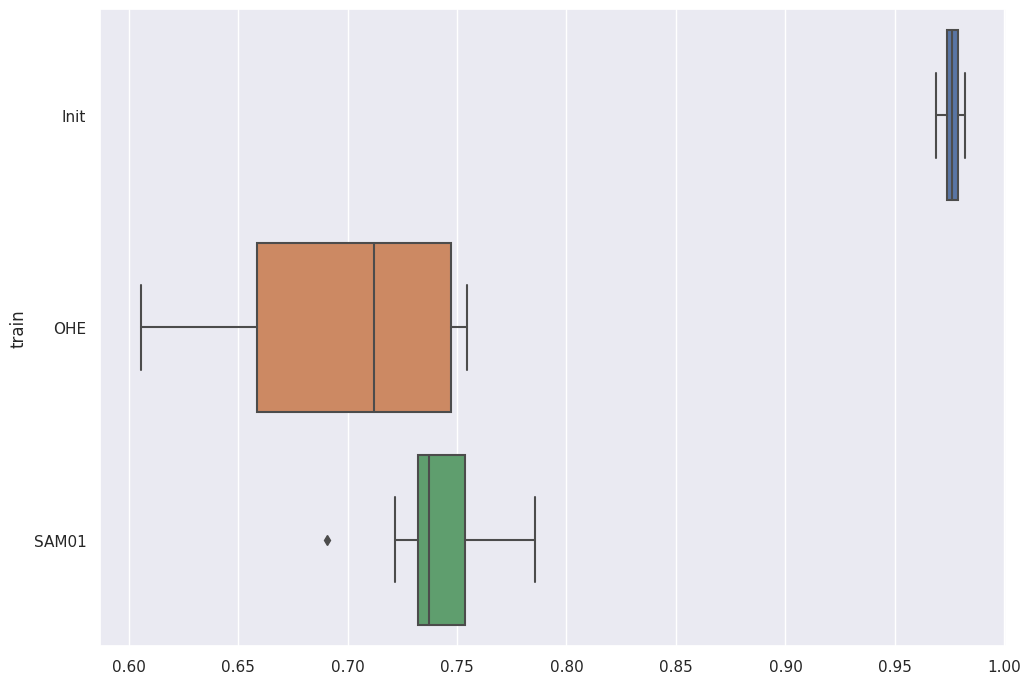}
         \caption{Multi-Class Accuracy}
         \label{Boxplot_ACC}
\end{figure}

As shown in figure \ref{Boxplot_ACC}, the reconstructed data from an autoencoder with the balanced MSE provides a better prediction than the autoencoder with the standard MSE (higher is better).

\subsection{Regression}
We test our approach on four datasets with a continuous target variable $Y$ i.e., in a regression framework. Three datasets are from insurance:  "Pricing game", "Telematics" and "freMTPL" and another is more classical: "Student". We measure the Mean Absolute Error (MAE) indicator (below) and MSE/RMSE (in Appendix \ref{Applications_Details}) on the $Y$ test prediction from the reconstructed data. We analyze also the correlation, MC indicator, in the reconstructed data, as an illustration. More results for each dataset are available in Appendix \ref{Prediction_Regression_Details}.

\begin{table}[h!]
\centering
\begin{tabular}{|p{1.5cm}||p{1cm}|p{1cm}||p{1cm}|p{1cm}|} 
\hline
Train & MAE (mean) & MAE (std) & MC (mean) & MC (std)\\ [0.5ex] 
\hline\hline
Baseline & 108557.2 & 28115.3&  0.0 & 0.0  \\ 
\hline
MSE & 111314.0 & 27555.0  & 8041.3 & 237.3  \\
\hline
bMSE &  \textbf{109072.2} & 28161.8  & \textbf{7356.8} & 172.7  \\
\hline
\end{tabular}
\caption{freMTPL Results: $Y$ test prediction metrics}
     \label{Prediction_freMTPL1}
\end{table}

\begin{table}[h!]
\centering
\begin{tabular}{|p{1.5cm}||p{1cm}|p{1cm}||p{1cm}|p{1cm}|} 
\hline
Train & MAE (mean) & MAE (std) & MC (mean) & MC (std)\\ [0.5ex] 
\hline\hline
Baseline &50885.6 & 2111.1	 &  0.0 & 0.0  \\ 
\hline
MSE & 59061.5 & 6496.5	  & 911.2	 &112.2	\\
\hline
bMSE & \textbf{54145.3} & 3774.4   &  \textbf{596.4	} & 59.6 \\
\hline
\end{tabular}
\caption{Pricing game Results: $Y$ test prediction metrics}
     \label{Prediction_PricingGame1}
\end{table}

\begin{table}[h!]
\centering
\begin{tabular}{|p{1.5cm}||p{1cm}|p{1cm}||p{1cm}|p{1cm}|} 
\hline
Train & MAE (mean) & MAE (std) & MC (mean) & MC (std)\\ [0.5ex] 
\hline\hline
Baseline & 170134.1  & 10108.7 &  0.0 & 0.0  \\ 
\hline
MSE & 226013.2  & 26731.7  & 25134.4 & 633.1 \\
\hline
bMSE &  \textbf{206478.3} & 14219.0  & \textbf{24090.9} & 534.5\\
\hline
\end{tabular}
\caption{Telematics Results: $Y$ test prediction metrics}
     \label{Prediction_Telematics1}
\end{table}

\begin{table}[h!]
\centering
\begin{tabular}{|p{1.5cm}||p{1cm}|p{1cm}||p{1cm}|p{1cm}|} 
\hline
Train & MAE (mean) & MAE (std) & MC (mean) & MC (std)\\ [0.5ex] 
\hline\hline
Baseline &  202.3 & 32.6 &  0.0 & 0.0  \\ 
\hline
MSE &  212.3 & 28.1  & 3925.4 & 99.3 \\
\hline
bMSE &  \textbf{203.6} & 22.1  & \textbf{3876.5} & 79.6\\
\hline
\end{tabular}
\caption{Student Results: $Y$ test prediction metrics}
     \label{Prediction_Student1}
\end{table}

As observed in Tables \ref{Prediction_freMTPL1}, \ref{Prediction_PricingGame1}, \ref{Prediction_Telematics1}, \ref{Prediction_Student1}, we observe that the metrics are better for balanced MSE: MAE (smaller is better) and correlation difference (smaller is better).

\section{Experiments in Unsupervised Learning} 
\label{XP_unsupervised}
We propose to compare the two loss functions on real datasets in the context of unsupervised learning. We thus work with two previous datasets, "Telematics" and "Obesity", presented in the appendix \ref{Applications_Details}. To avoid sampling effects and obtain a distribution of errors we ran 10 train-test datasets (k-fold analysis). The autoencoders are trained with 1000 epochs.

\subsection{Dimensionality Reduction}
We test the balanced MSE for dimensionality reduction with the "Telematics" dataset. To compare the performances, we use MSE and MAE measures on the prediction of $Y$ from a test set obtained from the latent space. To avoid getting results dependent on some learning algorithms we use 10 models from the \textit{autoML of the H2O package}.

\begin{figure}[ht]
     \centering
     \begin{subfigure}[b]{0.25\textwidth}
         \centering
         \includegraphics[width=\textwidth]{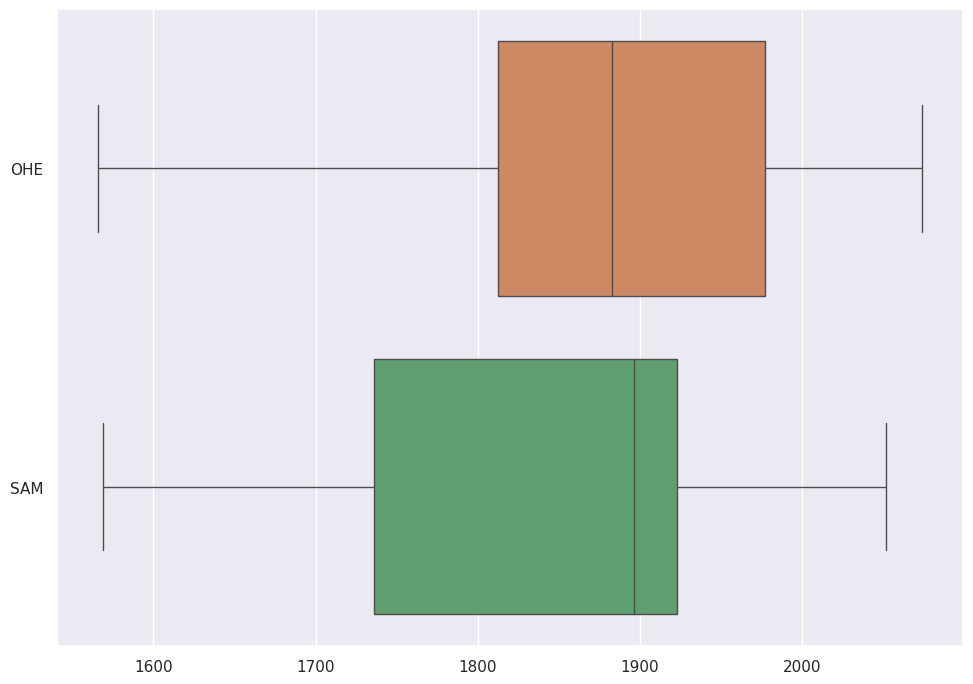}
         \caption{MAE}
         \label{MAE_DimRed}
     \end{subfigure}
     \caption{$Y$ test set Prediction from latent space. Comparison of the balanced MSE (green) vs standard MSE (orange)}
     \label{DimRed}
\end{figure}

In Figure \ref{DimRed}, we observe that the prediction realized from the latent space of the autoencoder with the balanced MSE is better than with the standard MSE.

\subsection{Clustering}
We test the balanced MSE for clustering with the "Obesity" dataset. To measure the performance of clustering, we use the silhouette coefficient, a classical metric to evaluate clustering. We use two clustering approaches on the latent space: A K-Means algorithm and a Gaussian Mixture Model algorithm.

\begin{figure}[ht]
     \centering
     \begin{subfigure}[b]{0.22\textwidth}
         \centering
         \includegraphics[width=\textwidth]{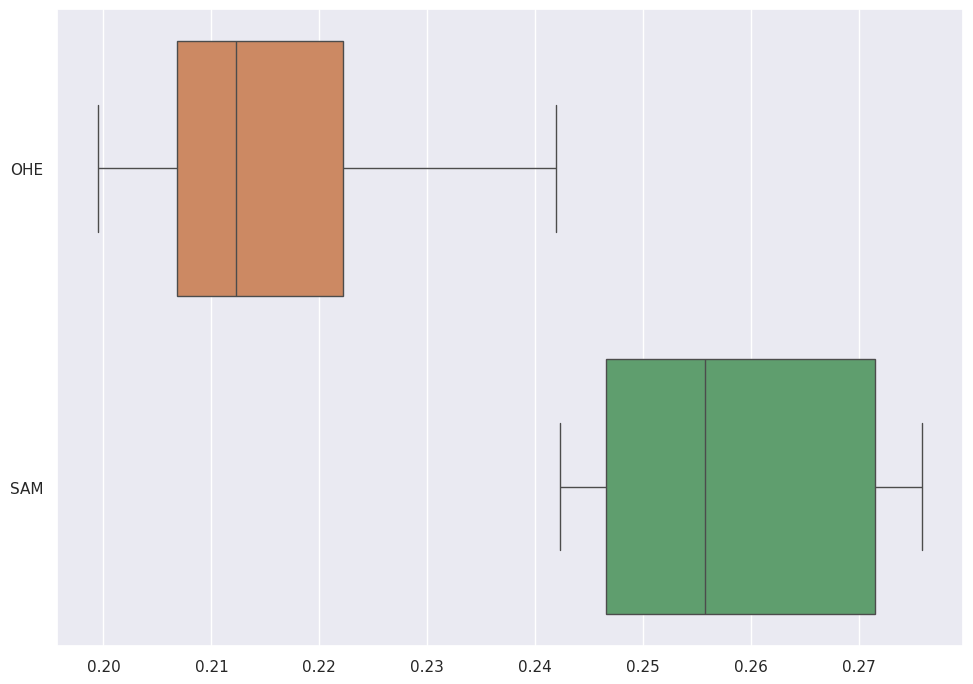}
         \caption{K-Means}
         \label{Silhouette_KM}
     \end{subfigure}
     \begin{subfigure}[b]{0.22\textwidth}
         \centering
         \includegraphics[width=\textwidth]{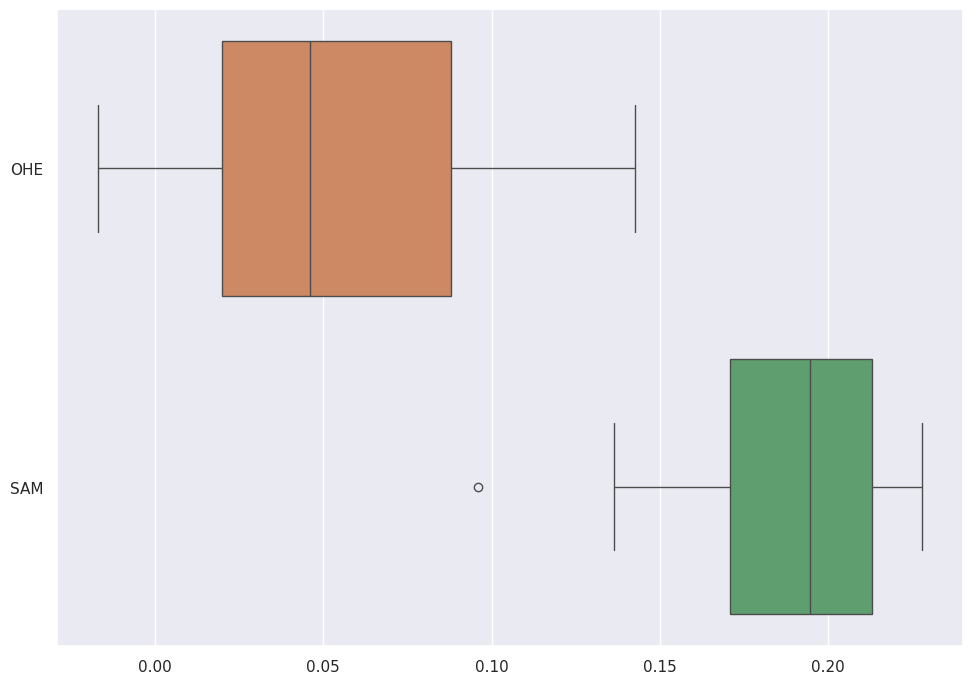}
         \caption{GMM}
         \label{Silhouette_GMM}
     \end{subfigure}
     \caption{Silhouette score for K-Means and GMM clustering. Comparison of the balanced MSE (green) vs standard MSE (orange)}
     \label{Silhouette}
\end{figure}

In Figure \ref{Silhouette}, we observe that the clustering realized from the latent space of the autoencoder with the balanced MSE is better than with the standard MSE (a higher silhouette coefficient is better).

\section{Experiments in Generative context} 
\label{XP_generative}
Finally, the balanced MSE can also be compared in a generative context through Variational Autoencoders (VAE). Variational Autoencoders (VAEs) are a type of generative model designed to learn latent representations of data in a probabilistic framework, allowing for an efficient generation of new samples. As VAEs are a type of autoencoder, we can compare a VAE constructed using the balanced MSE instead of the standard MSE.
To conduct this experiment, we suggest building a single VAE, training it with standard MSE, and then with balanced MSE. Subsequently, we generate a synthetic sample from the VAEs. Finally, similar to comparing AEs, we will use the generated samples to train multiple models (autoML) for predicting on a test set. Unlike the illustration where only features were generated, here we generate both features and the target variable $Y$.
The variational autoencoder architecture is described in Appendix \ref{VAE_archi}. We compared the results in the same "imbalanced" context for numerical illustration. To avoid random outcomes, we perform 20 runs to compare the results. The autoencoders are trained with 1000 epochs.

As observed in Figure  \ref{Results_VAE}, the VAE constructed using balanced MSE better reconstructs the data (the MSEM metric is lower). As a result, the generated data will be closer to the real data, improving the prediction on the test set since the MSE metric is lower. Note that we do not compare the VAE with balanced MSE to other generative models because our objective is to compare loss functions and not models.

\begin{figure}[ht]
     \centering
     \begin{subfigure}[b]{0.22\textwidth}
         \centering
         \includegraphics[width=\textwidth]{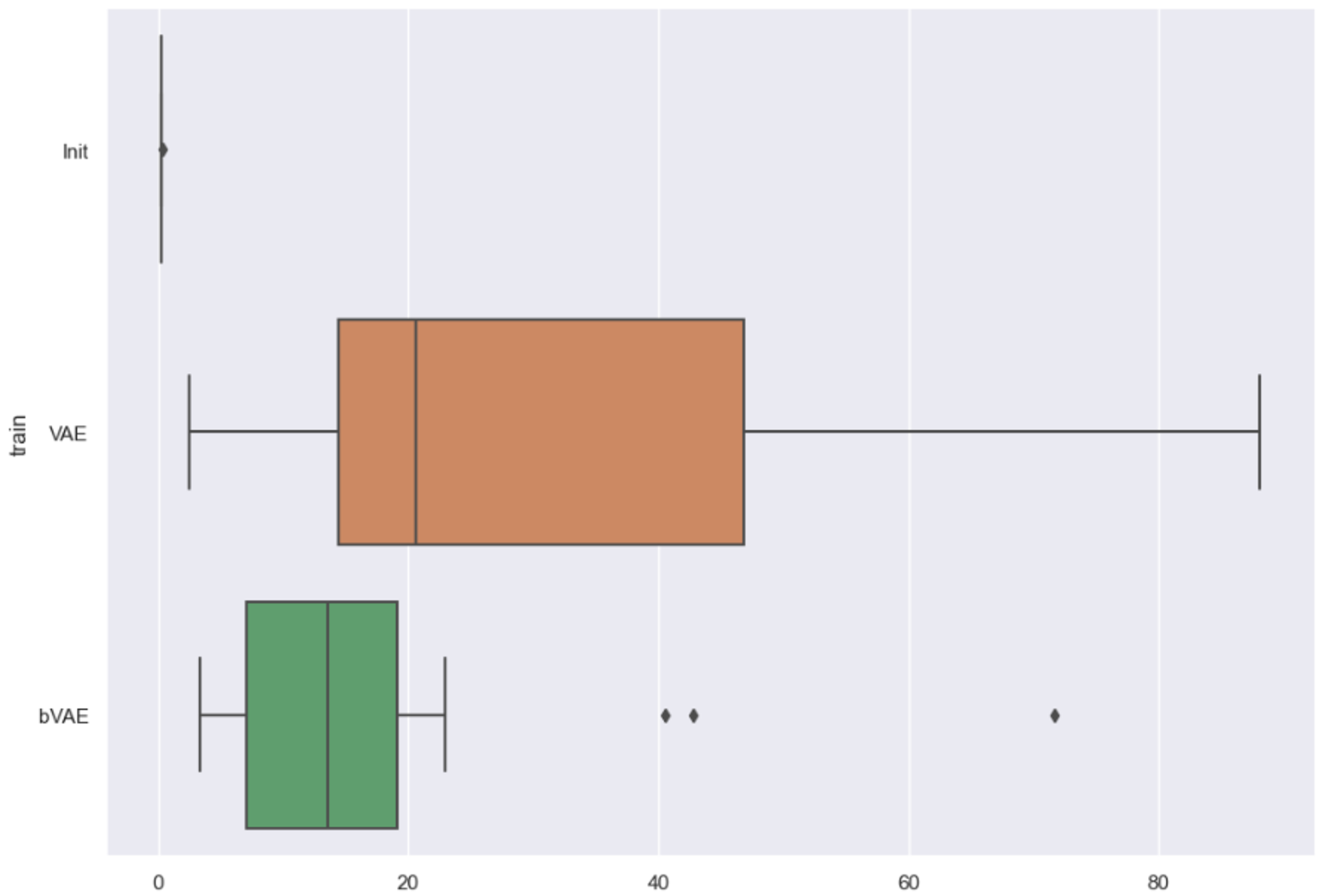}
         \caption{MSE}
         \label{MSE}
     \end{subfigure}
     \begin{subfigure}[b]{0.22\textwidth}
         \centering
         \includegraphics[width=\textwidth]{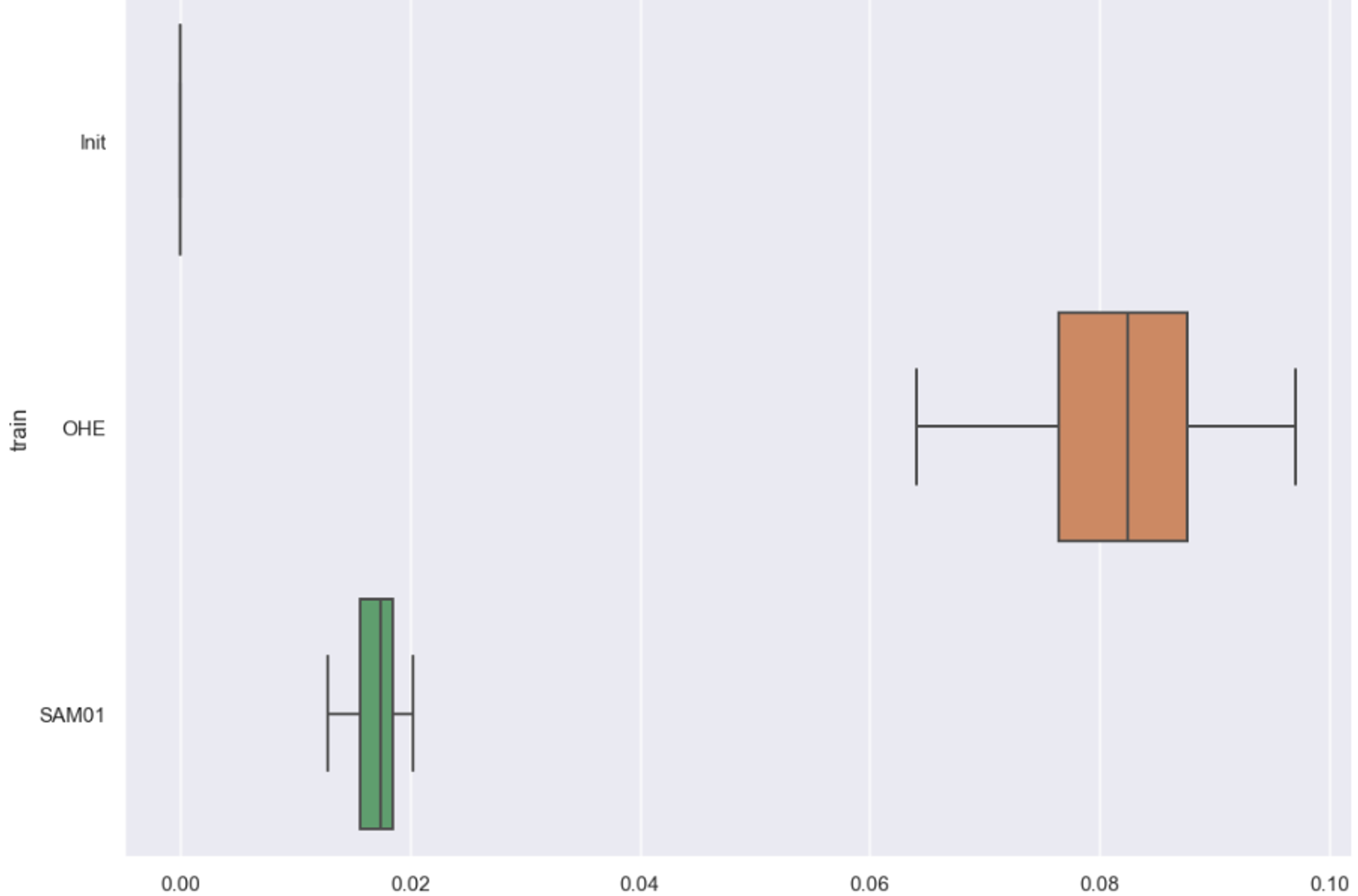}
         \caption{MSEM}
         \label{MSEM_VAE}
     \end{subfigure}
     \caption{Illustration Results: $Y$ test prediction metrics. Comparison of the balanced MSE (green) vs standard MSE (orange) and inputs (blue)}
     \label{Results_VAE}
\end{figure}

\section{Discussion}

This paper illustrates the issue associated with using the MSE loss function in an autoencoder trained to reconstruct data in an imbalanced context. 
We demonstrated that, unlike images, self-supervised learning on imbalanced datasets introduces a learning bias due to the unequal influence of variables and categories in a classical loss function.

To overcome this problem we introduce a novel loss function designed to rebalance the influence of categories and variables, optimizing the learning process. This new loss function shows better results when the learning process is insufficient (either due to complexity or iterations) and shows similar results otherwise.

As shown in imbalanced regression works, where optimizing MSE is not effective, we could extend this work to quantitative variables. Finally, as the balanced MSE has the particularity of making the different modalities equitable, there might be cases where this is not judicious due to too many modalities or the presence of anomalies. A user could then combine the balanced MSE with the standard MSE such as: $$\mathcal{L}:= \alpha MSE + (1-\alpha) BalMSE$$ with $\alpha$ being a hyperparameter corresponding to the weight he wants to assign to the rebalancing. 

\newpage

\bibliographystyle{apalike}  
\bibliography{Biblio}

\begin{thebibliography}{}

\bibitem[Arafa et~al., 2023]{arafa2023rn}
Arafa, A., El-Fishawy, N., Badawy, M., and Radad, M. (2023).
\newblock Rn-autoencoder: Reduced noise autoencoder for classifying imbalanced cancer genomic data.
\newblock {\em Journal of Biological Engineering}, 17(1):7.

\bibitem[Borisov et~al., 2022]{borisov2022deep}
Borisov, V., Leemann, T., Se{\ss}ler, K., Haug, J., Pawelczyk, M., and Kasneci, G. (2022).
\newblock Deep neural networks and tabular data: A survey.
\newblock {\em IEEE Transactions on Neural Networks and Learning Systems}.

\bibitem[Branco et~al., 2016]{branco2016survey}
Branco, P., Torgo, L., and Ribeiro, R.~P. (2016).
\newblock A survey of predictive modeling on imbalanced domains.
\newblock {\em ACM computing surveys (CSUR)}, 49(2):1--50.

\bibitem[Branco et~al., 2017]{branco2017smogn}
Branco, P., Torgo, L., and Ribeiro, R.~P. (2017).
\newblock Smogn: a pre-processing approach for imbalanced regression.
\newblock In {\em First international workshop on learning with imbalanced domains: Theory and applications}, pages 36--50. PMLR.

\bibitem[Branco et~al., 2019]{branco2019pre}
Branco, P., Torgo, L., and Ribeiro, R.~P. (2019).
\newblock Pre-processing approaches for imbalanced distributions in regression.
\newblock {\em Neurocomputing}, 343:76--99.

\bibitem[Buda et~al., 2018]{buda2018}
Buda, M., Maki, A., and Mazurowski, M.~A. (2018).
\newblock A systematic study of the class imbalance problem in convolutional neural networks.
\newblock {\em Neural Networks}, 106:249–259.

\bibitem[Camacho et~al., 2022]{camacho2022geometric}
Camacho, L., Douzas, G., and Bacao, F. (2022).
\newblock Geometric smote for regression.
\newblock {\em Expert Systems with Applications}, page 116387.

\bibitem[Cao et~al., 2019]{cao2019}
Cao, K., Wei, C., Gaidon, A., Arechiga, N., and Ma, T. (2019).
\newblock Learning imbalanced datasets with label-distribution-aware margin loss.
\newblock {\em Advances in neural information processing systems}, 32.

\bibitem[Chen and Li, 2021]{chen2021self}
Chen, W. and Li, K. (2021).
\newblock Self-supervised learning for medical image classification using imbalanced training data.
\newblock In {\em International Symposium on Intelligence Computation and Applications}, pages 242--252. Springer.

\bibitem[Chen et~al., 2018]{chen2018autoencoder}
Chen, Z., Yeo, C.~K., Lee, B.~S., and Lau, C.~T. (2018).
\newblock Autoencoder-based network anomaly detection.
\newblock In {\em 2018 Wireless telecommunications symposium (WTS)}, pages 1--5. IEEE.

\bibitem[Cortez and Silva, 2008]{cortez2008using}
Cortez, P. and Silva, A. M.~G. (2008).
\newblock Using data mining to predict secondary school student performance.

\bibitem[Cui et~al., 2019]{cui2019}
Cui, Y., Jia, M., Lin, T.-Y., Song, Y., and Belongie, S. (2019).
\newblock Class-balanced loss based on effective number of samples.
\newblock In {\em Proceedings of the IEEE/CVF conference on computer vision and pattern recognition}, pages 9268--9277.

\bibitem[Darabi et~al., 2021]{darabi2021contrastive}
Darabi, S., Fazeli, S., Pazoki, A., Sankararaman, S., and Sarrafzadeh, M. (2021).
\newblock Contrastive mixup: Self-and semi-supervised learning for tabular domain.
\newblock {\em arXiv preprint arXiv:2108.12296}.

\bibitem[Delong and Kozak, 2023]{delong2023use}
Delong, {\L}. and Kozak, A. (2023).
\newblock The use of autoencoders for training neural networks with mixed categorical and numerical features.
\newblock {\em ASTIN Bulletin: The Journal of the IAA}, 53(2):213--232.

\bibitem[Ding et~al., 2022]{ding2022deep}
Ding, Y., Jia, M., Zhuang, J., and Ding, P. (2022).
\newblock Deep imbalanced regression using cost-sensitive learning and deep feature transfer for bearing remaining useful life estimation.
\newblock {\em Applied Soft Computing}, 127:109271.

\bibitem[Eduardo et~al., 2020]{eduardo2020robust}
Eduardo, S., Naz{\'a}bal, A., Williams, C.~K., and Sutton, C. (2020).
\newblock Robust variational autoencoders for outlier detection and repair of mixed-type data.
\newblock In {\em International Conference on Artificial Intelligence and Statistics}, pages 4056--4066. PMLR.

\bibitem[Elbatel et~al., 2023]{elbatel2023federated}
Elbatel, M., Wang, H., Mart, R., Fu, H., and Li, X. (2023).
\newblock Federated model aggregation via self-supervised priors for highly imbalanced medical image classification.
\newblock In {\em International Conference on Medical Image Computing and Computer-Assisted Intervention}, pages 334--346. Springer.

\bibitem[Fern{\'a}ndez et~al., 2018a]{fernandez2018learning}
Fern{\'a}ndez, A., Garc{\'\i}a, S., Galar, M., Prati, R.~C., Krawczyk, B., and Herrera, F. (2018a).
\newblock {\em Learning from imbalanced data sets}, volume~10.
\newblock Springer.

\bibitem[Fern{\'a}ndez et~al., 2018b]{fernandez2018smote}
Fern{\'a}ndez, A., Garcia, S., Herrera, F., and Chawla, N.~V. (2018b).
\newblock Smote for learning from imbalanced data: progress and challenges, marking the 15-year anniversary.
\newblock {\em Journal of artificial intelligence research}, 61:863--905.

\bibitem[Gong et~al., 2022]{gong2022ranksim}
Gong, Y., Mori, G., and Tung, F. (2022).
\newblock Ranksim: Ranking similarity regularization for deep imbalanced regression.
\newblock {\em arXiv preprint arXiv:2205.15236}.

\bibitem[Hajiramezanali et~al., 2022]{hajiramezanali2022stab}
Hajiramezanali, E., Diamant, N.~L., Scalia, G., and Shen, M.~W. (2022).
\newblock Stab: Self-supervised learning for tabular data.
\newblock In {\em NeurIPS 2022 First Table Representation Workshop}.

\bibitem[Hancock and Khoshgoftaar, 2020]{hancock2020survey}
Hancock, J.~T. and Khoshgoftaar, T.~M. (2020).
\newblock Survey on categorical data for neural networks.
\newblock {\em Journal of Big Data}, 7(1):1--41.

\bibitem[Hou et~al., 2022]{hou2022contrastive}
Hou, R., Chen, J., Feng, Y., Liu, S., He, S., and Zhou, Z. (2022).
\newblock Contrastive-weighted self-supervised model for long-tailed data classification with vision transformer augmented.
\newblock {\em Mechanical Systems and Signal Processing}, 177:109174.

\bibitem[Huang et~al., 2016]{huang2016}
Huang, C., Li, Y., Loy, C.~C., and Tang, X. (2016).
\newblock Learning deep representation for imbalanced classification.
\newblock In {\em Proceedings of the IEEE conference on computer vision and pattern recognition}, pages 5375--5384.

\bibitem[Jaiswal et~al., 2020]{jaiswal2020survey}
Jaiswal, A., Babu, A.~R., Zadeh, M.~Z., Banerjee, D., and Makedon, F. (2020).
\newblock A survey on contrastive self-supervised learning.
\newblock {\em Technologies}, 9(1):2.

\bibitem[King and Zeng, 2001]{king2001logistic}
King, G. and Zeng, L. (2001).
\newblock Logistic regression in rare events data.
\newblock {\em Political analysis}, 9(2):137--163.

\bibitem[Krawczyk, 2016]{krawczyk2016learning}
Krawczyk, B. (2016).
\newblock Learning from imbalanced data: open challenges and future directions.
\newblock {\em Progress in Artificial Intelligence}, 5(4):221--232.

\bibitem[LeDell and Poirier, 2020]{H2OAutoML20}
LeDell, E. and Poirier, S. (2020).
\newblock {H2O} {A}uto{ML}: Scalable automatic machine learning.
\newblock {\em 7th ICML Workshop on Automated Machine Learning (AutoML)}.

\bibitem[Li et~al., 2021]{li2021imbalance}
Li, H., Xue, F.-F., Chaitanya, K., Luo, S., Ezhov, I., Wiestler, B., Zhang, J., and Menze, B. (2021).
\newblock Imbalance-aware self-supervised learning for 3d radiomic representations.
\newblock In {\em Medical Image Computing and Computer Assisted Intervention--MICCAI 2021: 24th International Conference, Strasbourg, France, September 27--October 1, 2021, Proceedings, Part II 24}, pages 36--46. Springer.

\bibitem[Liu et~al., 2021a]{liu2021self}
Liu, H., HaoChen, J.~Z., Gaidon, A., and Ma, T. (2021a).
\newblock Self-supervised learning is more robust to dataset imbalance.
\newblock {\em arXiv preprint arXiv:2110.05025}.

\bibitem[Liu et~al., 2021b]{liu2021SSL}
Liu, X., Zhang, F., Hou, Z., Mian, L., Wang, Z., Zhang, J., and Tang, J. (2021b).
\newblock Self-supervised learning: Generative or contrastive.
\newblock {\em IEEE transactions on knowledge and data engineering}, 35(1):857--876.

\bibitem[Ma et~al., 2020]{ma2020vaem}
Ma, C., Tschiatschek, S., Turner, R., Hern{\'a}ndez-Lobato, J.~M., and Zhang, C. (2020).
\newblock Vaem: a deep generative model for heterogeneous mixed type data.
\newblock {\em Advances in Neural Information Processing Systems}, 33:11237--11247.

\bibitem[Mosley, 2013]{Mosley2013ABA}
Mosley, L. S.~D. (2013).
\newblock {\em A balanced approach to the multi-class imbalance problem}.
\newblock PhD thesis, Iowa State University.

\bibitem[Pag{\`e}s, 2004]{pages2004analyse}
Pag{\`e}s, J. (2004).
\newblock Analyse factorielle de donnees mixtes: principe et exemple d’application.
\newblock {\em Revue de statistique appliqu{\'e}e}, 52(4):93--111.

\bibitem[Palechor and de~la Hoz~Manotas, 2019]{palechor2019dataset}
Palechor, F.~M. and de~la Hoz~Manotas, A. (2019).
\newblock Dataset for estimation of obesity levels based on eating habits and physical condition in individuals from colombia, peru and mexico.
\newblock {\em Data in brief}, 25:104344.

\bibitem[Potdar et~al., 2017]{potdar2017comparative}
Potdar, K., Pardawala, T.~S., and Pai, C.~D. (2017).
\newblock A comparative study of categorical variable encoding techniques for neural network classifiers.
\newblock {\em International journal of computer applications}, 175(4):7--9.

\bibitem[Ren et~al., 2022]{ren2022balanced}
Ren, J., Zhang, M., Yu, C., and Liu, Z. (2022).
\newblock Balanced mse for imbalanced visual regression.
\newblock In {\em Proceedings of the IEEE/CVF Conference on Computer Vision and Pattern Recognition}, pages 7926--7935.

\bibitem[Ribeiro and Moniz, 2020]{ribeiro2020imbalanced}
Ribeiro, R.~P. and Moniz, N. (2020).
\newblock Imbalanced regression and extreme value prediction.
\newblock {\em Machine Learning}, 109:1803--1835.

\bibitem[Sen et~al., 2023]{sen2023dealing}
Sen, S., Singh, K.~P., and Chakraborty, P. (2023).
\newblock Dealing with imbalanced regression problem for large dataset using scalable artificial neural network.
\newblock {\em New Astronomy}, 99:101959.

\bibitem[Shwartz-Ziv and Armon, 2022]{shwartz2022tabular}
Shwartz-Ziv, R. and Armon, A. (2022).
\newblock Tabular data: Deep learning is not all you need.
\newblock {\em Information Fusion}, 81:84--90.

\bibitem[So et~al., 2021]{so2021synthetic}
So, B., Boucher, J.-P., and Valdez, E.~A. (2021).
\newblock Synthetic dataset generation of driver telematics.
\newblock {\em Risks}, 9(4):58.

\bibitem[Song et~al., 2022]{song2022distsmogn}
Song, X.~Y., Dao, N., and Branco, P. (2022).
\newblock Distsmogn: Distributed smogn for imbalanced regression problems.
\newblock In {\em Fourth International Workshop on Learning with Imbalanced Domains: Theory and Applications}, pages 38--52. PMLR.

\bibitem[Stocksieker et~al., 2023]{stocksieker2023data}
Stocksieker, S., Pommeret, D., and Charpentier, A. (2023).
\newblock Data augmentation for imbalanced regression.
\newblock In {\em International Conference on Artificial Intelligence and Statistics}, pages 7774--7799. PMLR.

\bibitem[Timofeev et~al., 2021]{timofeev2021self}
Timofeev, A., Chrysos, G.~G., and Cevher, V. (2021).
\newblock Self-supervised neural architecture search for imbalanced datasets.
\newblock {\em arXiv preprint arXiv:2109.08580}.

\bibitem[Tomescu et~al., 2021]{tomescu2021study}
Tomescu, V.-I., Czibula, G., and Ni{\c{t}}ic{\u{a}}, {\c{S}}. (2021).
\newblock A study on using deep autoencoders for imbalanced binary classification.
\newblock {\em Procedia Computer Science}, 192:119--128.

\bibitem[Torgo and Ribeiro, 2007]{torgo2007utility}
Torgo, L. and Ribeiro, R. (2007).
\newblock Utility-based regression.
\newblock In {\em Knowledge Discovery in Databases: PKDD 2007: 11th European Conference on Principles and Practice of Knowledge Discovery in Databases, Warsaw, Poland, September 17-21, 2007. Proceedings 11}, pages 597--604. Springer.

\bibitem[Torgo et~al., 2013]{torgo2013smote}
Torgo, L., Ribeiro, R.~P., Pfahringer, B., and Branco, P. (2013).
\newblock Smote for regression.
\newblock In {\em Portuguese conference on artificial intelligence}, pages 378--389. Springer.

\bibitem[Ucar et~al., 2021]{ucar2021subtab}
Ucar, T., Hajiramezanali, E., and Edwards, L. (2021).
\newblock Subtab: Subsetting features of tabular data for self-supervised representation learning.
\newblock {\em Advances in Neural Information Processing Systems}, 34:18853--18865.

\bibitem[Vardhan and Kok, 2020]{vardhan2020synthetic}
Vardhan, L. V.~H. and Kok, S. (2020).
\newblock Synthetic tabular data generation with oblivious variational autoencoders: alleviating the paucity of personal tabular data for open research.
\newblock In {\em Proceedings of the 37th International conference on machine learning, ICML HSYS Workshop 2020}.

\bibitem[Xu et~al., 2019]{xu2019modeling}
Xu, L., Skoularidou, M., Cuesta-Infante, A., and Veeramachaneni, K. (2019).
\newblock Modeling tabular data using conditional gan.
\newblock {\em Advances in neural information processing systems}, 32.

\bibitem[Yamanaka et~al., 2019]{yamanaka2019autoencoding}
Yamanaka, Y., Iwata, T., Takahashi, H., Yamada, M., and Kanai, S. (2019).
\newblock Autoencoding binary classifiers for supervised anomaly detection.
\newblock In {\em PRICAI 2019: Trends in Artificial Intelligence: 16th Pacific Rim International Conference on Artificial Intelligence, Cuvu, Yanuca Island, Fiji, August 26--30, 2019, Proceedings, Part II 16}, pages 647--659. Springer.

\bibitem[Yang and Xu, 2020a]{yang2020}
Yang, Y. and Xu, Z. (2020a).
\newblock Rethinking the value of labels for improving class-imbalanced learning.
\newblock {\em NeurIPS}.

\bibitem[Yang and Xu, 2020b]{yang2020rethinking}
Yang, Y. and Xu, Z. (2020b).
\newblock Rethinking the value of labels for improving class-imbalanced learning.
\newblock {\em Advances in neural information processing systems}, 33:19290--19301.

\bibitem[Yang et~al., 2021]{yang2021}
Yang, Y., Zha, K., Chen, Y., Wang, H., and Katabi, D. (2021).
\newblock Delving into deep imbalanced regression.
\newblock In {\em International Conference on Machine Learning}, pages 11842--11851. PMLR.

\bibitem[Yoon et~al., 2020]{yoon2020vime}
Yoon, J., Zhang, Y., Jordon, J., and van~der Schaar, M. (2020).
\newblock Vime: Extending the success of self-and semi-supervised learning to tabular domain.
\newblock {\em Advances in Neural Information Processing Systems}, 33:11033--11043.

\bibitem[Zhang and Bom, 2021]{zhang2021auto}
Zhang, C. and Bom, S. (2021).
\newblock Auto-encoder based model for high-dimensional imbalanced industrial data.
\newblock In {\em International Conference on Neural Information Processing}, pages 265--273. Springer.

\bibitem[Zhang et~al., 2023]{zhang2023mixed}
Zhang, H., Zhang, J., Srinivasan, B., Shen, Z., Qin, X., Faloutsos, C., Rangwala, H., and Karypis, G. (2023).
\newblock Mixed-type tabular data synthesis with score-based diffusion in latent space.
\newblock {\em arXiv preprint arXiv:2310.09656}.

\end{thebibliography}


\newpage
\appendix

\subsection{Standard MSE: a First Intuition} \label{sMSE_proof}
\begin{rem}[Imbalanced contribution of categories]
Mechanically, the standard MSE favors the reconstruction of majority values rather than rare values. Indeed, the contribution to the decrease in error, SSE (or MSE), is an increasing function of the number of observations.  
\end{rem}
\begin{proof}
Determine the contribution to the error reduction of each category. Let $k_q$ be a category of a categorical variable $q$ with a size of $n_{k_q}$. We have:
\begin{align*}
    0 \leq \sum_{i=1}^n \epsilon_{ik_q} = \underset{\substack{i=1 \\ x_{ik_q}=1}}{\sum^n} \epsilon_{ik_q} + \underset{\substack{i=1 \\ x_{ik_q}=0}}{\sum^n} \epsilon_{ik_q} \leq n
\end{align*}
with: 
\begin{align*}
    0 \leq \underset{\substack{i=1 \\ x_{ik_q}=1}}{\sum^n} \epsilon_{ik_q} \leq n_{k_q} \, ; \,
    0 \leq \underset{\substack{i=1 \\ x_{ik_q}=0}}{\sum^n} \epsilon_{ik_q} \leq n-n_{k_q}
\end{align*}
Typically, if the $\w x_{ik_q}$ are initially zero, then finding the correct values of $k_q$ (for $x_{ik_q}=1$) will decrease the SSE by $n_{k_q}$.
\end{proof}

\begin{rem}[Imbalanced contribution of categorical variables]
The contribution to the decrease in error is increasing with the number of categories. By using an OHE, each category of a categorical variable becomes a separate indicator variable. A variable with many categories will consequently generate a large number of indicator variables. As each encoded variable has the same weight in SSE, a categorical variable with many values will have more influence.
\end{rem}
\begin{proof}
\begin{align*}
    0 \leq \sum_{i=1}^n \epsilon_{iq}^2 = \sum_{i=1}^n \sum_{k_q \in K_q}   \epsilon_{ik_q}^2  \leq \sum_{k_q \in K_q} n = p_q \times n
\end{align*}
with $p_q$ the number of categories of the categorical variable $q$.
\end{proof}

\begin{rem}[Imbalanced contribution of categorical variables versus numerical variables]
The contribution to the reduction of error for a categorical variable, as a function of the number of categories, is generally more substantial than the contribution for a numeric variable.
\end{rem}
\begin{proof}
Given that numeric variables have a variance or range of 1, we can assume that:
\begin{align*}
    0 \leq \epsilon_{ik}^2  \leq 1 \\
    0 \leq \epsilon_{ik_q}^2  \leq 1 
\end{align*}
Then we have:
\begin{align*}
    0 \leq \sum_{i=1}^n \epsilon_{ik}^2  \leq n \\
    0 \leq \sum_{i=1}^n \epsilon_{iq}^2  \leq p_q \times n
\end{align*}

\end{proof}

\begin{rem}[Double error on categorical variable]
Another observation can be made regarding an autoencoder based on a one-hot encoder: the double counting of errors made on categories.
We suppose (i) $\sum_{j_q \in J_q} x_{ij_q}=1$ and (ii) $\sum_{j_q \in J_q} \w x_{ij_q}=1$.\\
Let $k_q$ be a modality of a categorical variable $x_q$. If $\epsilon_{ik_q}=1$ then it exists $l_q$ such as $\epsilon_{il_q}=1$ and thus $\sum_{j_q \in J_q}  \epsilon_{ij_q}^2 \in \{0;2\}$
\end{rem}
\begin{proof}
$\epsilon_{ik_q}=1 \Leftrightarrow $ case 1: $\w x_{ik_q}=1, x_{ik_q}=0$ or case 2: $\w x_{ik_q}=0, x_{ik_q}=1$\\
Considering case 1, the demonstration is analogous for case 2. From assumption (i) and (ii), we have:  
$\underset{\substack{j_q \in J_q \\ j_q \ne k_q}}{\sum}\w x_{ij_q} = 0$ and $\underset{\substack{j_q \in J_q \\ j_q \ne k_q}}{\sum} x_{ij_q} = 1$ so $\underset{\substack{j_q \in J_q \\ j_q \ne k_q}}{\sum} (x_{ij_q}-\w x_{ij_q})^2 = \underset{\substack{j_q \in J_q \\ j_q \ne k_q}}{\sum} \epsilon_{ij_q}^2  = 1$\\
$\underset{\substack{j_q \in J_q}}{\sum} \epsilon_{ij_q}^2 = \underset{\substack{j_q \in J_q \\ j_q \ne k_q}}{\sum} \epsilon_{ij_q}^2 + \epsilon_{ik_q}^2 = 2$
\end{proof}


\subsection{Illustration}

\subsubsection{Dataset Design} \label{datasetDesign}

We consider a sample of size $n=2000$ composed of 3 quantitative, Gaussian ($\mathcal{N}()$ below), features $(X_1, X_2, X_3)$ and 5 categorical, Multinomial  ($\mathcal{M}()$ below), features $(Q_1, Q_2, Q_3, Q_3, Q_5)$ defined as follows:
{\scriptsize
\begin{itemize}
    \item $X_1 \sim \mathcal{N}(0,1)$
    \item $X_2 \sim \mathcal{N}(10,2)$
    \item $X_3 \sim \mathcal{N}(10,2)$
    \item $Q_1 : (Q_1.70, Q_1.30) \sim \mathcal{M}(70\%, 30\%)$
    \item $Q_2 : (Q_2.10, Q_2.20, Q_2.29, Q_2.31, Q_2.02, Q_2.08) \sim \mathcal{M}(10\%, 20\%, 29\%, 31\%, 10\%, 02\%, 08\%)$
    \item $Q_3 : (Q_3.60, Q_3.20, Q_3.17, Q_3.03) \sim \mathcal{M}(60\%, 20\%, 17\%, 03\%)$
    \item $Q_4 : (Q_4.10, Q_4.10, Q_4.10, Q_4.10, Q_4.10, Q_4.15, Q_4.05,Q_4.30)\sim \mathcal{M}(10\%,10\%,10\%,10\%,10\%, 15\%, 05\%, 30\%)$
    \item $Q_5 : (Q_5.25, Q_5.25, Q_5.10, Q_5.10, Q_5.05, Q_5.05, Q_5.05, Q_5.05, Q5_09, Q5_01)\sim \mathcal{M}(25\%,25\%,10\%,10\%,05\%,05\%,05\%,05\%,09\%,01\%)$
\end{itemize}
}%

\subsubsection{Context Design} \label{contextDesign}
\paragraph{Imbalanced context}
In this context, the target variable is explained by quantitative variables and minority modalities.
{\small
\begin{align*}
    \mu  = \mathbb{E}(Y|X)=& \alpha_1 X_1 + \alpha_2 X_2 + \alpha_3 X_3 +  \alpha_4 Q_1.30  + \alpha_5  Q_2.02 \\ & + \alpha_6 Q_3.03 + \alpha_7 Q_4.05 + \alpha_8 Q_5.01 + \alpha_9 Q_5.05
\end{align*}
}%

\paragraph{Balanced context}
In this context, the target variable is explained by quantitative variables and majority categories: 
{\small
\begin{align*}
    \mu  = \mathbb{E}(Y|X)=& \alpha_1 X_1 + \alpha_2 X_2 + \alpha_3 X_3 + \alpha_4 Q_1.70 + \alpha_5  Q_2.29 \\ &  + \alpha_6 Q_3.60 + \alpha_7 Q_4.30 + \alpha_8 Q_5.25 + \alpha_9 Q_5.10
\end{align*}
}%

\paragraph{Majority context}
In this context, the target variable is explained by majority categories: 
{\small
\begin{align*}
    \mu  = \mathbb{E}(Y|X)=& \alpha_4 Q_1.70 + \alpha_5  Q_2.29 + \alpha_6 Q_3.60 \\ & +  \alpha_7 Q_4.30  + \alpha_8 Q_5.25 + \alpha_9 Q_5.10
\end{align*}
}%
\\
\subsection{Architecture} 
\subsubsection{Autoencoder}
\label{AE_archi}
Let $p$ be the number of features of inputs (one hot encoded). Let $q$ defined as $int(p/10)$. Let $dim_z$ be the dimension of the latent space (hyperparameter).

The "vanilla" autoencoder is constructed as follows: 
\begin{itemize}
    \item an encoder $\phi$ consisting of:
    \begin{itemize}
        \item $Tanh(Linear(p,p-q))$
        \item $Tanh(Linear(p-q,p-2q))$
        \item $Tanh(Linear(p-2q,p-3q))$
        \item $Tanh(Linear(p-3q,dim_z)$
    \end{itemize}
    \item a decoder $\psi$ consisting of:
    \begin{itemize}
        \item $Tanh(Linear(dim_z,p-3q))$
        \item $Tanh(Linear(p-3q,p-2q))$
        \item $Tanh(Linear(p-2q,p-q))$
        \item $Tanh(Linear(p-q,p)$
    \end{itemize}
\end{itemize}

The parameters used are as follows:

\begin{itemize}
    \item $BatchSize = 128$
    \item $LearningRate = 10^{-4}$
    \item $Epochs = 1000, \, 2000, \, 3000$
    \item $dim_z = 10$
\end{itemize}

\subsubsection{Variational Autoencoder}
\label{VAE_archi}
In contrast to the "vanilla" autoencoder where $y$ is not reconstructed, the VAE is trained to reconstruct both the features $X$ and the target variable $y$. 
Let $p$ be the number of features of inputs (one hot encoded). Let $dim_{HL}$ and $dim_z$ be the dimension of the latent space and hidden layers respectively (hyperparameter).

The "vanilla" variational autoencoder is constructed as follows: 
\begin{itemize}
    \item an encoder $\phi$ consisting of: 
    \begin{itemize}
        \item $HL21(H1), HL22(H1)$ with:
        \item $H1 = Tanh(HL1(.,.))$ with:
        \item $HL1 = Linear(p,dim_{HL})$
        \item $HL21 = Linear(dim_{HL},dim_z)$
        \item $HL22 = Linear(dim_{HL},dim_z)$
    \end{itemize}
    \item a decoder $\psi$ consisting of:
    \begin{itemize}
        \item $HL41(H3), HL42(H3)$ with:
        \item $H3 = Tanh(HL3(.,.))$ with:
        \item $HL3 = Linear(dim_z,dim_{HL})$
        \item $HL41 = Linear(dim_{HL},p)$
        \item $HL42 = Linear(dim_{HL},1)$
    \end{itemize}
    \item a reparametrization consisting of:
    \begin{itemize}
        \item inputs: the mean $mu$, the log-variance $logvar$
        \item a standard deviation $std=exp(0.5 \times logvar)$
        \item an epsilon $eps$ defined as random standard Gaussian for $std$
        \item outputs: $mu + eps*std $
    \end{itemize}
\end{itemize}

The parameters used are as follows:

\begin{itemize}
    \item $BatchSize = 256$
    \item $LearningRate = 10^{-3}$
    \item $Epochs = 1000$
    \item $dim_z = 10$
    \item $dim_{HL} = 20$
\end{itemize}

\subsection{Numerical Illustration} 

\subsubsection{Learning Analysis}

\paragraph{Learning error graph}
\label{learningGraph}

Below are the learning graphs: error (MSE) per feature every $epochs/10$.

\begin{figure}[H]
     \centering
     \begin{subfigure}[b]{0.49\textwidth}
         \centering
         \includegraphics[width=\textwidth]{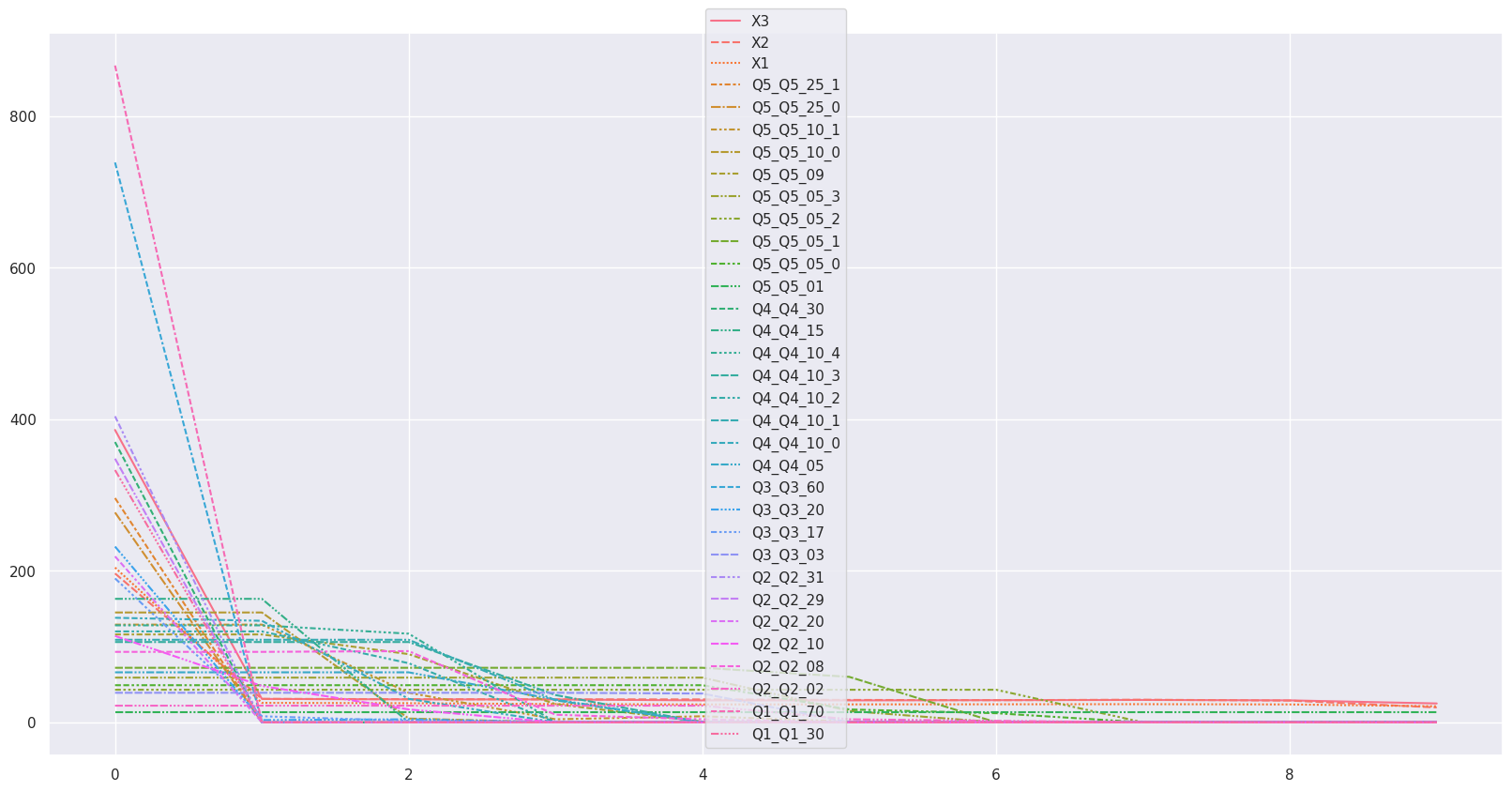}
         \caption{1000 epochs}
         \quad
         \includegraphics[width=\textwidth]{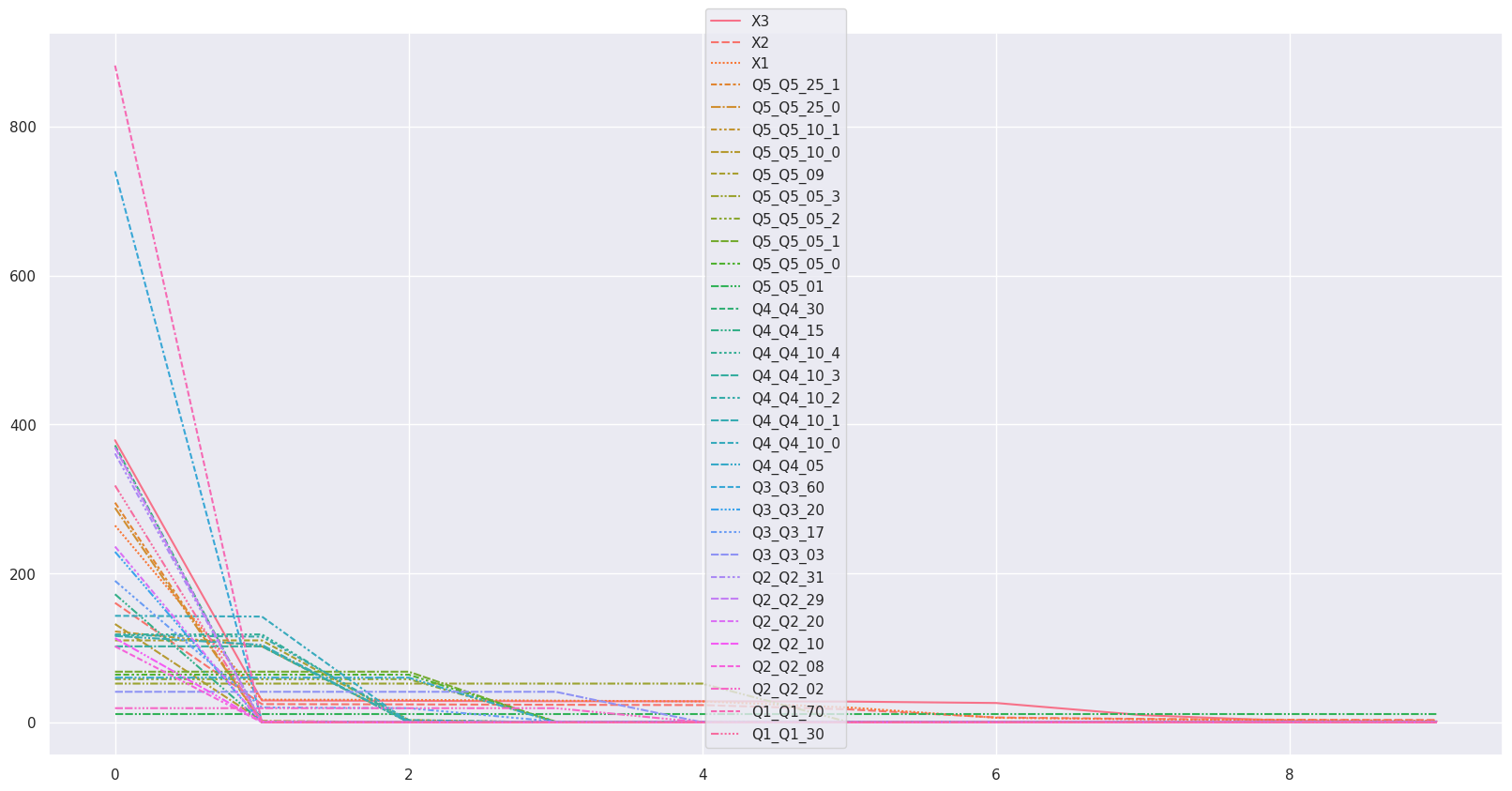}
         \caption{2000 epochs}
         \quad
         \includegraphics[width=\textwidth]{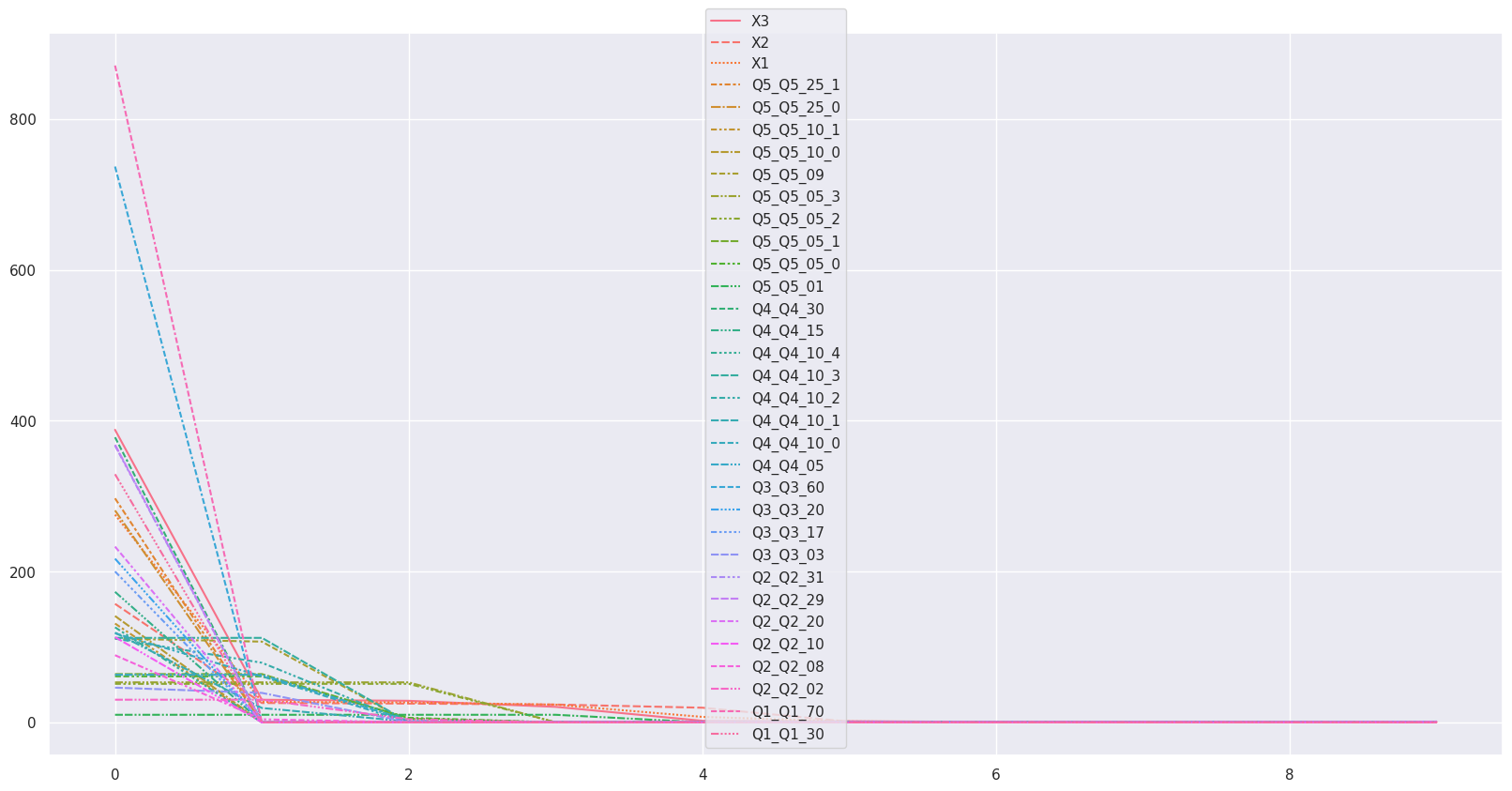}
         \caption{3000 epochs}
    \end{subfigure}
     \caption{Plot of errors by features during learning with the standard MSE}  
\end{figure}

\begin{figure}[H]
     \centering
     \begin{subfigure}[b]{0.49\textwidth}
         \centering
         \includegraphics[width=\textwidth]{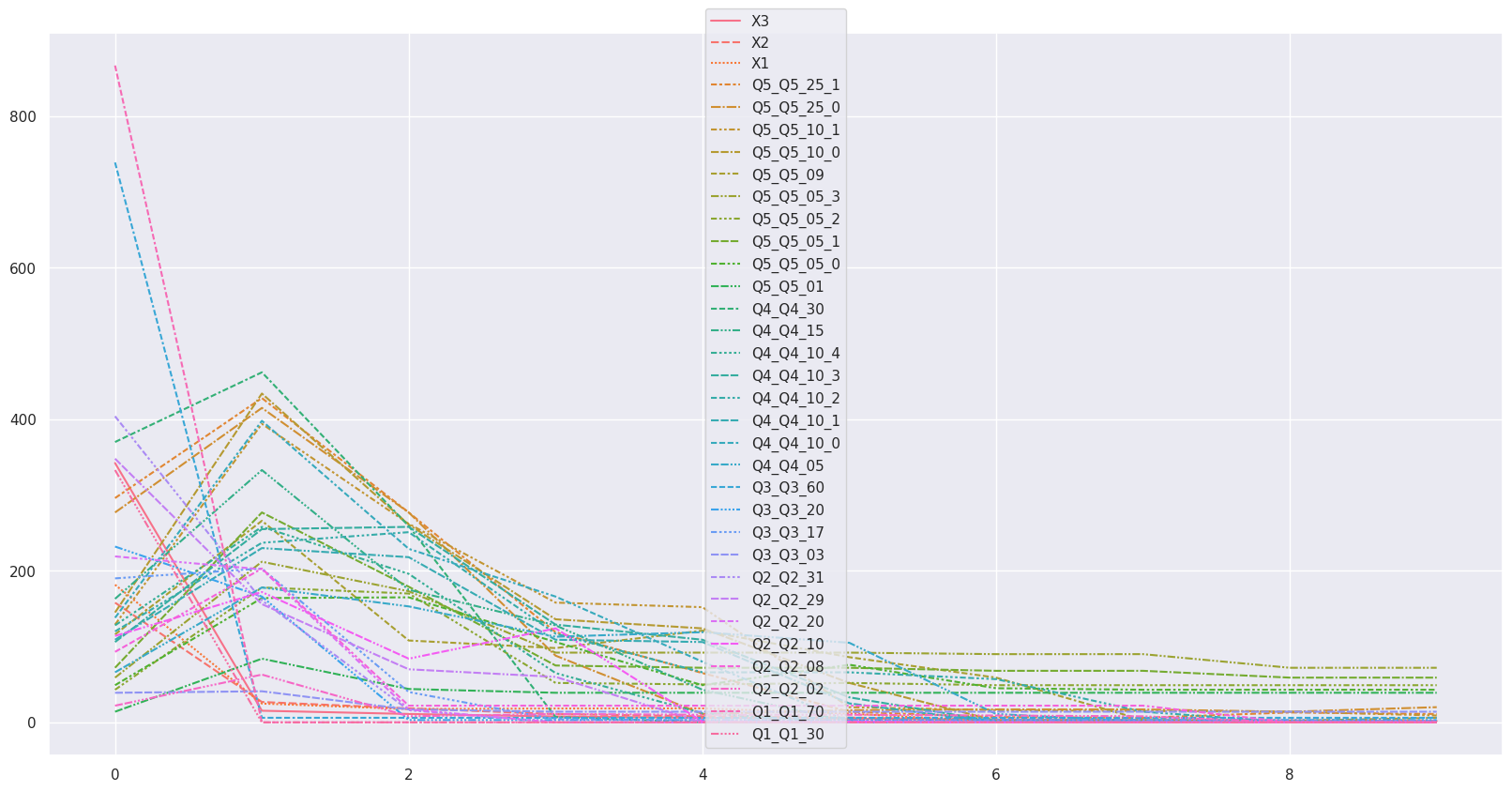}
         \caption{1000 epochs}
         \quad
         \includegraphics[width=\textwidth]{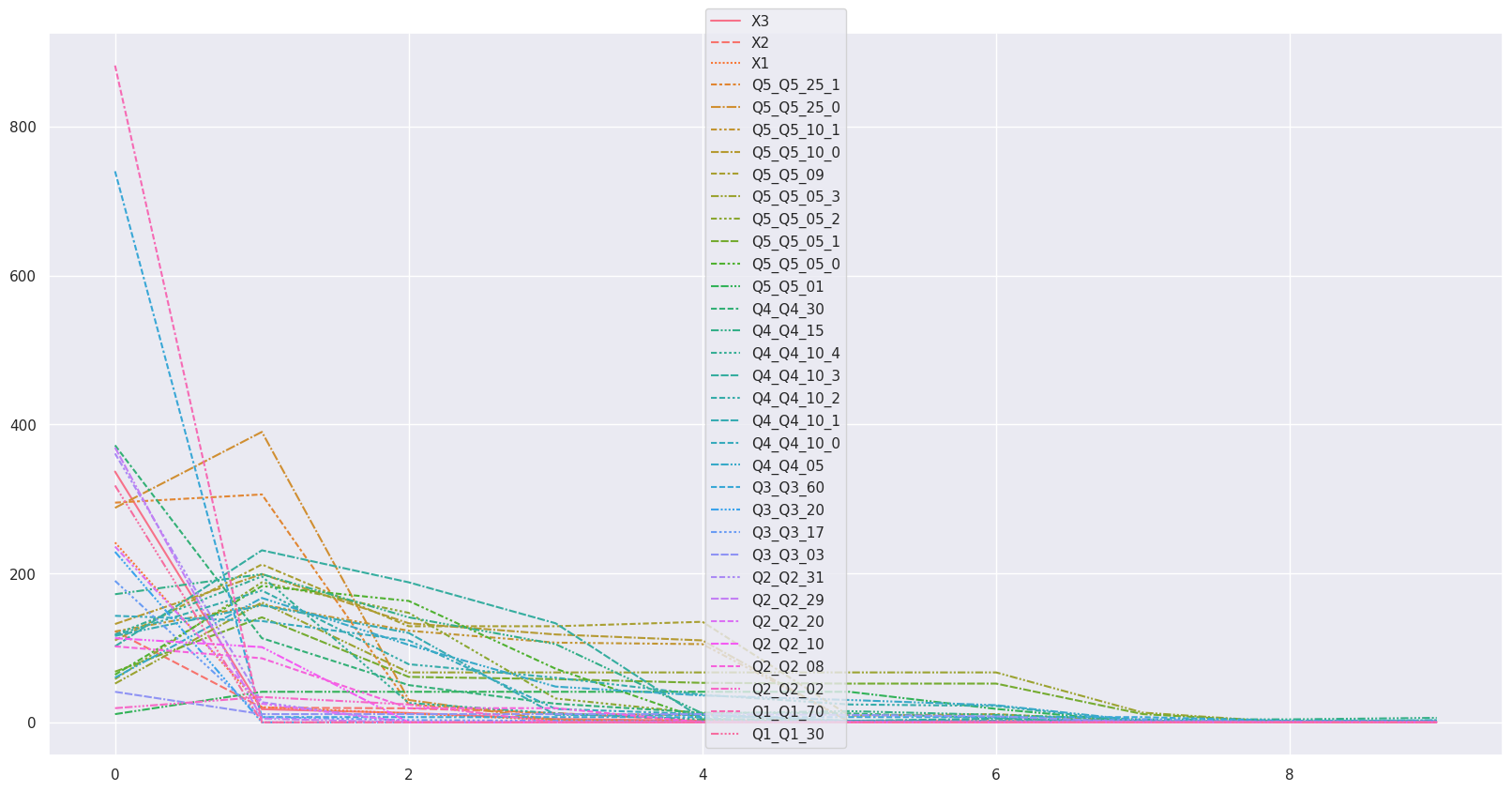}
         \caption{2000 epochs}
         \quad
         \includegraphics[width=\textwidth]{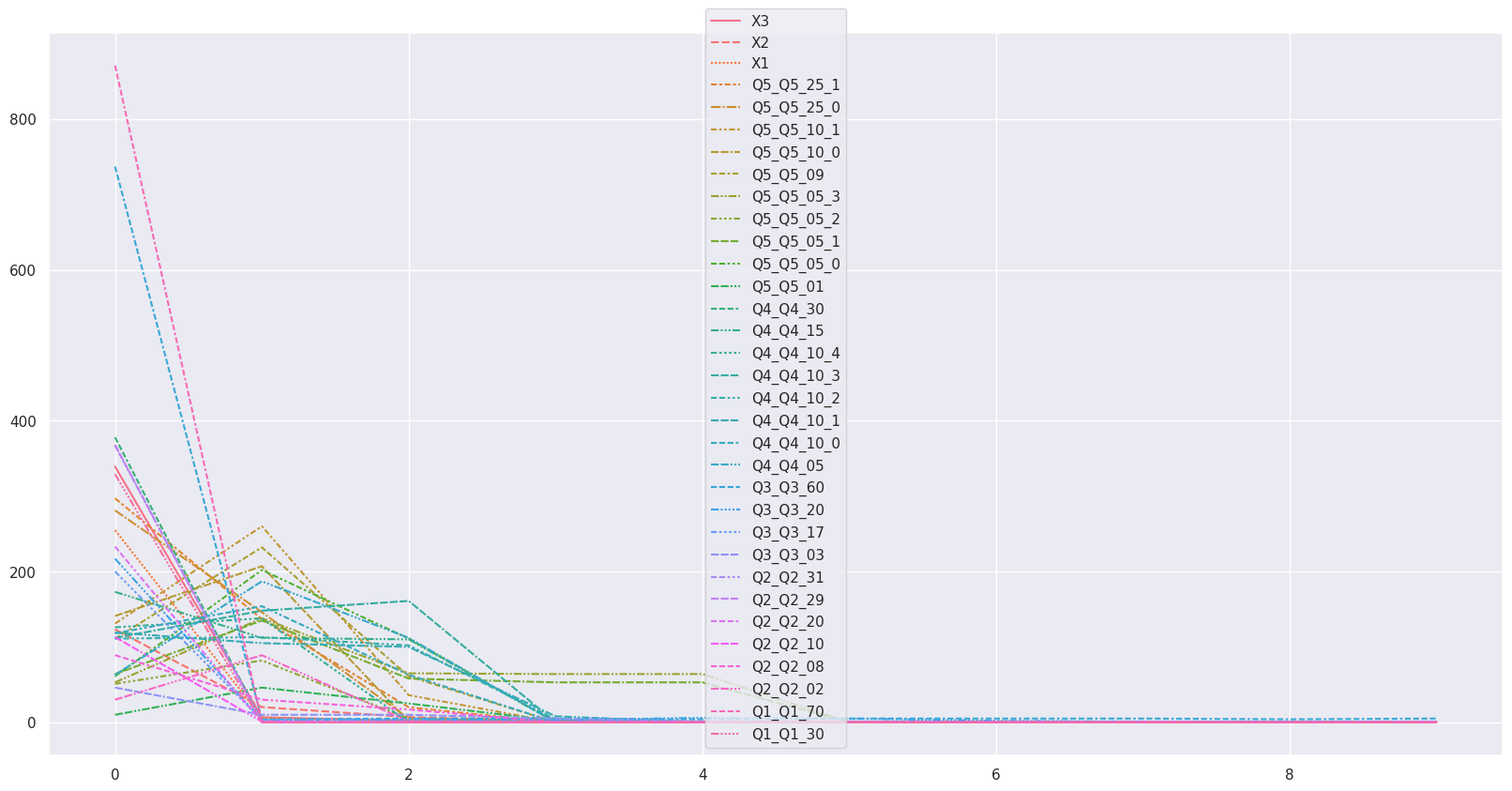}
         \caption{3000 epochs}
     \end{subfigure}
     \caption{Plot of errors by features during learning with the balanced MSE}    
\end{figure}

\paragraph{Learning error heatmap}
     \label{learningHeatmap}
Below are the learning heatmaps: error/(max(error)) per feature every $epochs/10$.
\begin{figure}[H]
     \centering
     \begin{subfigure}[b]{0.49\textwidth}
         \centering
         \includegraphics[width=\textwidth]{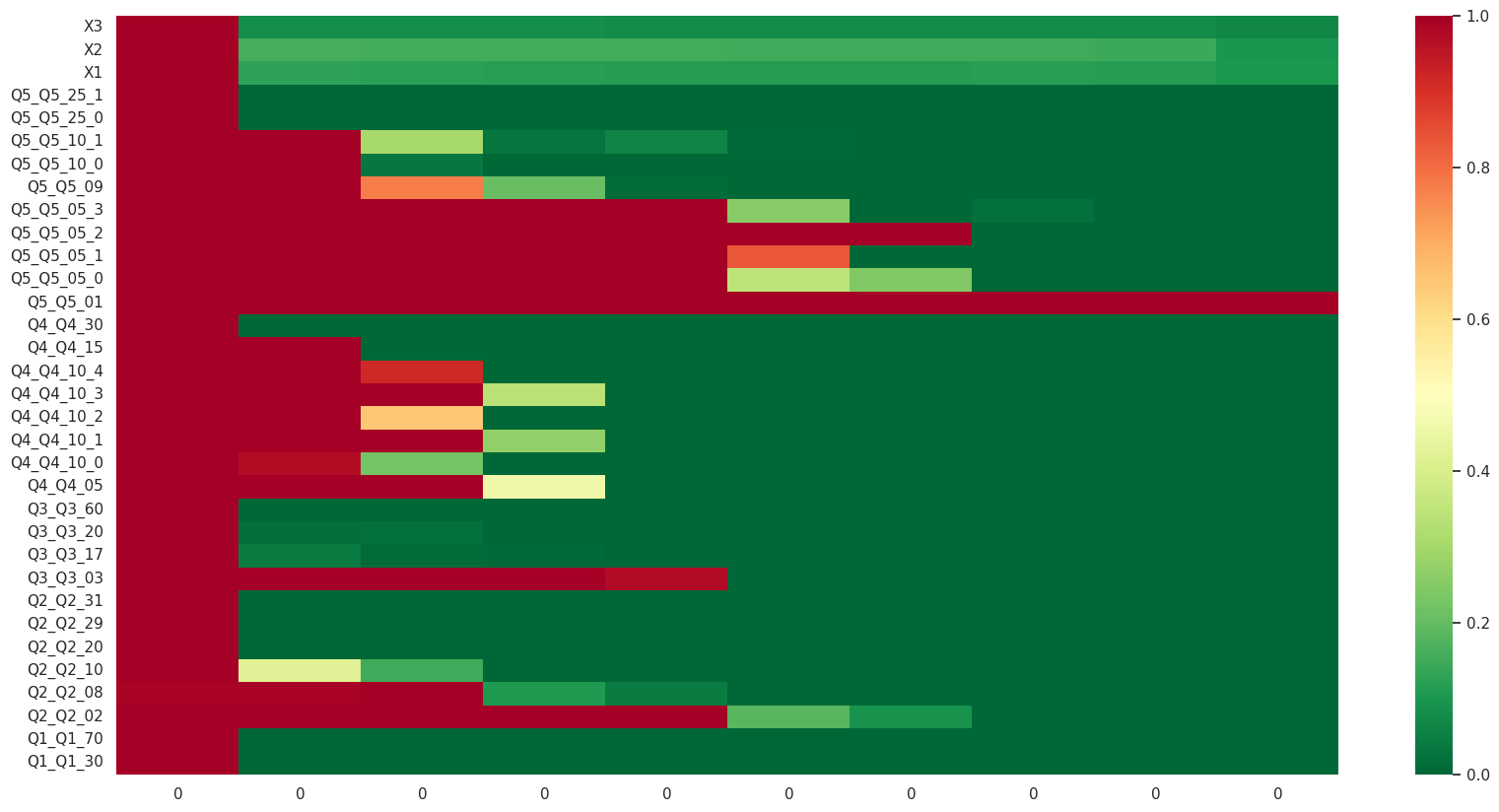}
         \caption{1000 epochs}
         \quad
         \includegraphics[width=\textwidth]{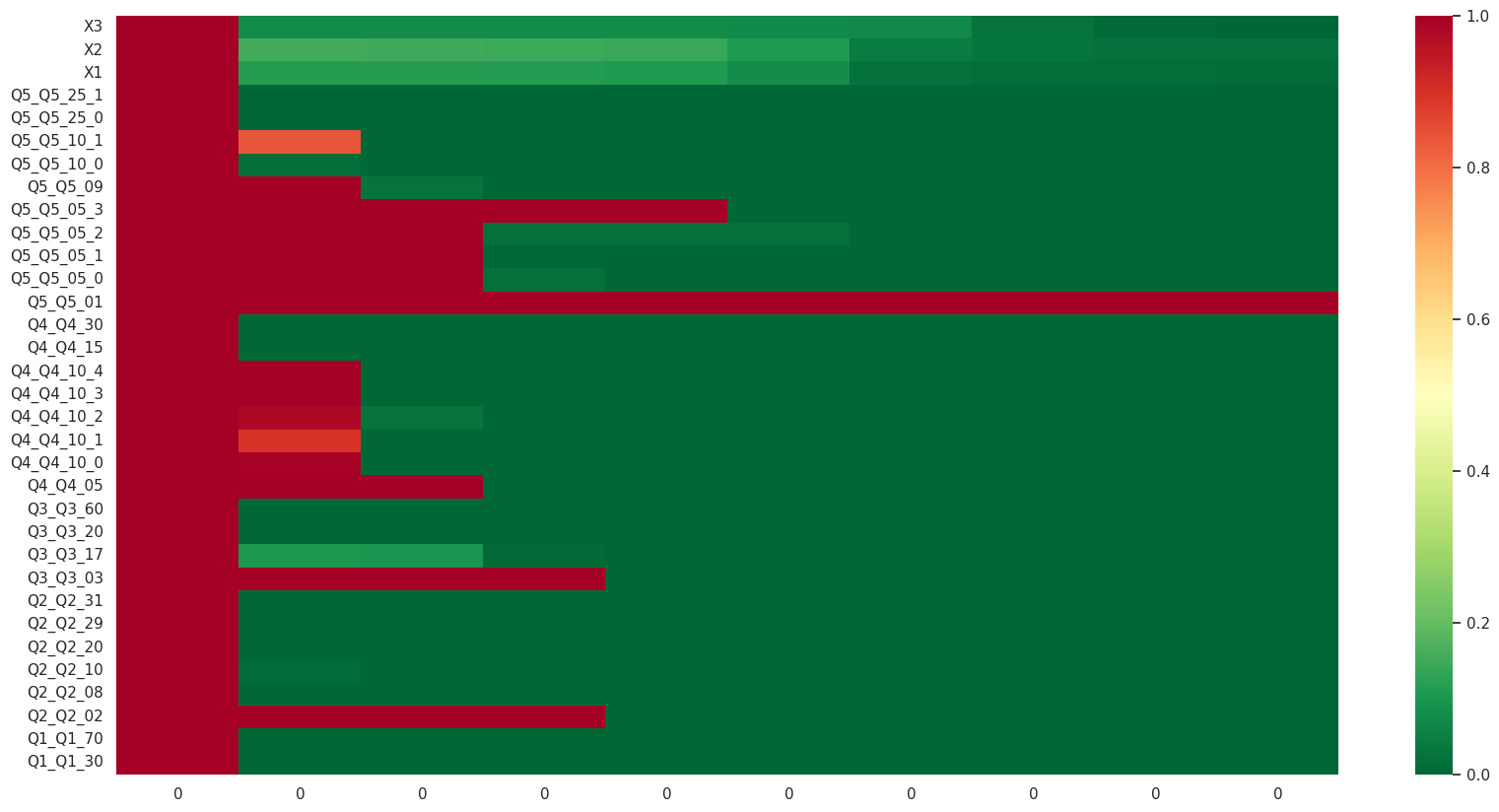}
         \caption{2000 epochs}
         \quad
         \includegraphics[width=\textwidth]{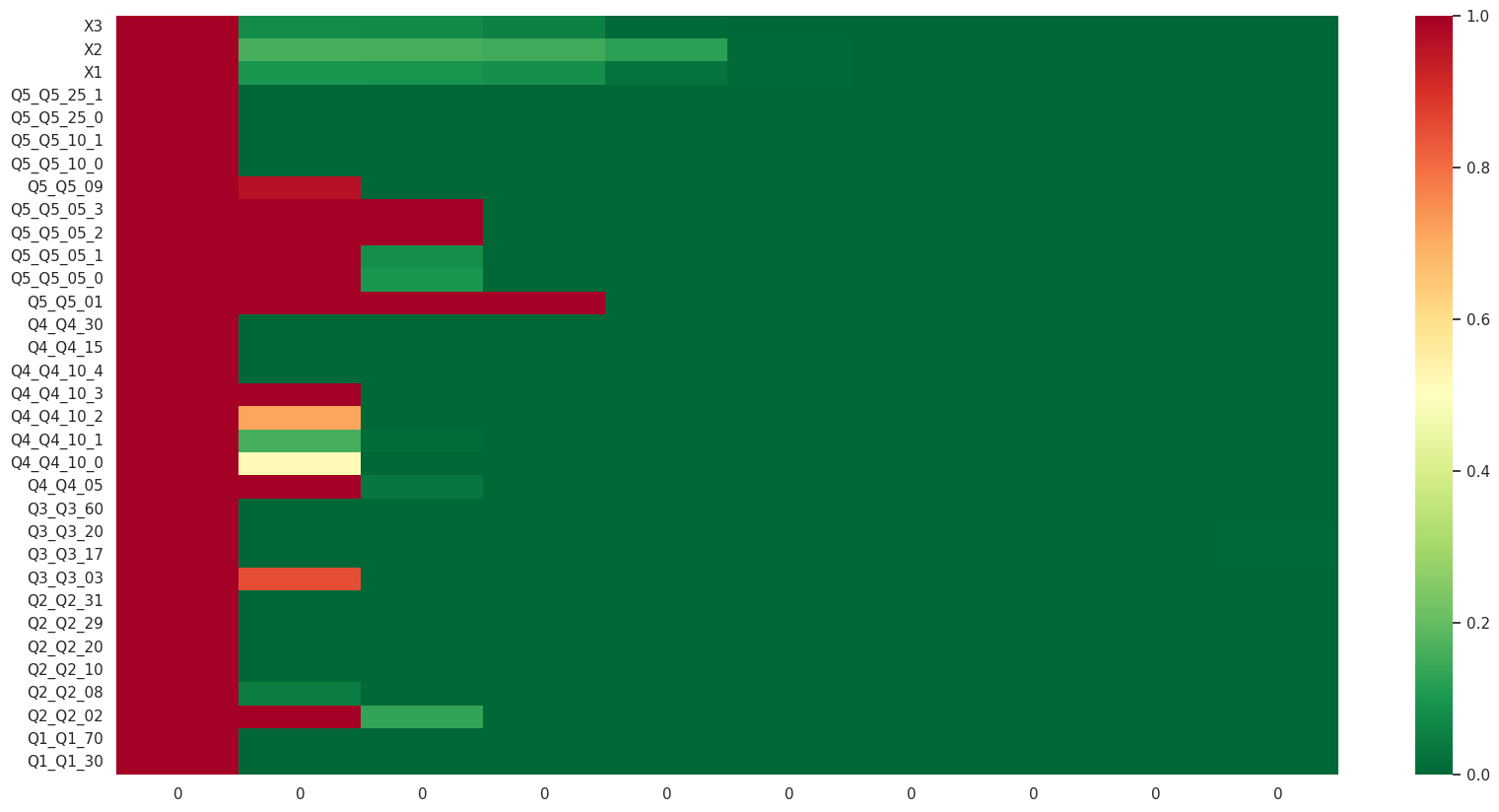}
         \caption{3000 epochs}
         \caption{Standard MSE}
     \end{subfigure}
     \caption{Heatmap of errors by features during learning with the standard MSE}
\end{figure}

\begin{figure}[H]
     \centering
     \begin{subfigure}[b]{0.49\textwidth}
         \centering
         \includegraphics[width=\textwidth]{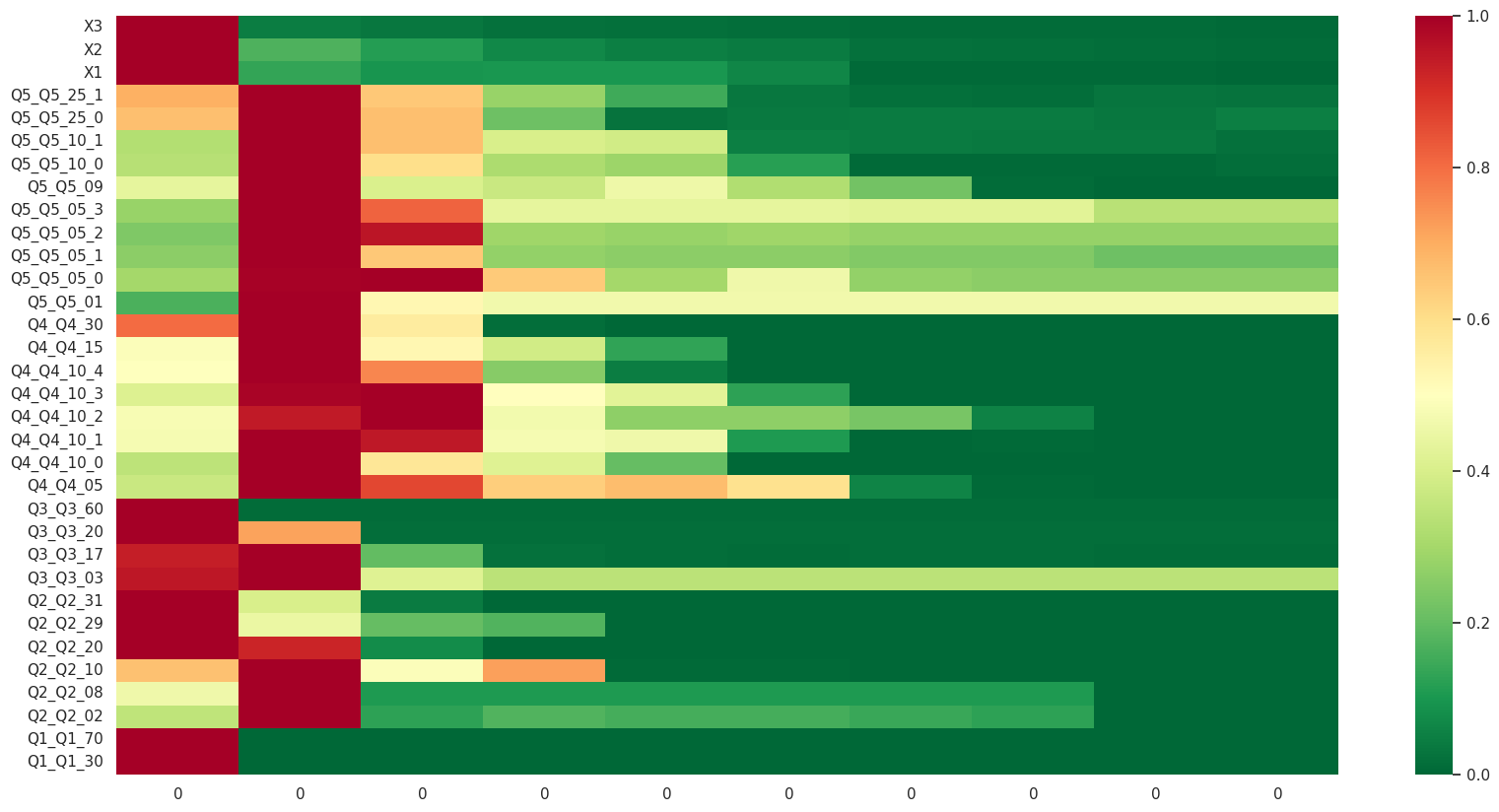}
         \caption{1000 epochs}
         \quad
         \includegraphics[width=\textwidth]{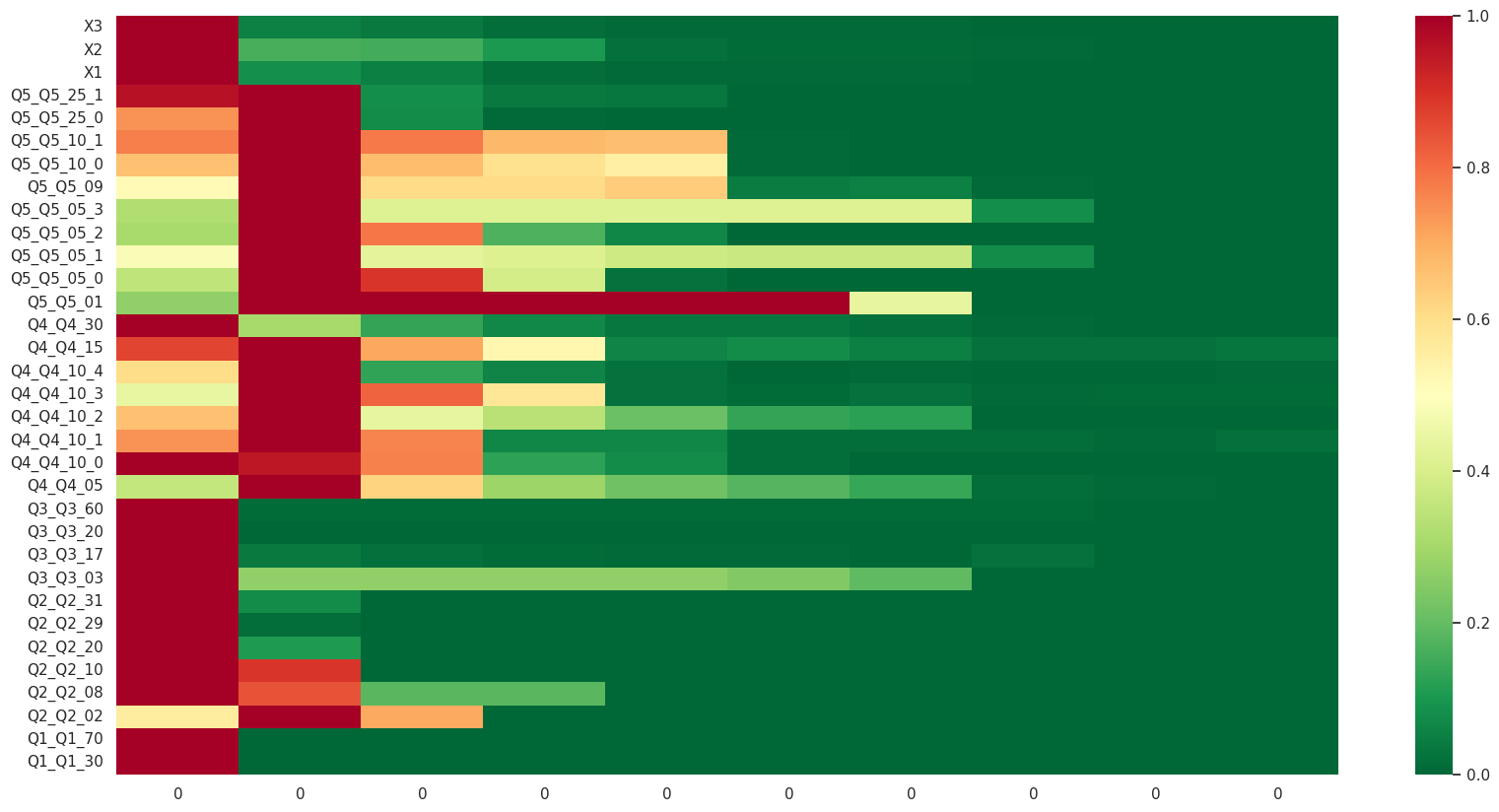}
         \caption{2000 epochs}
         \quad
         \includegraphics[width=\textwidth]{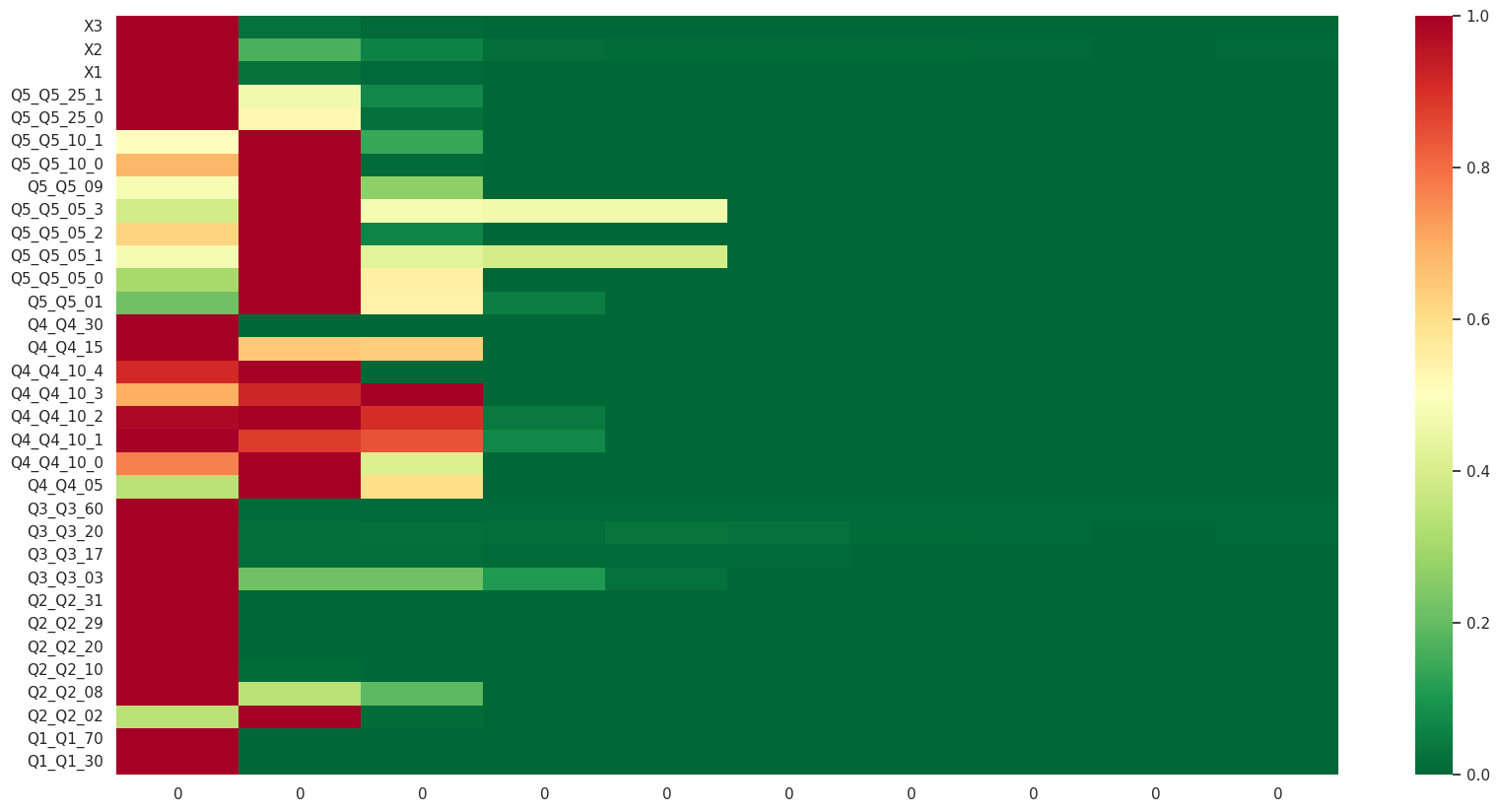}
         \caption{3000 epochs}
         \caption{Balanced MSE}
     \end{subfigure}
     \caption{Heatmap of errors by features during learning with the balanced MSE}
\end{figure}

\begin{figure}[H]
     \centering
     \begin{subfigure}[b]{0.49\textwidth}
         \centering
         \includegraphics[width=\textwidth]{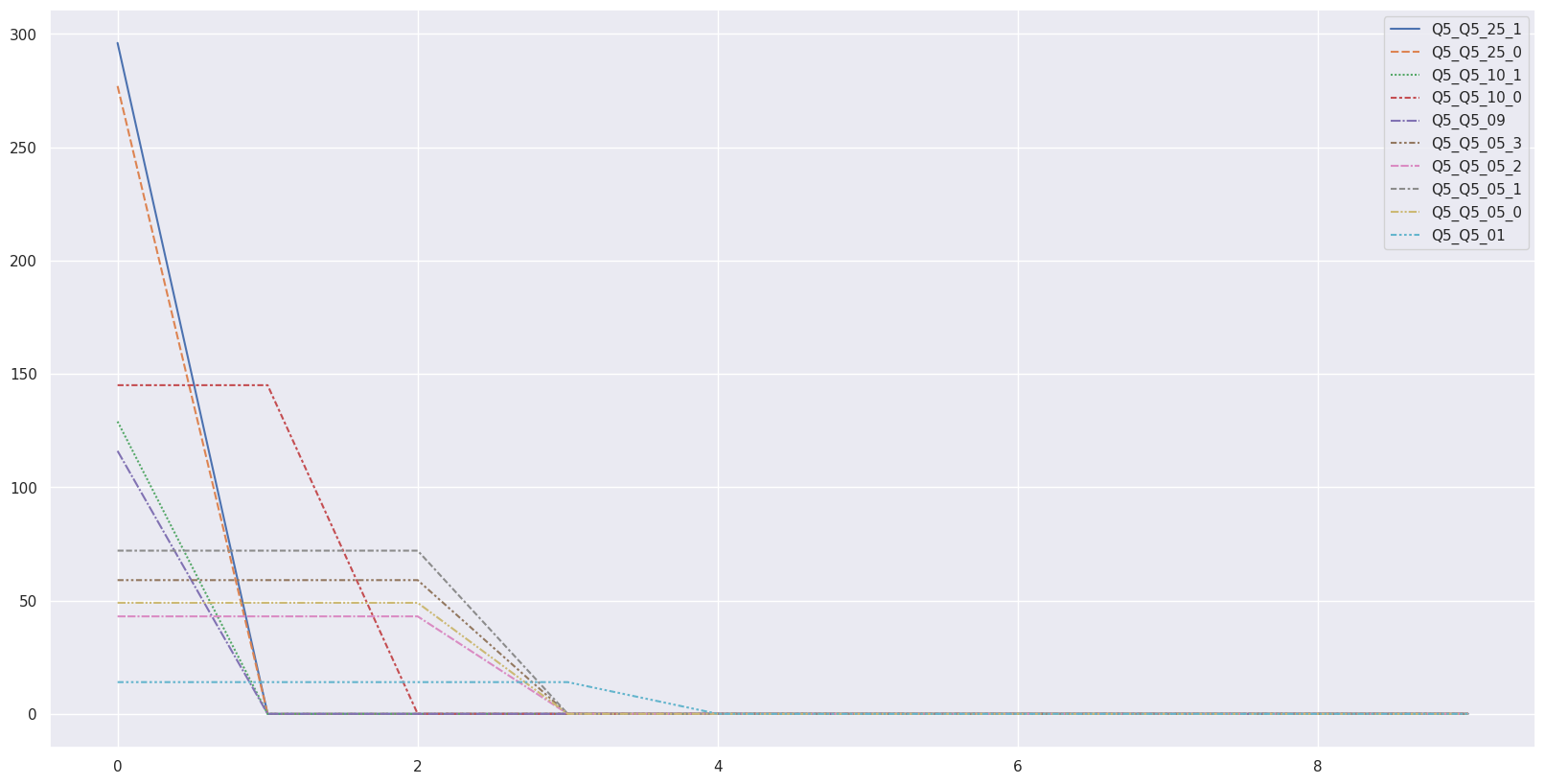}
         \caption{BCE}
         \quad
         \includegraphics[width=\textwidth]{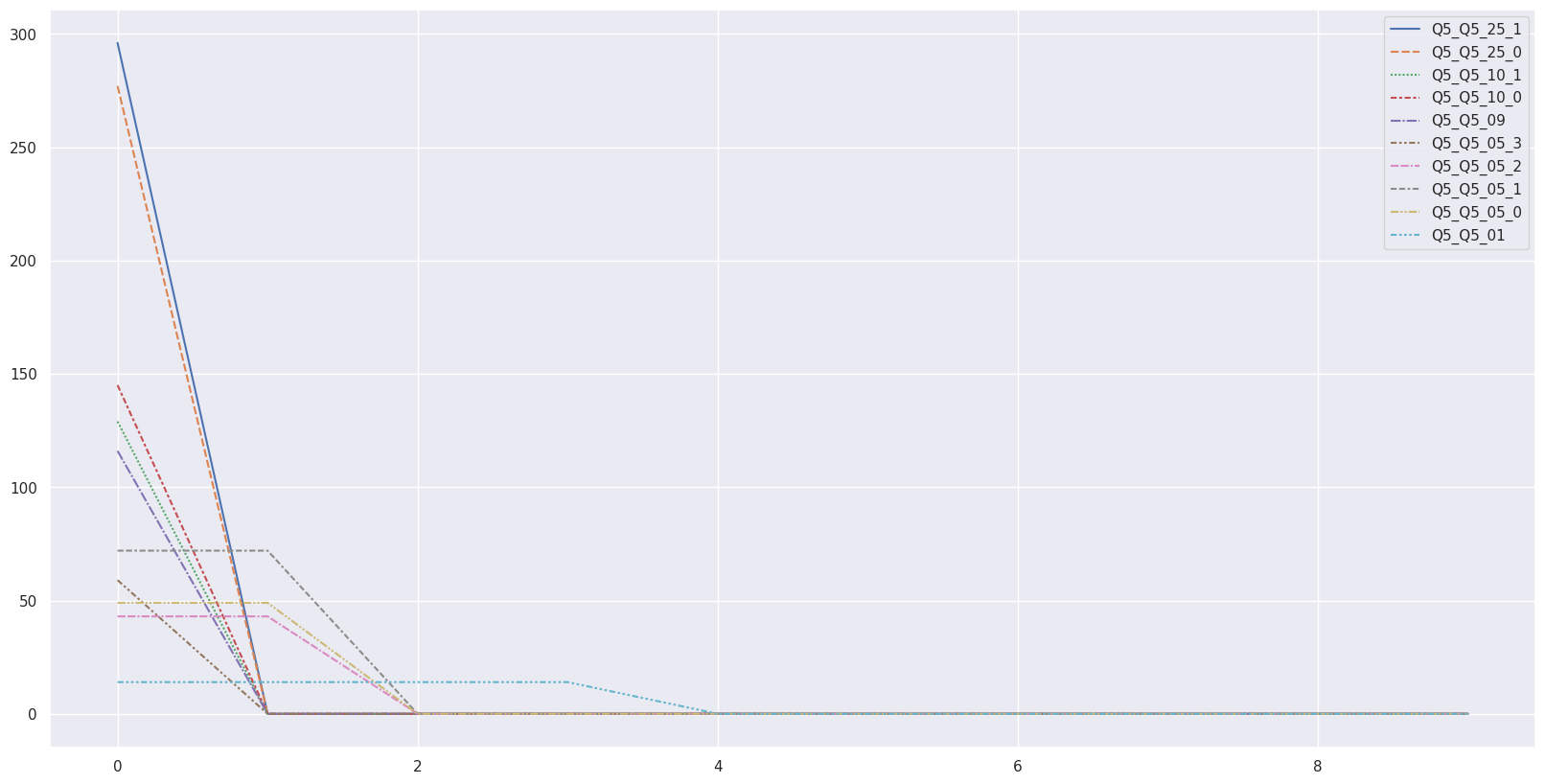}
         \caption{MSE}
     \end{subfigure}
     \caption{Plot of errors by features during learning with the standard MSE on 1 categorical feature. Comparison between the Binary Cross Entropy (BCE) and the standard MSE}
     \label{learningGraph1var}
\end{figure}

\subsubsection{Other benchmark}
\label{other_BK}

We naively compared our approach with different encodings (using the python package "category-encoders") combined with the loss function MSE, as well as the cross-entropy loss function - with a softmax activation function applied to each variable (as used in e.g \citep{xu2019modeling} or \citep{delong2023use}), and a combination of MSE-CrossEntropy. As the results were not satisfactory, we preferred to present them in the annex and leave the comparison with only the standard MSE. Below we present the MSEM for different encodings and loss functions.
As observed below in our illustration, the different encodings and loss functions do not seem relevant (with 1000 epochs).

\begin{figure}[H]
     \centering
     \includegraphics[width=0.49\textwidth]{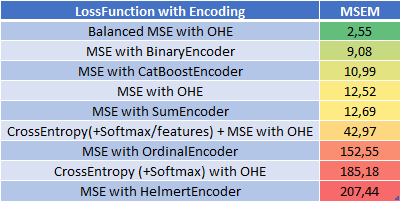}
     \caption{Comparison of different loss functions and encoding}
\end{figure}

\subsection{Experiments}
\label{Applications_Details}

\subsubsection{Datasets Details}

\paragraph{Adults}
The dataset contains 14 variables: 11 categorical and 3 numerical. It comes from the following source: \url{https://archive.ics.uci.edu/dataset/2/adult}
\begin{figure}[H]
     \centering
     \includegraphics[width=0.49\textwidth]{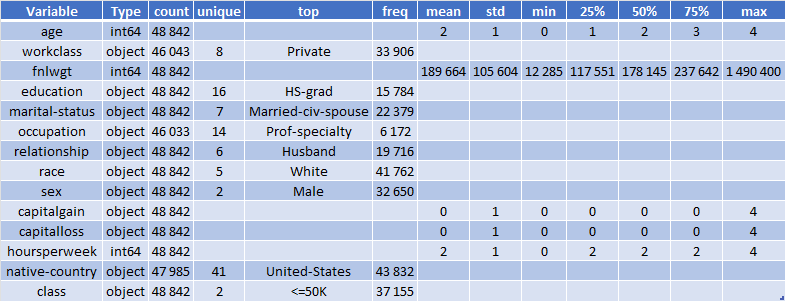}
     \caption{Adults dataset details}
\end{figure}

\paragraph{BreastCancer}
The dataset contains 16 variables: 11 categorical and 5 numerical. It comes from the following source: \url{https://www.kaggle.com/datasets/reihanenamdari/breast-cancer}
\begin{figure}[H]
     \centering
     \includegraphics[width=0.49\textwidth]{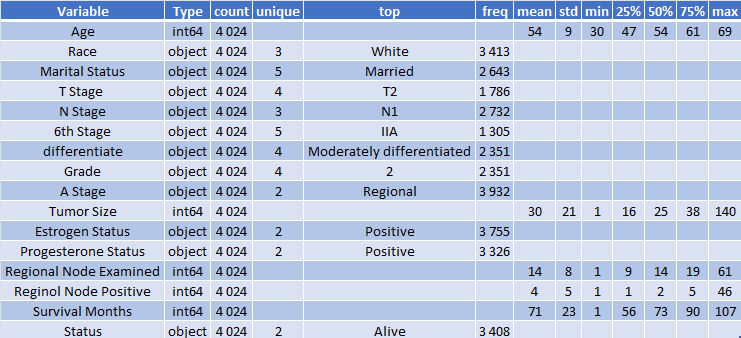}
     \caption{BreastCancer dataset details}
\end{figure}

\paragraph{Obesity}
The dataset contains 17 variables: 9 categorical and 8 numerical. It comes from the following source: \url{https://archive.ics.uci.edu/dataset/544/estimation+of+obesity+levels+based+on+eating+habits+and+physical+condition}. It is associated with the paper \citep{palechor2019dataset}.
\begin{figure}[H]
     \centering
     \includegraphics[width=0.49\textwidth]{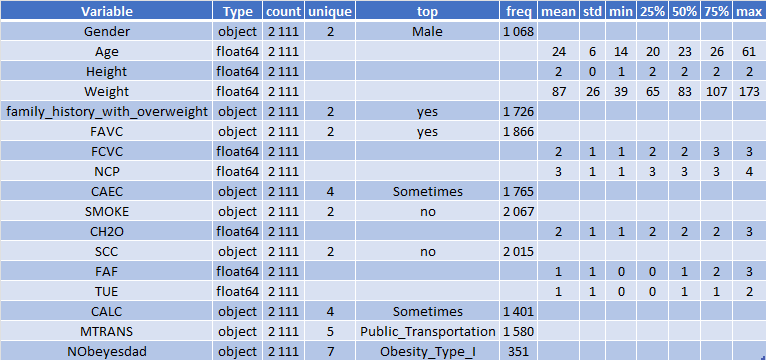}
     \caption{Obesity dataset details}
\end{figure}

\paragraph{freMTPL}
The dataset contains 26 variables: 11 categorical and 15 numerical. It comes from the following source: \url{http://cas.uqam.ca/}
\begin{figure}[H]
     \centering
     \includegraphics[width=0.49\textwidth]{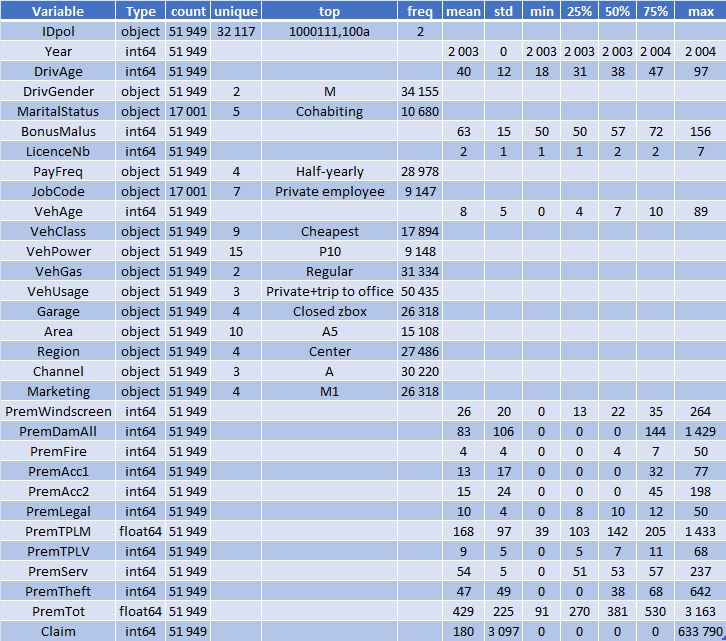}
     \caption{freMTPL dataset details}
\end{figure}

\paragraph{Pricing Game}
The dataset contains 20 variables: 6 categorical and 14 numerical. It comes from the following source: \url{http://cas.uqam.ca/}
\begin{figure}[H]
     \centering
     \includegraphics[width=0.49\textwidth]{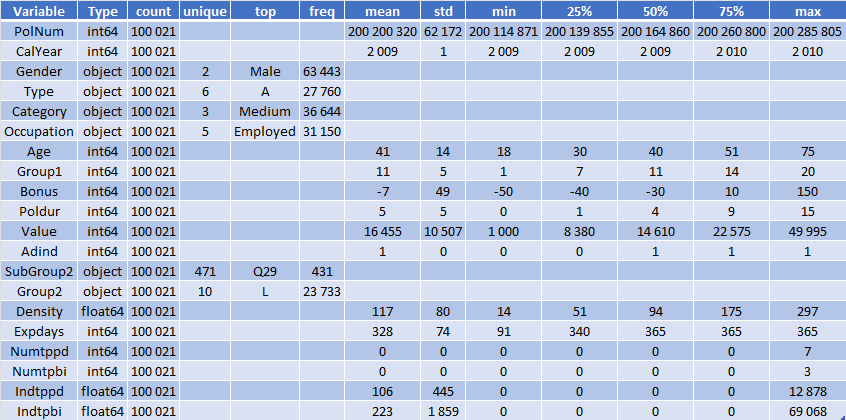}
     \caption{Pricing Game dataset details}
\end{figure}

\paragraph{Telematics}
The dataset contains 52 variables: 48 categorical and 4 numerical. It comes from the following source: \url{https://www2.math.uconn.edu/~valdez/data.html}. It is associated with the paper \citep{so2021synthetic}.
\begin{figure}[H]
     \centering
     \includegraphics[width=0.49\textwidth]{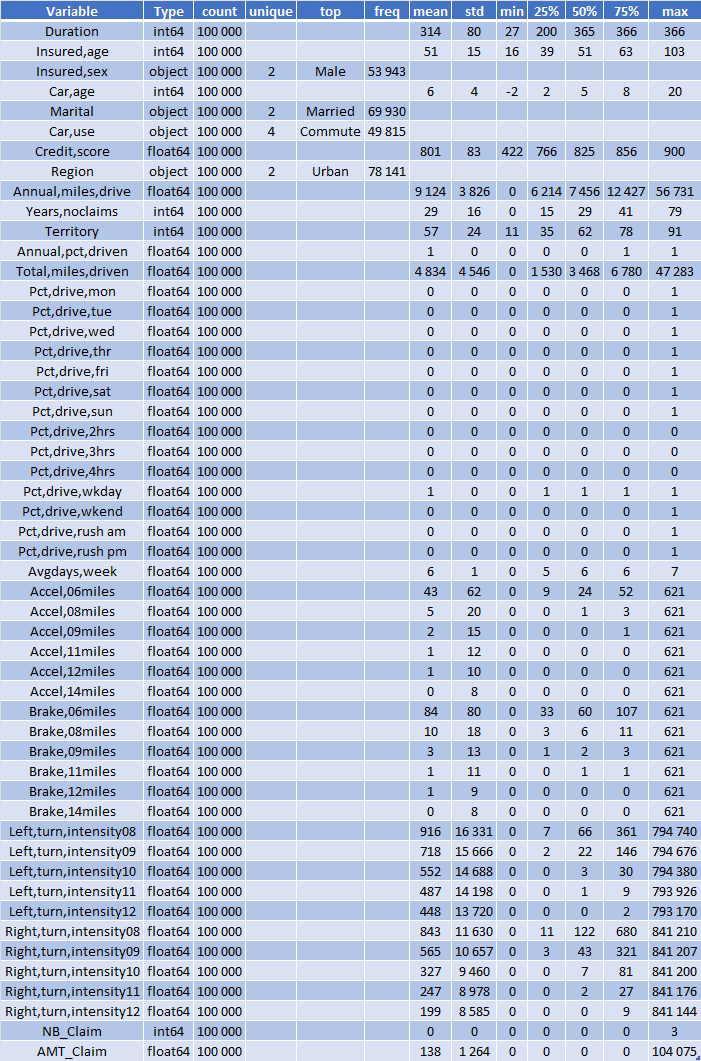}
     \caption{Telematics dataset details}
\end{figure}

\paragraph{Student}
The dataset contains 33 variables: 17 categorical and 16 numerical. It comes from the following source: \url{https://archive.ics.uci.edu/dataset/320/student+performance}. It is associated with the paper \citep{cortez2008using}.
\begin{figure}[H]
     \centering
     \includegraphics[width=0.49\textwidth]{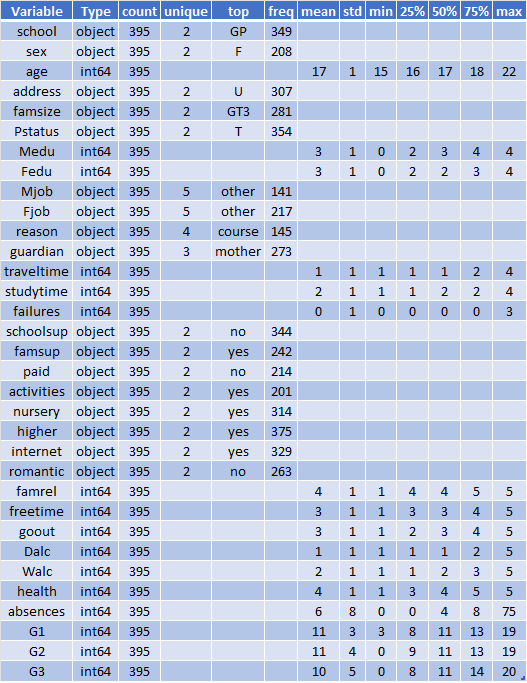}
     \caption{Student dataset details}
\end{figure}

\newpage
\subsubsection{Complementary Results for Supervised Classification}
\label{Prediction_Classification_Details}

\paragraph{Adults Results} Below different metrics for $Y$ prediction. 

\begin{figure}[H]
     \centering
     \begin{subfigure}[b]{0.24\textwidth}
         \centering
         \includegraphics[width=\textwidth]{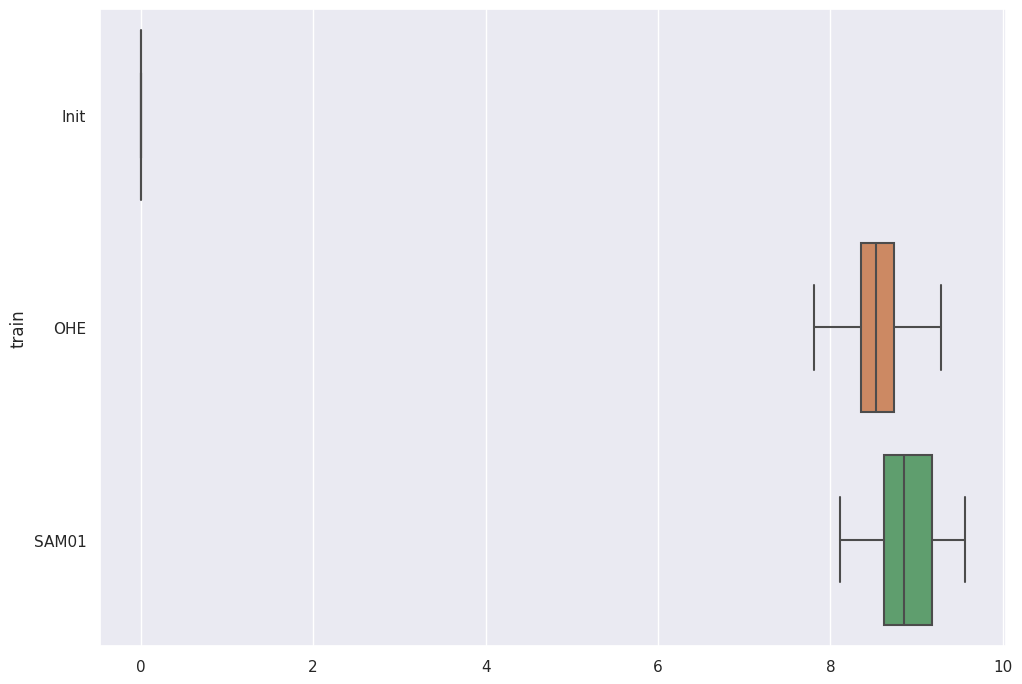}
         \caption{Correlation Matrix Difference with the initial train}
         \quad
         \includegraphics[width=\textwidth]{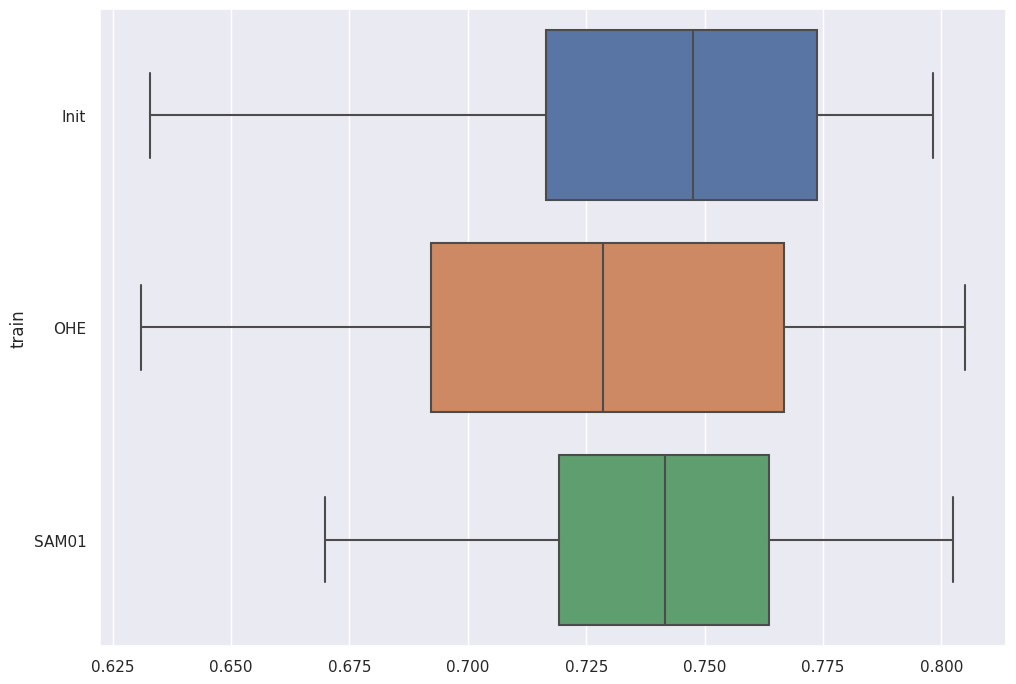}
         \caption{Area Under the Curve (AUC)}
         \quad
         \includegraphics[width=\textwidth]{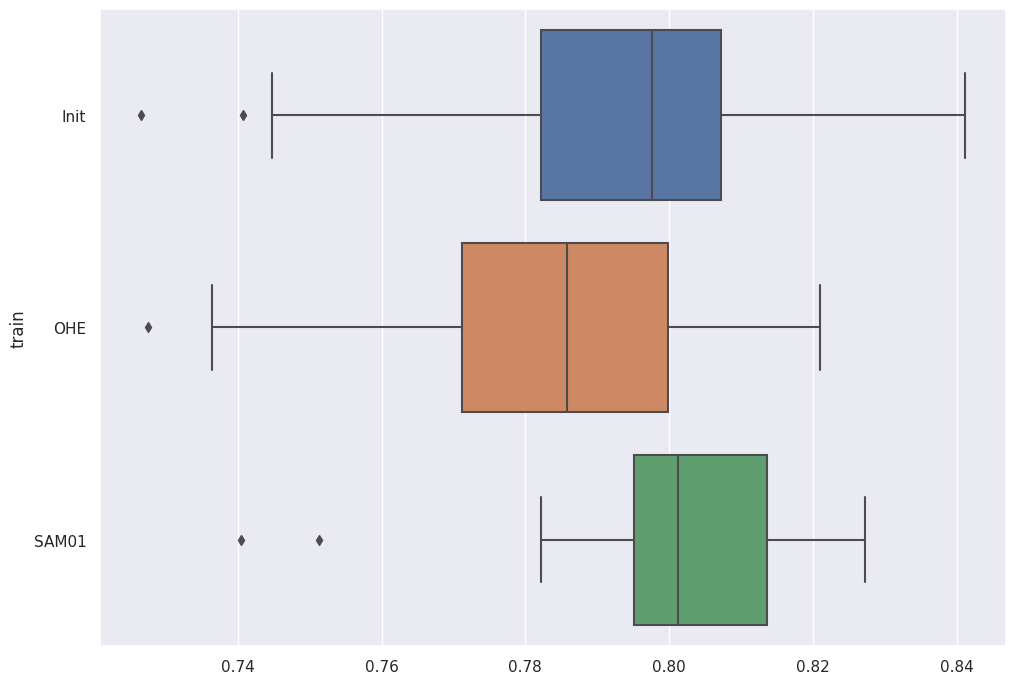}
         \caption{F1-Score}
         \quad
         \includegraphics[width=\textwidth]{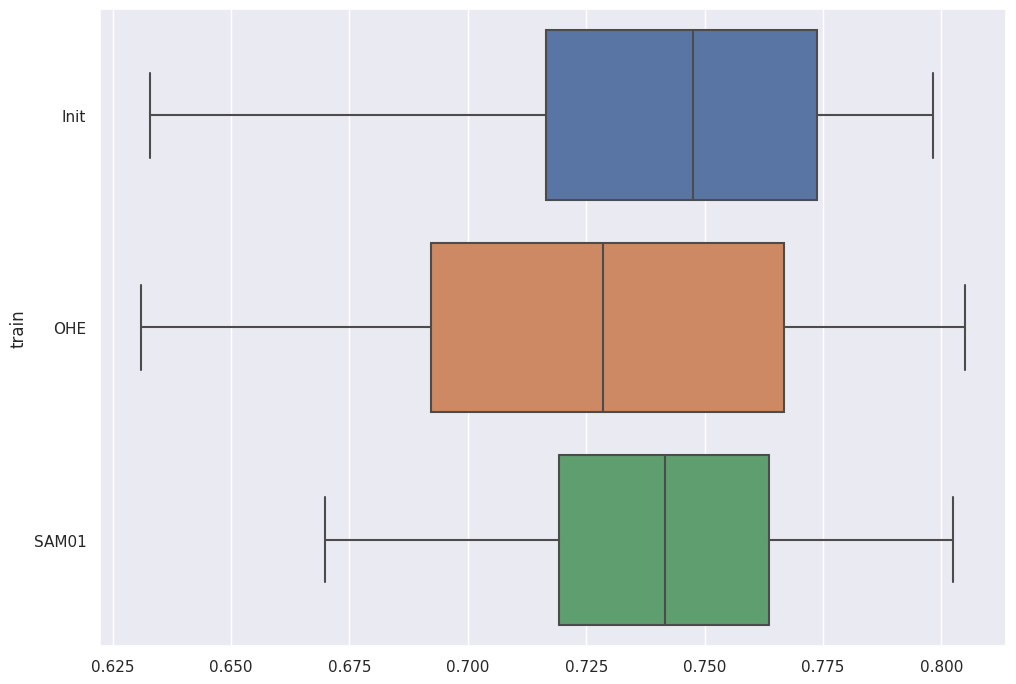}
         \caption{Balanced Accuracy}
         \label{Prediction_Adults_Imb2}
     \end{subfigure}
     \hfill
     \begin{subfigure}[b]{0.24\textwidth}
         \centering
         \includegraphics[width=\textwidth]{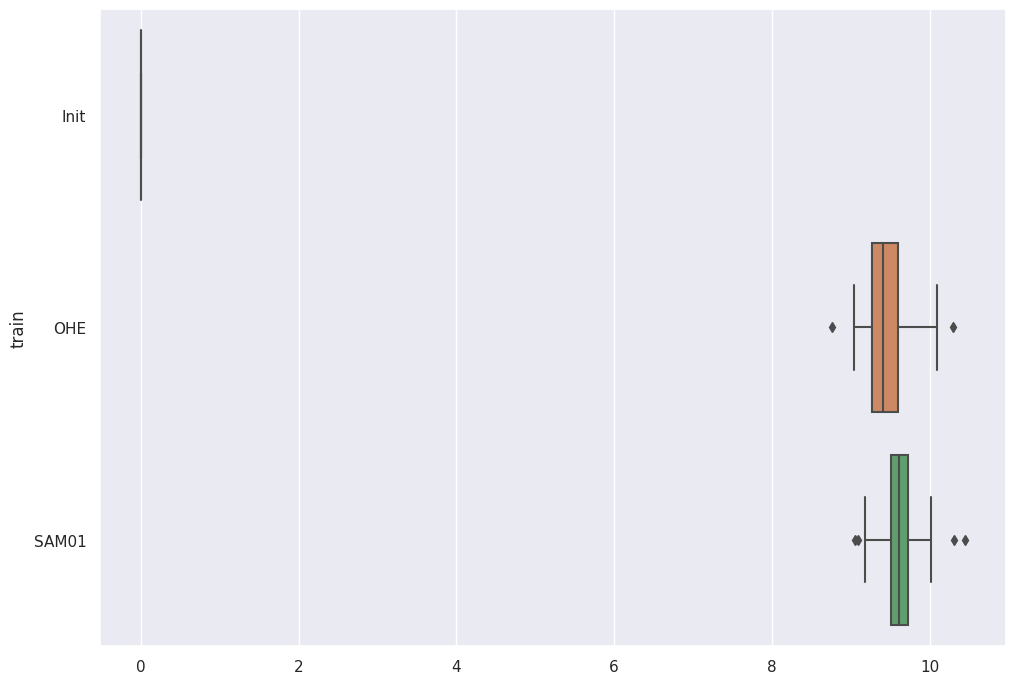}
         \caption{Correlation Matrix Difference with the initial train}
         \quad
         \includegraphics[width=\textwidth]{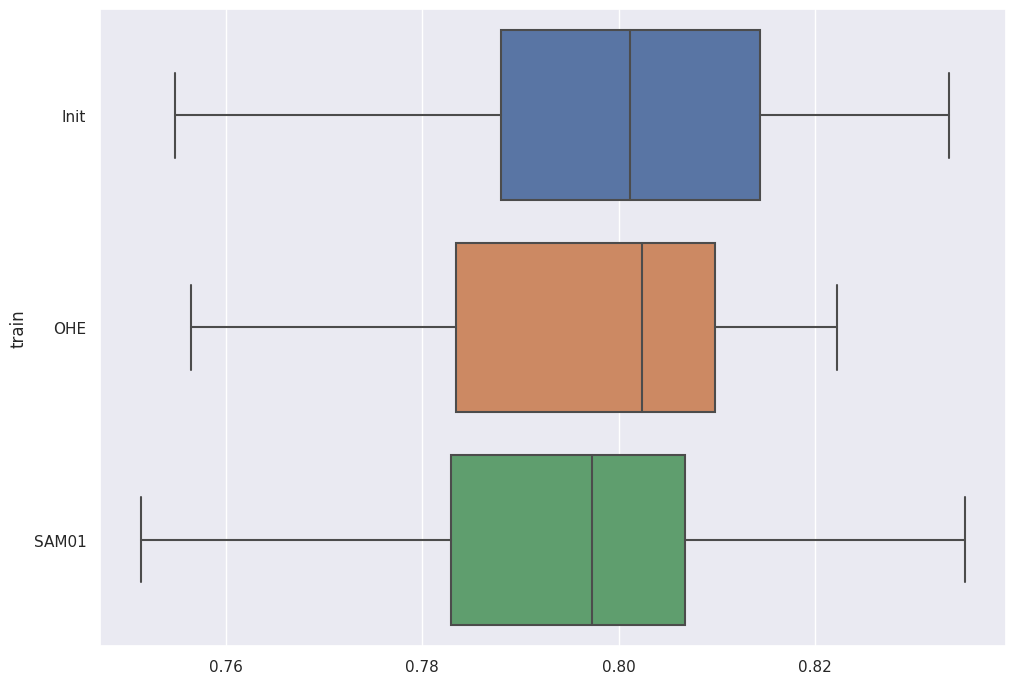}
         \caption{Area Under the Curve (AUC)}
         \quad
         \includegraphics[width=\textwidth]{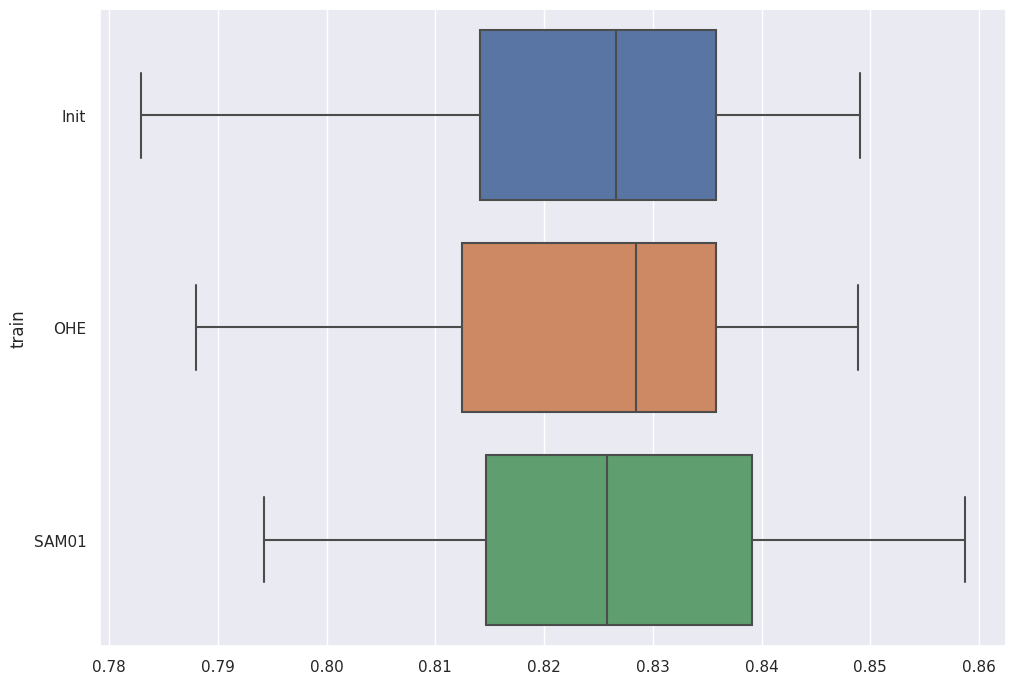}
         \caption{F1-Score}
         \quad
         \includegraphics[width=\textwidth]{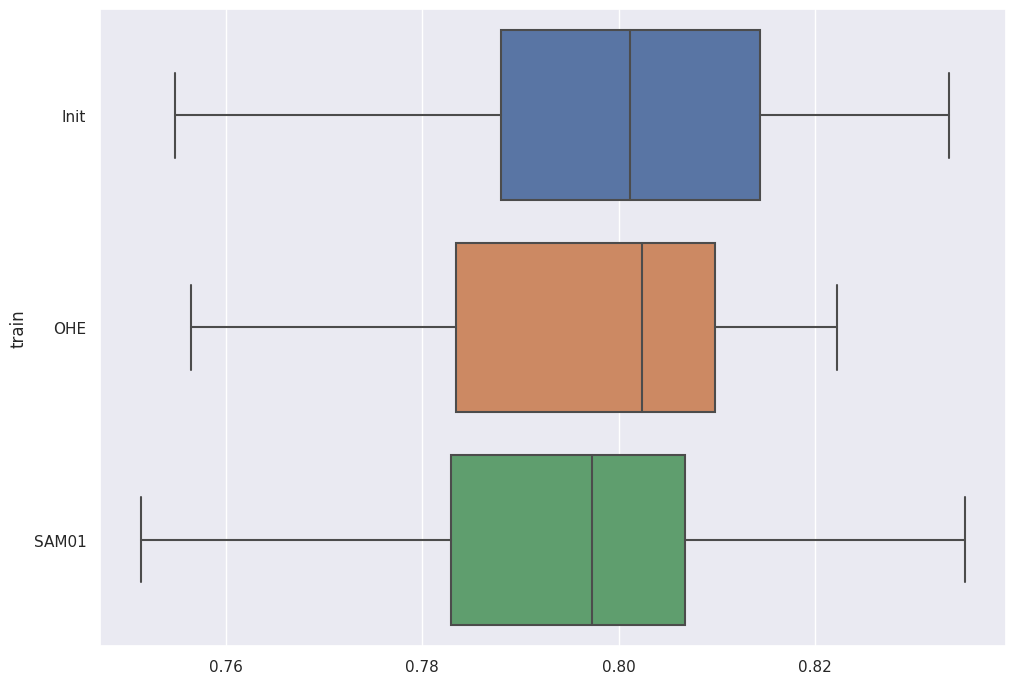}
         \caption{Balanced Accuracy}
         \label{Prediction_Adults_Bal2}
     \end{subfigure}
     \caption{Prediction from reconstructed features. Comparison between an Imbalanced context (left) and a Balanced context (right). Comparison between the initial train (blue), standard MSE (orange) and balanced MSE (green)}
     \label{Prediction_Adults2}
\end{figure}

\newpage
\paragraph{Breast Cancer Results} Below different metrics for $Y$ prediction.

\begin{figure}[H]
     \centering
     \begin{subfigure}[b]{0.24\textwidth}
         \centering
         \includegraphics[width=\textwidth]{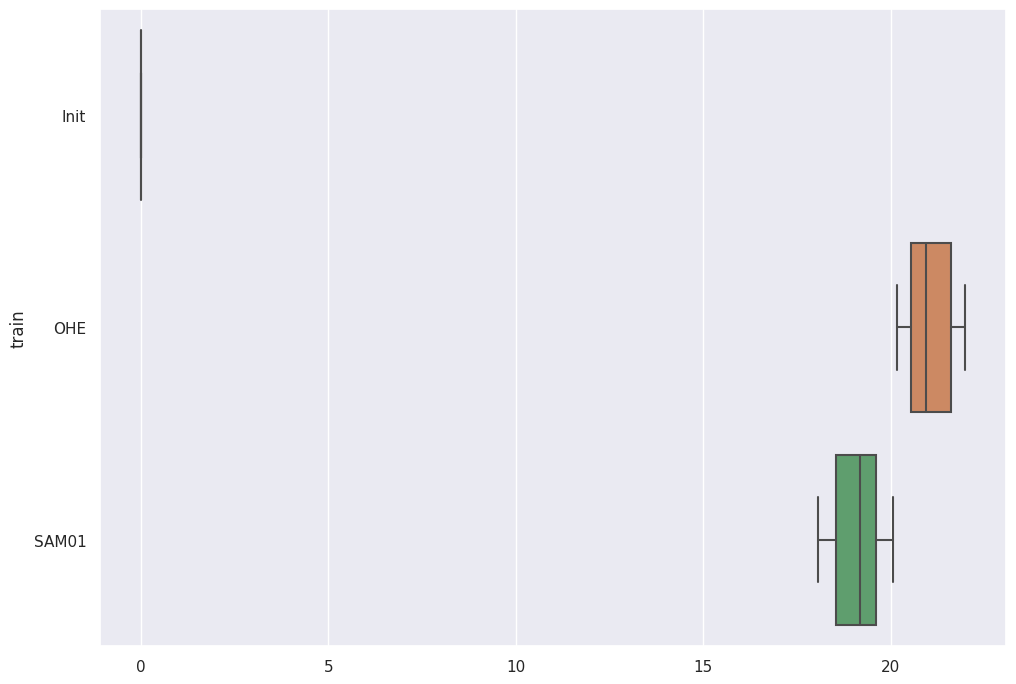}
         \caption{Correlation Matrix Difference with the initial train}
         \quad
         \includegraphics[width=\textwidth]{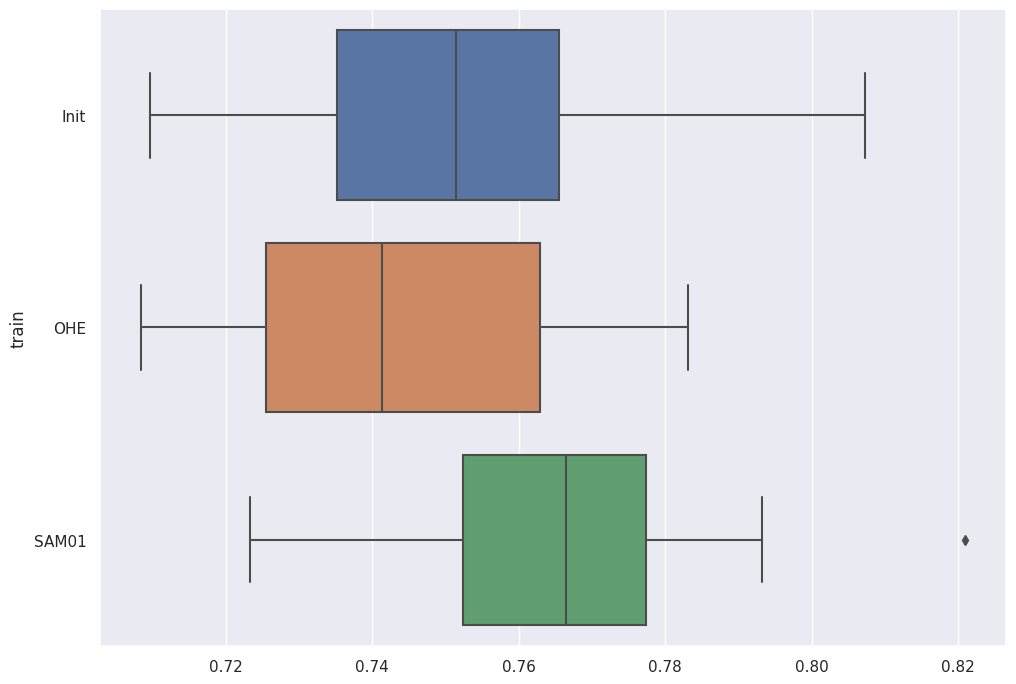}
         \caption{Area Under the Curve (AUC)}
         \quad
         \includegraphics[width=\textwidth]{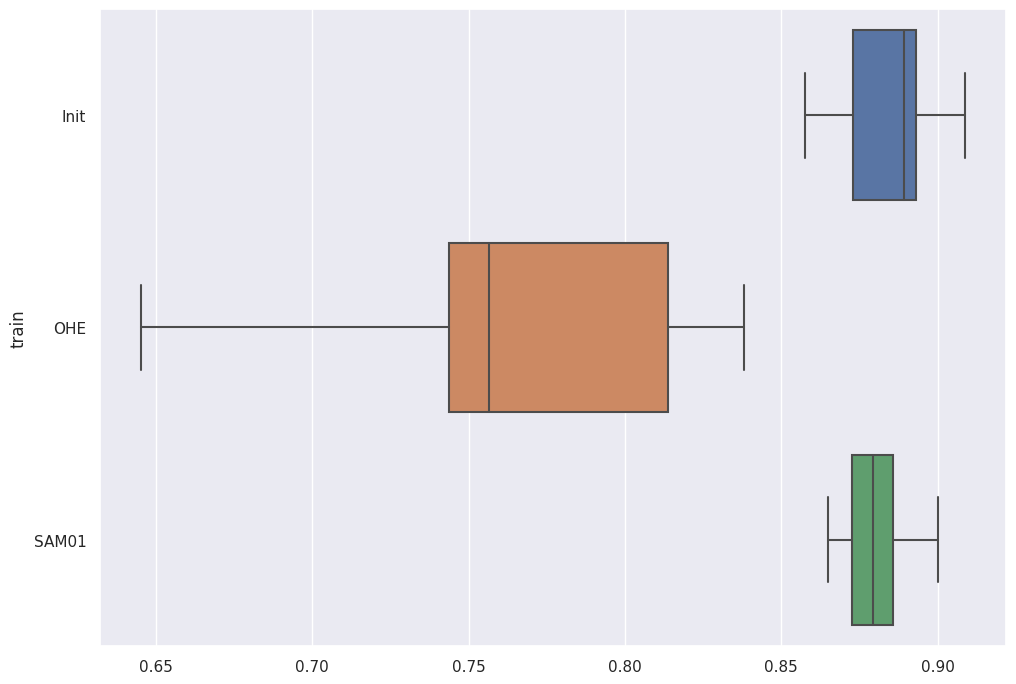}
         \caption{F1-Score}
         \quad
         \includegraphics[width=\textwidth]{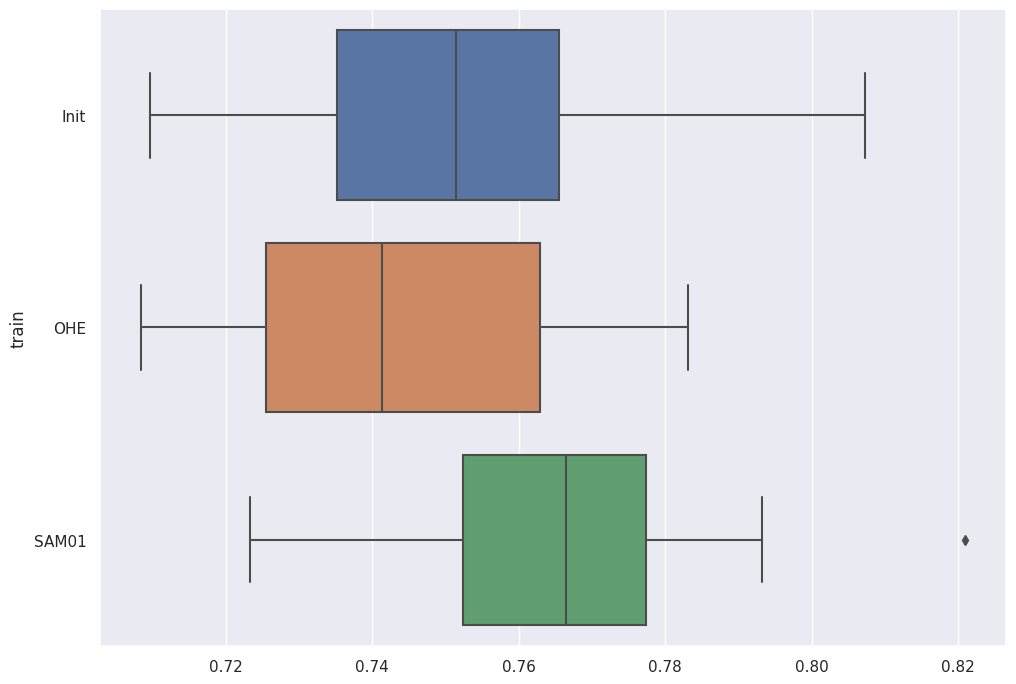}
         \caption{Balanced Accuracy}
         \label{Prediction_BreastCancer_Imb2}
     \end{subfigure}
     \hfill
     \begin{subfigure}[b]{0.24\textwidth}
         \centering
         \includegraphics[width=\textwidth]{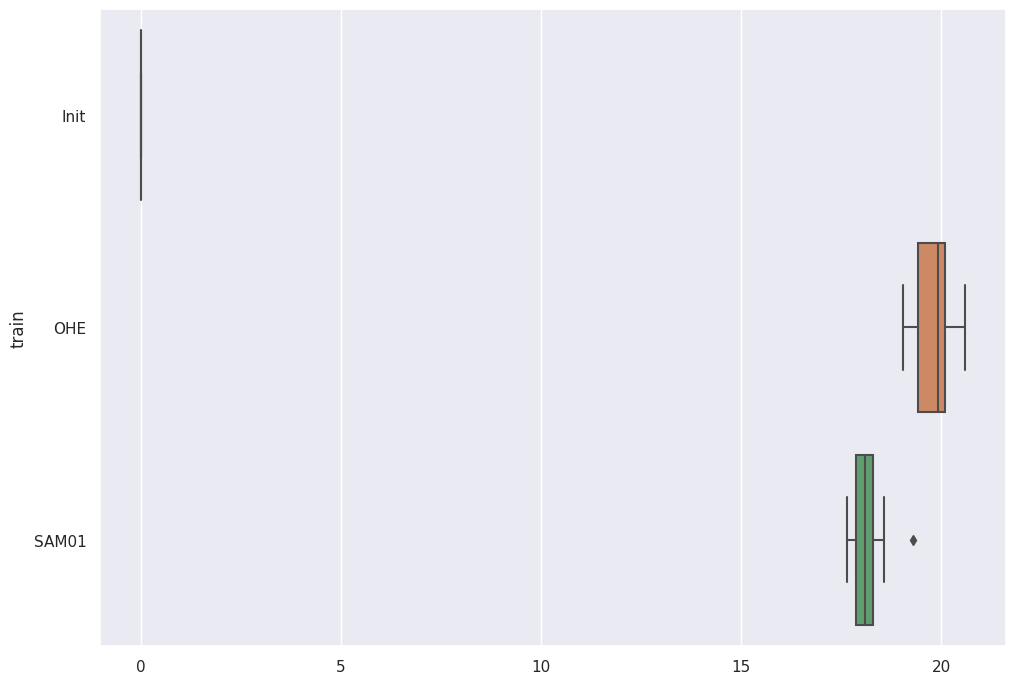}
         \caption{Correlation Matrix Difference with the initial train}
         \quad
         \includegraphics[width=\textwidth]{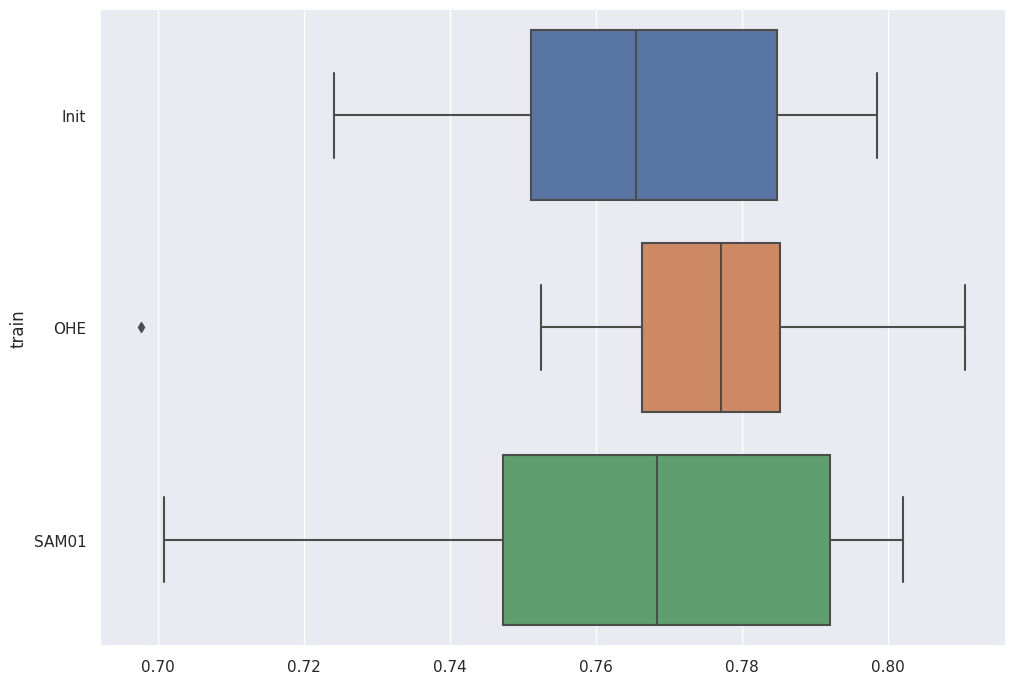}
         \caption{Area Under the Curve (AUC)}
         \quad
         \includegraphics[width=\textwidth]{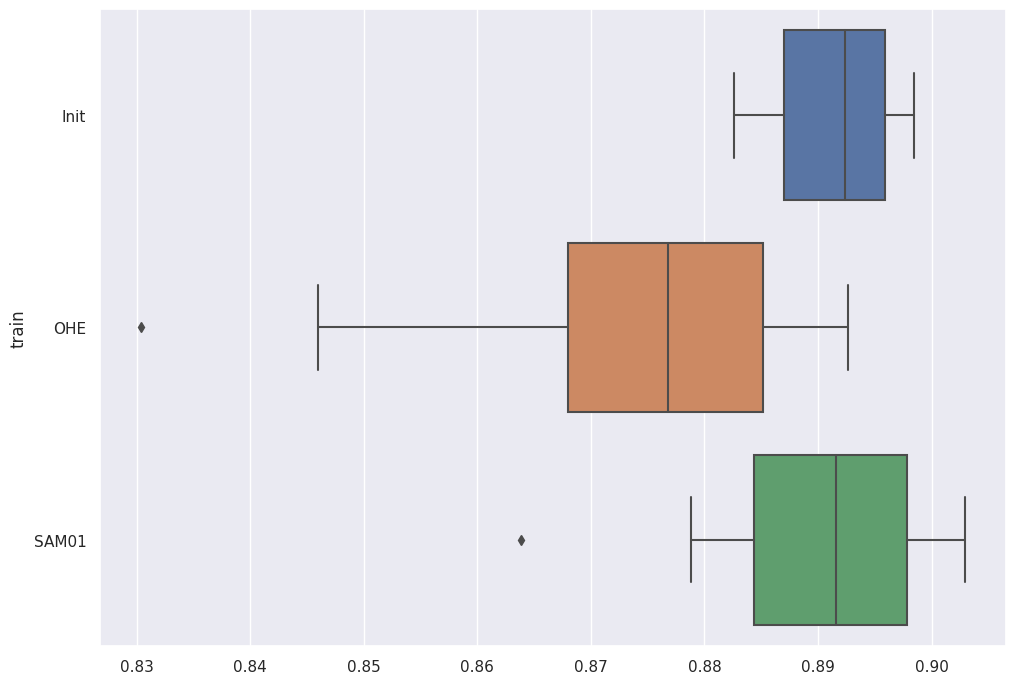}
         \caption{F1-Score}
         \quad
         \includegraphics[width=\textwidth]{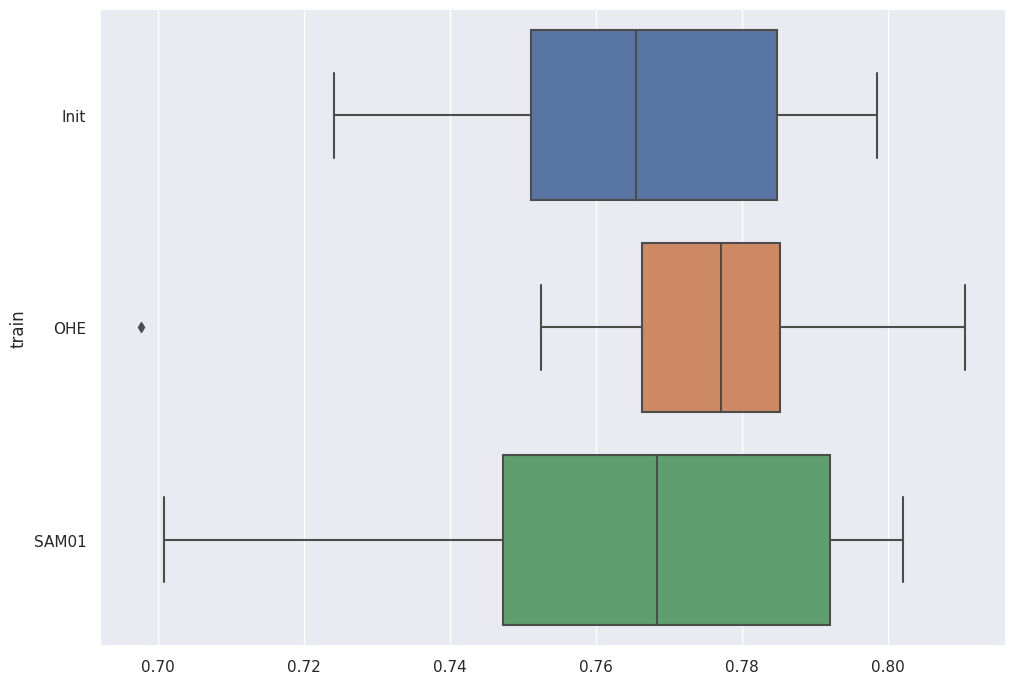}
         \caption{Balanced Accuracy}
         \label{Prediction_BreastCancer_Bal2}
     \end{subfigure}
     \caption{Prediction from reconstructed features. Comparison between a learning with 500 epochs (left) and a learning with 1000 epochs (right). Comparison between the initial train (blue), standard MSE (orange) and balanced MSE (green)}
     \label{Prediction_BreastCancer2}
\end{figure}

\newpage
\subsubsection{Complementary Results for Regression}
\label{Prediction_Regression_Details}

\paragraph{freMTPL}Below different metrics for $Y$ prediction.
\begin{figure}[H]
     \centering
     \begin{subfigure}[b]{0.49\textwidth}
         \centering
         \includegraphics[width=\textwidth]{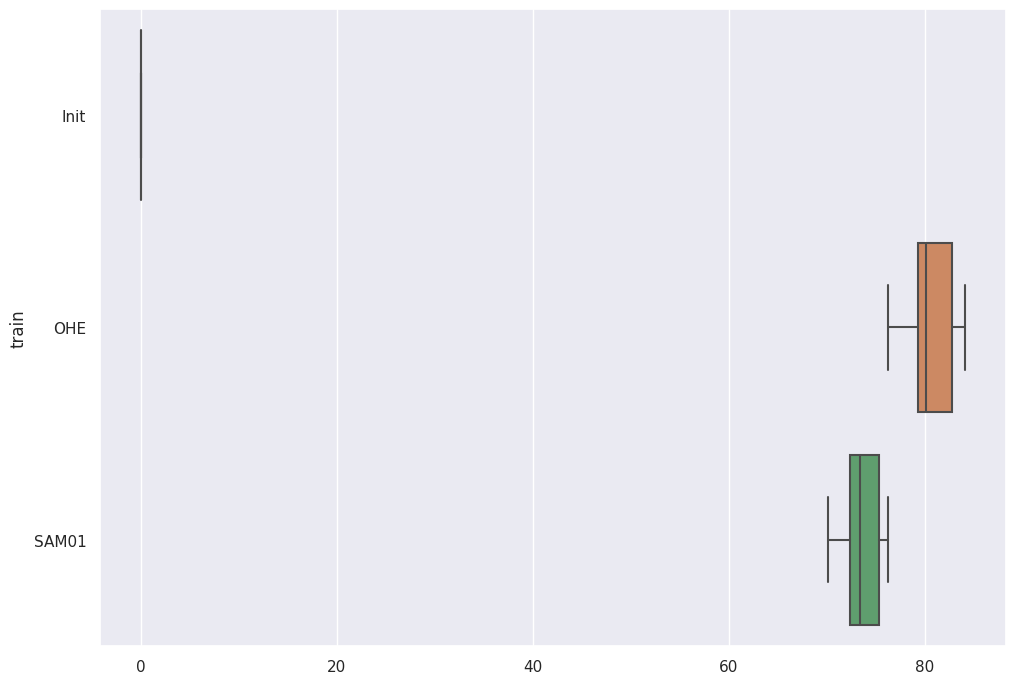}
         \caption{Correlation Matrix Difference with the initial train}
         \quad
         \includegraphics[width=\textwidth]{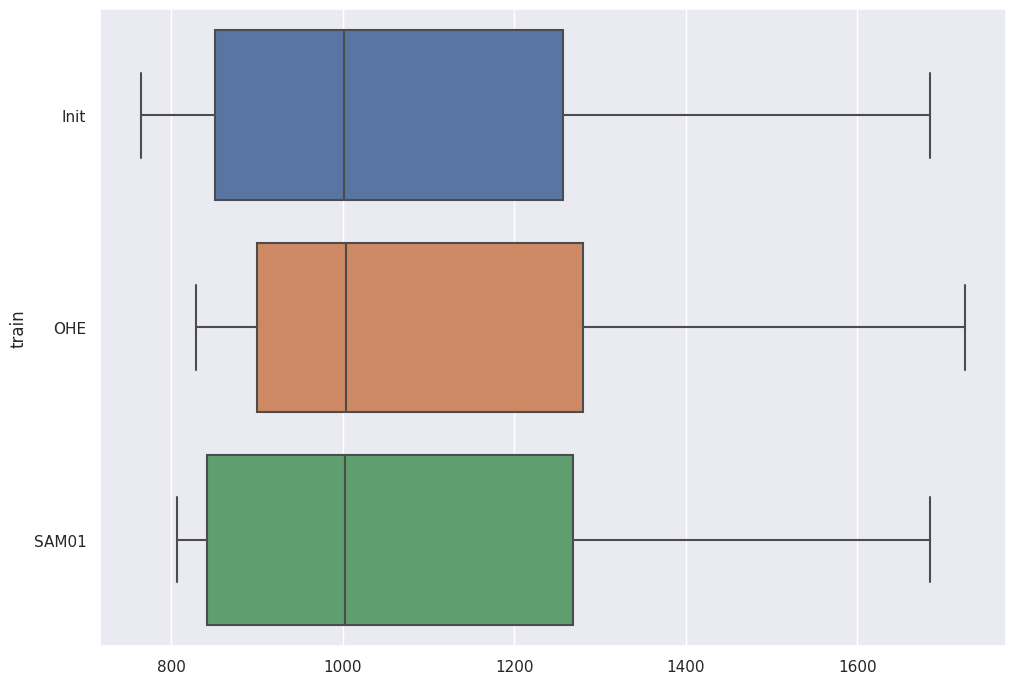}
         \caption{MAE}
         \quad
         \includegraphics[width=\textwidth]{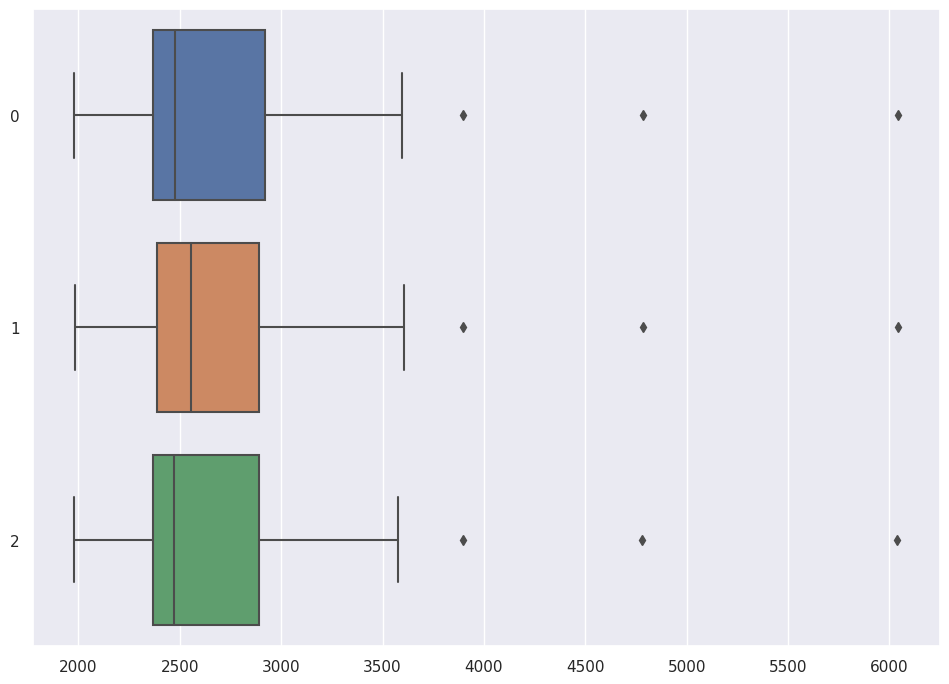}
         \caption{RMSE}
     \end{subfigure}
     \caption{Prediction from reconstructed features. Comparison between the initial train (blue), standard MSE (orange) and balanced MSE (green)}
\end{figure}

\newpage
\paragraph{Pricing Game}Below different metrics for $Y$ prediction.
\begin{figure}[H]
     \centering
     \begin{subfigure}[b]{0.49\textwidth}
         \centering
         \includegraphics[width=\textwidth]{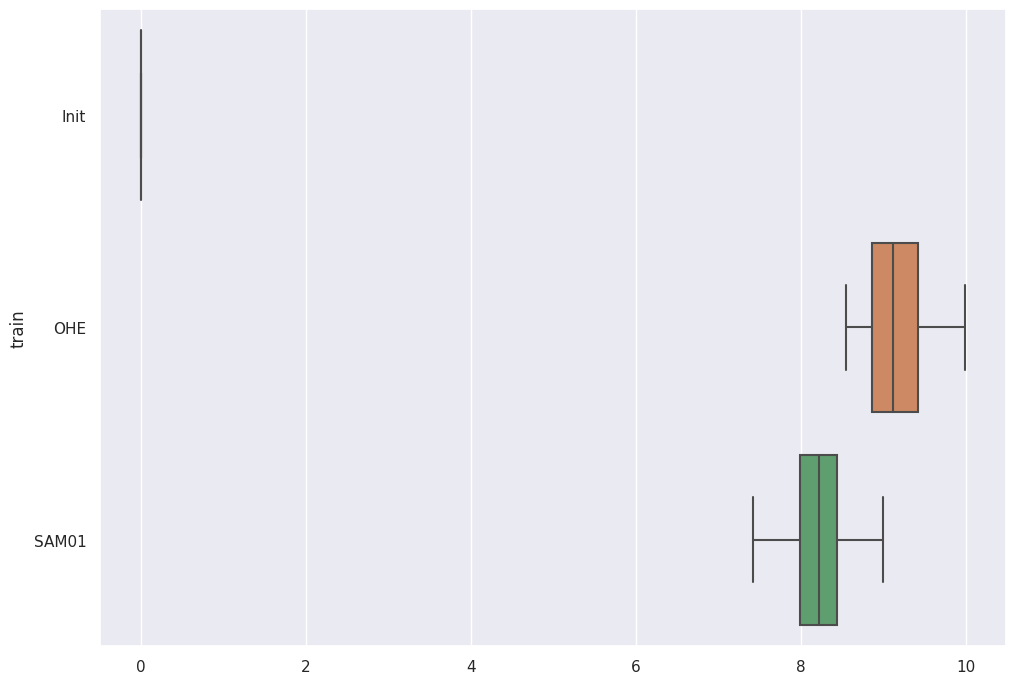}
         \caption{Correlation Matrix Difference with the initial train}
         \quad
         \includegraphics[width=\textwidth]{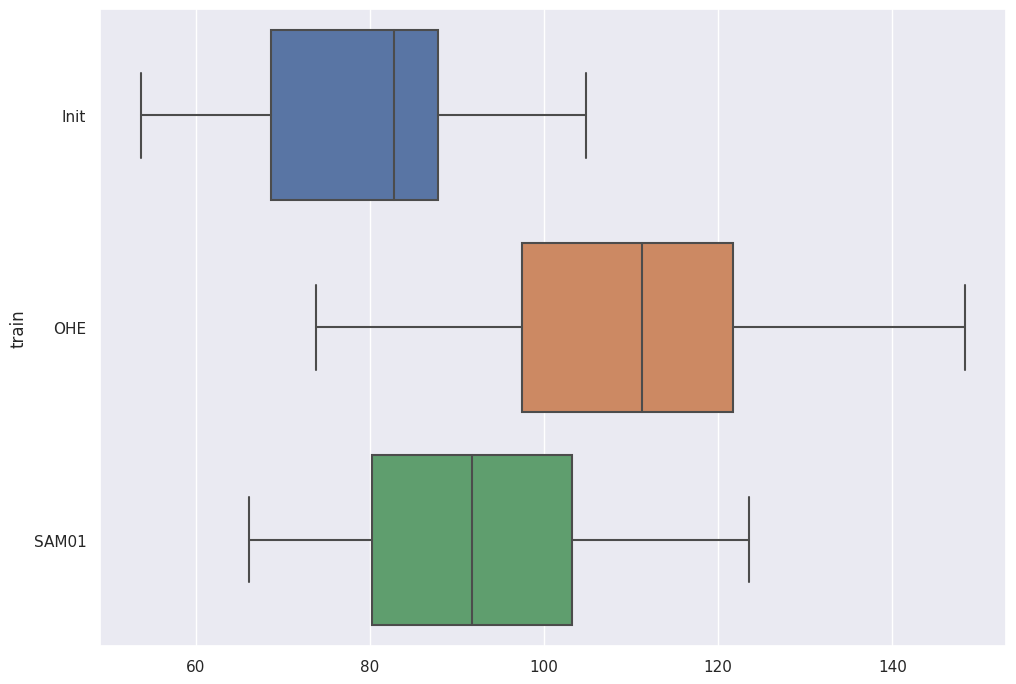}
         \caption{MAE}
         \quad
         \includegraphics[width=\textwidth]{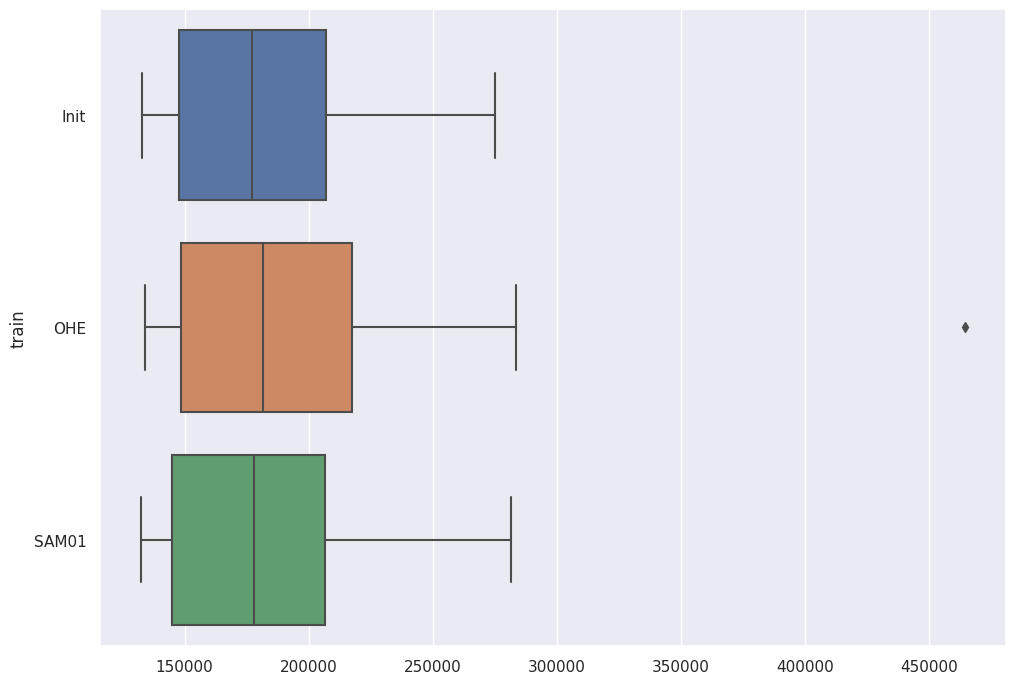}
         \caption{MSE}
     \end{subfigure}
     \caption{Prediction from reconstructed features. Comparison between the initial train (blue), standard MSE (orange) and balanced MSE (green)}
\end{figure}

\newpage
\paragraph{Telematics}Below different metrics for $Y$ prediction.
\begin{figure}[H]
     \centering
     \begin{subfigure}[b]{0.49\textwidth}
         \centering
         \includegraphics[width=\textwidth]{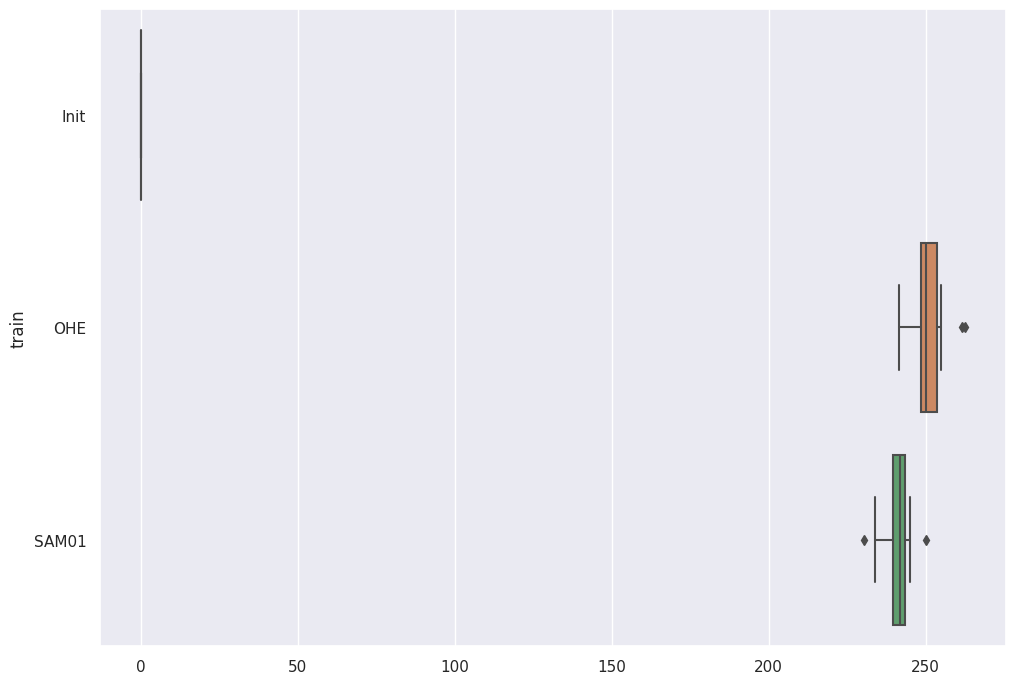}
         \caption{Correlation Matrix Difference with the initial train}
         \quad
         \includegraphics[width=\textwidth]{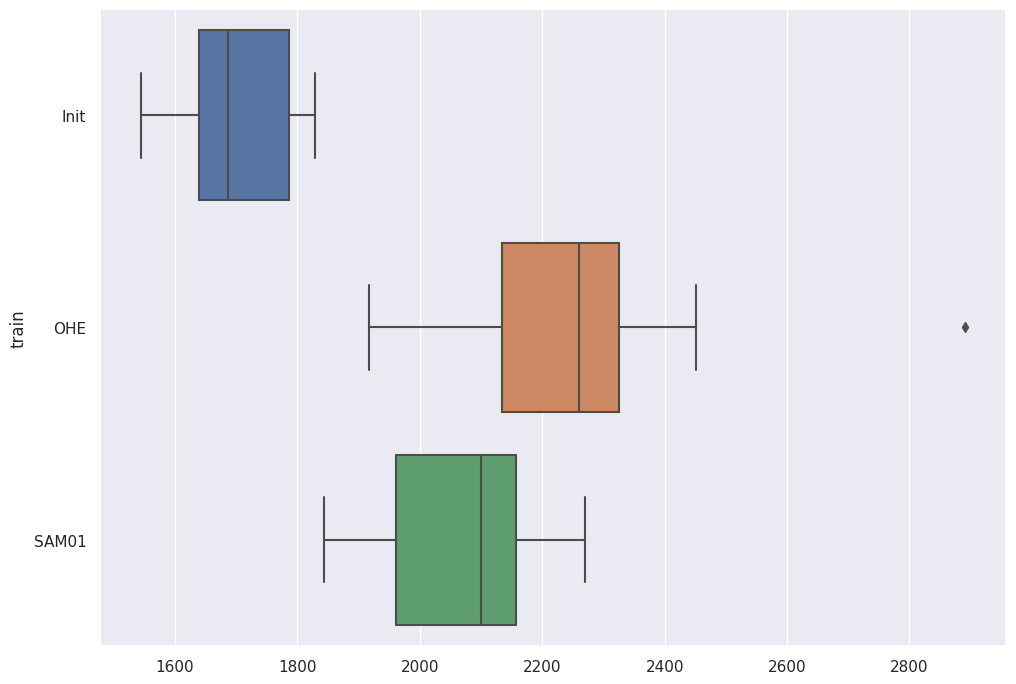}
         \caption{MAE}
         \quad
         \includegraphics[width=\textwidth]{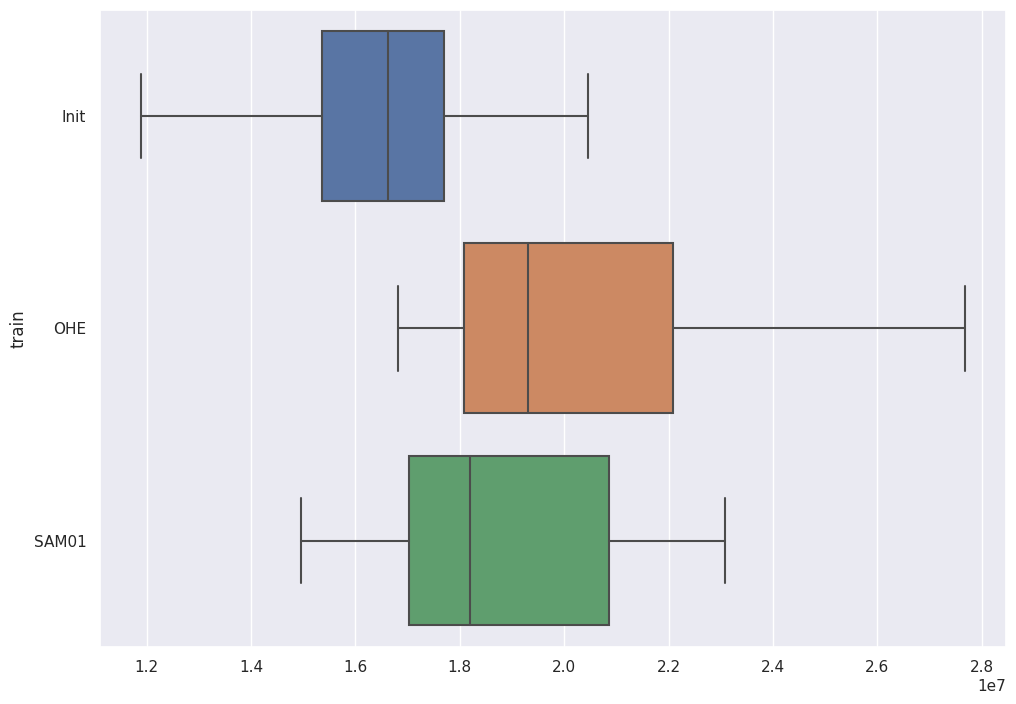}
         \caption{MSE}
     \end{subfigure}
     \caption{Prediction from reconstructed features. Comparison between the initial train (blue), standard MSE (orange) and balanced MSE (green)}
\end{figure}

\newpage
\paragraph{Student}Below different metrics for $Y$ prediction.
\begin{figure}[H]
     \centering
     \begin{subfigure}[b]{0.49\textwidth}
         \centering
         \includegraphics[width=\textwidth]{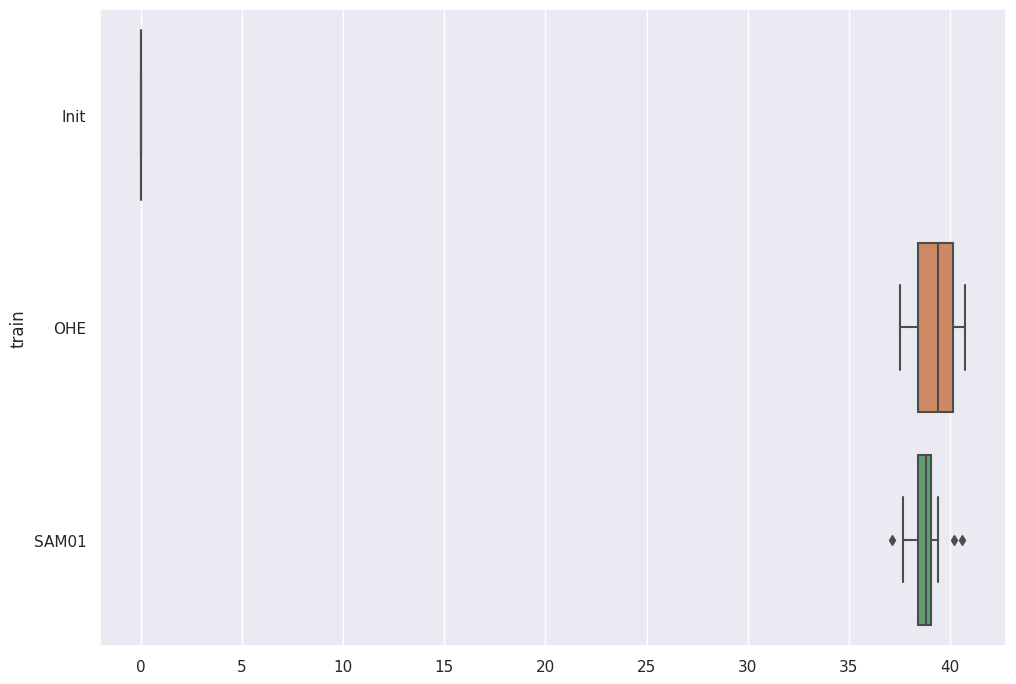}
         \caption{Correlation Matrix Difference with the initial train}
         \quad
         \includegraphics[width=\textwidth]{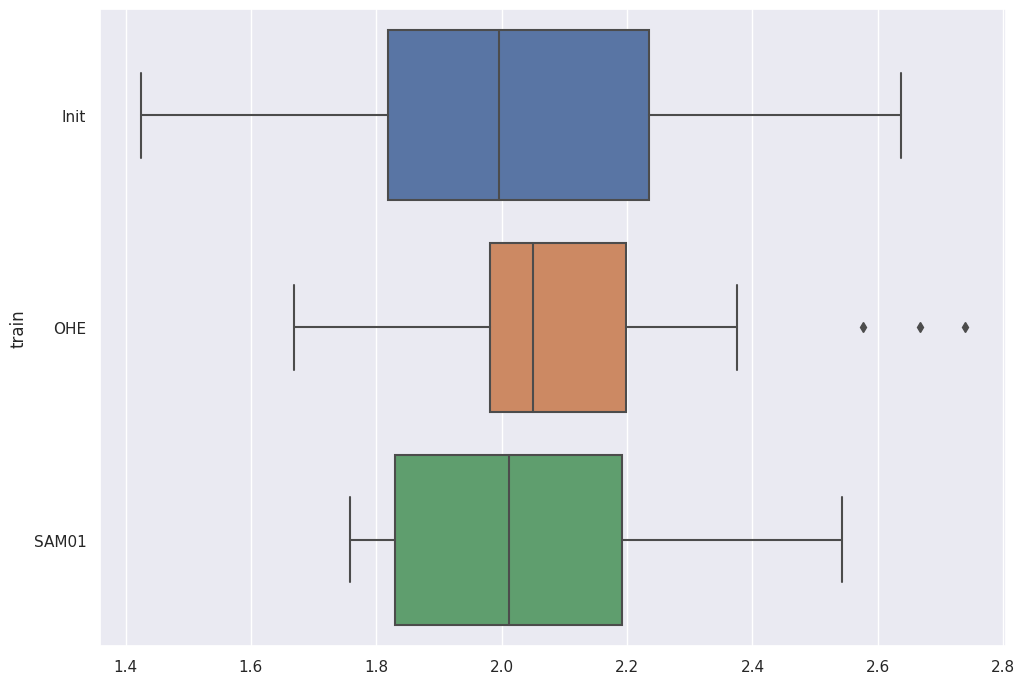}
         \caption{MAE}
         \quad
         \includegraphics[width=\textwidth]{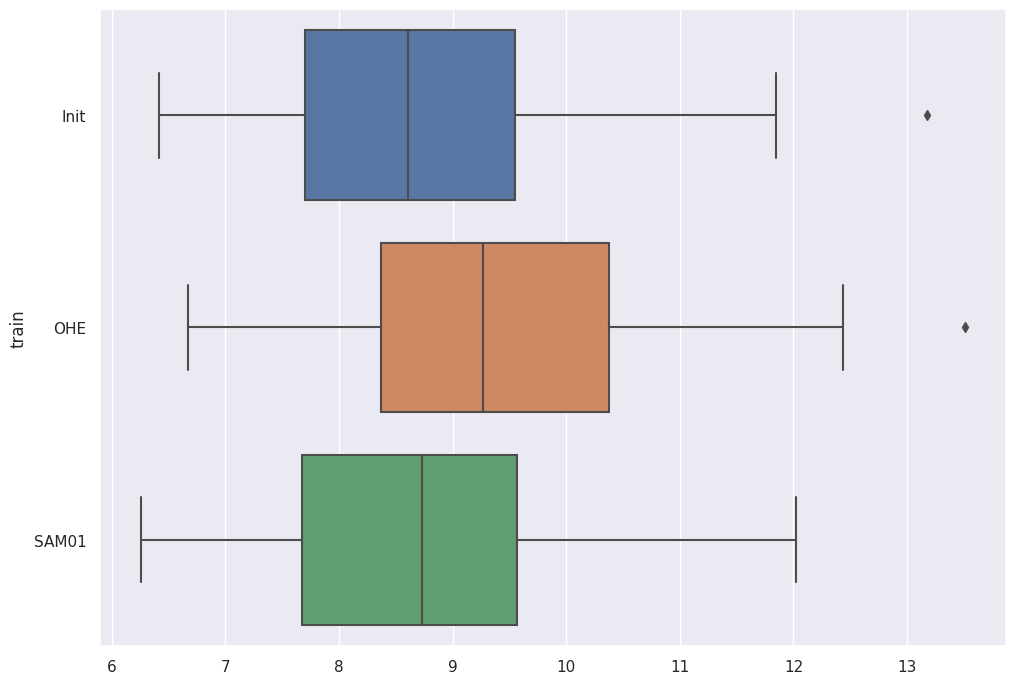}
         \caption{MSE}
     \end{subfigure}
     \caption{Prediction from reconstructed features. Comparison between the initial train (blue), standard MSE (orange) and balanced MSE (green)}
\end{figure}

\end{document}